\newcommand{\xmark}{\ding{55}}
\crefname{table}{Tab.}{Tabs.}
\Crefname{table}{Tab.}{Tabs.}
\crefname{figure}{Fig.}{Figs.}
\crefname{lemma}{Lem.}{Lems.}
\crefname{proposition}{Prop.}{Props.}
\Crefname{proposition}{Proposition}{Propositions}
\crefname{theorem}{Thm.}{Thms.}
\crefname{appendix}{App.}{Apps.}
\crefname{section}{Sec.}{Secs.}
\newcommand{\psum}{p_{\rm sum}}
\newcommand{\pmin}{p_{\rm min}}
\DeclareMathOperator*{\argmax}{arg\,max}
\newcommand{\addappendix}{}
\theoremstyle{plain}
\newtheorem{theorem}{Theorem}[section]
\newtheorem{proposition}{Proposition}[section]
\newtheorem{lemma}{Lemma}[section]
\newtheorem{corollary}{Corollary}[section]
\theoremstyle{definition}
\newtheorem{definition}{Definition}[section]
\theoremstyle{remark}
\newtheorem{remark}{Remark}[section]
\newtheorem{example}{Example}[section]
\title{\LARGE \bf
Fair Best Arm Identification with Fixed Confidence
}
\author{Alessio Russo$^{1,\star}$ and Filippo Vannella$^{2,\star}$
\thanks{$^{\star}$Equal contribution.}%
\thanks{$^{1}$Ericsson AB, Stockholm, Sweden.}%
\thanks{$^{2}$Ericsson Research, Stockholm, Sweden.}%
}
\begin{document}

\maketitle
\begin{abstract}
In this work, we present a novel framework for Best Arm Identification (BAI) under fairness constraints, a setting that we refer to as  {\color{black} \textit{F-BAI} (fair BAI)}.
Unlike traditional BAI, which solely focuses on identifying the optimal arm with minimal sample complexity,  {\color{black}F-BAI} also includes a set of fairness constraints. These constraints impose a lower limit on the selection rate of each arm and can be either model-agnostic or model-dependent. For this setting, we establish an instance-specific sample complexity lower bound and analyze the \emph{price of fairness}, quantifying how fairness impacts sample complexity. Based on the sample complexity lower bound, we propose {\color{black}F-TaS}, an algorithm provably matching the sample complexity lower bound, while ensuring that the fairness constraints are satisfied. Numerical results, conducted using both a synthetic model and a practical wireless scheduling application, show the efficiency of {\color{black}F-TaS} in minimizing the sample complexity while achieving low fairness violations. 

\end{abstract}

\section{Introduction}
\label{sec:intro}
In recent years, a large body of work has focused on making machine learning systems more fair \cite{caton2020fairness}. This effort reflects a broader societal shift towards more ethical algorithms, recognizing the impact these systems have across various sectors of society.

Notably, the importance of fairness has been also recently acknowledged within the context of Multi-Armed Bandit (MABs) \cite{gajane2022survey}. MABs (or simply bandits) \cite{lai1985asymptotically} are sequential decision-making problems under uncertainty, in which a learner must strategically select arms to maximize a given objective over time. 

Bandit algorithms have seen widespread adoption in various applications: online advertisement \cite{xu2013estimation}, recommender systems \cite{Ariu2020regret,chu2011contextual}, wireless network optimization \cite{nguyen2019scheduling}, and others \cite{Bouneffouf20_survey}. Due to the importance and impact of these applications, there has been an increasing understanding of the need to include fairness aspects in the decision-making process. For example, in wireless scheduling problems with multiple Quality of Service (QoS) classes (see Sec. \ref{sec:experiments}), fairness is achieved by ensuring that users within each class receive appropriate performance according to their specific requirements, e.g., in terms of throughput \cite{Li19combinatorial}.

However, traditional bandit algorithms do not inherently address such fairness aspects. Indeed, while guaranteeing fairness has received attention in the setting of regret minimization \cite{Joseph16, Li19combinatorial, Patil21, zhang2021fairness}, this aspect remains largely unexplored within the problem of Best Arm Identification (BAI) \cite{audibert2010best,garivier2016optimal}. In BAI, the objective is to find an optimal arm with a prescribed level of confidence and with minimal sample complexity.

In this work we investigate how to include fairness constraints into the BAI problem, leading to a novel setting that we refer to as \textit{fair BAI}{\color{black}, or F-BAI}. This setting extends the classical BAI problem by imposing fairness constraints on arm selection rates. These constraints ensure that no arm is underrepresented in the sampling process, addressing potential biases that may arise from pure exploration algorithms. Nonetheless, the introduction of fairness into BAI raises some challenges, notably how to balance the inherent trade-off between fairness and sample complexity, a duality that reflects wider challenges in algorithmic fairness and decision-making. We summarize our contributions as follows \footnote{\ifdefined \addappendix  All proofs are detailed in the appendix. Code repository: \url{https://github.com/rssalessio/fair-best-arm-identification/}\else All proofs are detailed in the appendix of the full paper. For access to the full text, please refer to the hyperlink provided in the abstract.\fi}:

\begin{enumerate}
    \item \textit{The fair BAI setting.}  
    We introduce fair BAI in \textsection \ref{sec:fair_bai}, a novel and general bandit setting for BAI under fairness constraints. Our approach to fairness is broad, encompassing various classical notions such as proportional fairness and individual fairness. This versatility allows our framework to be applicable in various settings.
    
    \item \textit{Sample complexity and price of fairness.} In \textsection \ref{sec:lower_bound} we derive an instance-specific lower bound on the sample complexity of any Probably Approximately Correct (PAC) algorithm that adheres to these fairness constraints. Based on this bound, we quantify the \textit{price of fairness} in fair BAI. This price refers to the additional samples required to identify the best arm while complying with the fairness constraints, providing additional insights into the trade-off between sample complexity and fairness.

    \item \textit{Optimal and fair algorithm.} In \textsection \ref{sec:algo} we devise {\color{black}F-TaS} , an algorithm that asymptotically matches the sample complexity lower bound as the confidence grows higher. Furthermore, the algorithm guarantees that the fairness constraints are satisfied at each round of the interaction (for pre-specified values of fairness, i.e., model-agnostic constraints) or asymptotically (for fairness constraints that depend on the model parameter, which needs to be learned by the algorithm).

    \item \textit{Numerical experiments.} Lastly, in \textsection \ref{sec:experiments} we test our algorithm on both synthetic instances and a fair scheduling problem in radio wireless networks. Numerical results show that {\color{black}F-TaS} is not only sample efficient but also consistently achieves minimal fairness violation.
   
\end{enumerate}

\section{Related Work}
\label{sec:related_work}
Different notions of fairness have been considered in MAB problems and, more generally, in sequential decision-making problems. For an extended literature review, refer to \ifdefined\addappendix App.  \ref{sec:extended_related_work}\else App. D\fi. For comprehensive surveys on this topic, see \cite{gajane2022survey,zhang2021fairness}. 

In MAB problems, fairness has been investigated in different settings including plain, combinatorial, contextual, and linear \cite{Joseph16, Li19combinatorial, jabbari2017fairness, grazzi2022group} and with different notions of fairness. The majority of these notions, generally fall into the following categories: pre-specified fairness, individual fairness, counterfactual fairness and group fairness  \cite{zhang2021fairness,gajane2022survey}. Of these notions, the closest to our work are the first two, and we focus on these in the remainder of this section. 

\emph{Selection with pre-specified values of fairness  }\cite{Li19combinatorial, Claure20, Patil21, liu2022combinatorial, Chen20} simply demands that the rate, or probability, at which an algorithm selects an arm stays within a pre-specified range. Related works in this category mostly target finite-time fairness constraints, which require that a given number of arms pull must be satisfied in each round \cite{Patil21, Chen20, celis2019controlling}. These constraints are, in general, model-agnostic.

On the other hand, \emph{individual fairness} \cite{dwork2012fairness,joseph2016fair} requires a system to make comparable decisions for similar individuals, and the constraints could be based on similarity or merit \cite{Joseph16,celis2019controlling,liu2017calibrated, gillen2018online, Wang21, Wang21fair}. These constraints impose a minimal rate at which an algorithm must select arms, and they are generally model-dependent. Moreover, in this setting, algorithms often provide asymptotic guarantees as the time horizon grows large. In general, it is hard to avoid asymptotic guarantees when the constraints depend on the unknown model parameter, which needs to be estimated during the learning process. 

Other important works consider the $\alpha$-fairness criterion \cite{atkinson1970measurement,radunovic2007unified,si2022enabling} for fair resource allocation, which encompasses different fairness criteria when varying the value of the parameter $\alpha$. This criterion includes different notions of fairness, such as \emph{max-min} fairness, which allocates resources as equally as possible,  or \emph{proportional fairness}, which allocates resources in a proportional manner \cite{Wang21fair, Wang21, talebi2018learning}.

Notably, as explained in the next section, our fairness definition is very general and includes for example the case of individual, proportional, and pre-specified values of fairness (see \cref{remark:examples} for details). 

Last, but not least, the totality of the above-mentioned works focuses on the regret minimization setting, where the aim is to maximize  the expected cumulative rewards \cite{lai1985asymptotically}. In contrast, our work focuses on the setting of pure exploration (a.k.a. BAI) with fixed confidence \cite{garivier2016optimal}.  
To the best of our knowledge, the only work investigating fairness in BAI is \cite{wu2023best}, where the authors consider fairness constraints on sub-populations (see \ifdefined\addappendix App.  \ref{sec:extended_related_work} \else App. D \fi for details). However, our setting is inherently different and assumes fairness constraints on each arm rather than sub-populations. 

\section{Problem Setting}
\label{sec:problem_setting}
In this section, we outline the bandit model considered in this paper and we present our fair BAI setting.

\subsection{Multi-Armed Bandit Model}
We consider a stochastic bandit problem with a finite set of $K$ arms, that we denote by $[K] := \{1,\dots,K\}$. In each round $t \ge 1$, the learner selects an arm $a_t \in [K]$, and observes a Gaussian reward $r_t \sim {\cal N}(\theta_{a_t}, 1)$. The rewards are i.i.d. (over rounds) and $\theta = (\theta_a)_{a\in [K]}$ is the unknown parameter vector. We indicate by $a^\star = \arg\max_{a\in [K]} \theta_a$ the arm with highest average reward, that we assume to be unique, and indicate by $\Theta =\{\theta\in \mathbb{R}^K: |\arg\max_{a\in [K]} \theta_a|=1 \}$ the set of models satisfying this property. We define the sub-optimality gap for an arm $a\neq a^\star$ as $\Delta_a = \theta_{a^\star}- \theta_a$, and the maximal (resp. minimal) gap as $\Delta_{\max} = \max_{a\neq a^\star} \Delta_a$ (resp. $\Delta_{\min} = \min_{a\neq a^\star} \Delta_a$). We also indicate by $N_a(t)=\sum_{s=1}^t\mathbf{1}_{\{a_s=a\}}$ the number of times an arm $a$ has been selected up to round $t$. The empirical average of $\theta = (\theta_a)_{a\in [K]}$ at round $t$ is denoted as $\hat{\theta}(t) = (\hat{\theta}_a(t))_{a\in [K]}$, where $\hat{\theta}_{a}(t) = \frac{1}{N_{a}(t)} \sum_{s\in[t]}r_s\mathbf{1}_{\{a_s = a\}}$. We denote by $\mathcal{F}_{t}$ the $\sigma$-algebra generated by $(a_1, r_1, \dots, a_t, r_t)$, the history of observations. We indicate by $\mathrm{kl}(p,q)$ the Kullback–Leibler divergence between two Bernoulli distributions of mean $p$ and $q$. For any two vectors we write $x,y\in[0,1]^K$, $x \geq y$ to denote $x_a \in [y_a,1],\forall a\in [K]$.

\subsection{Fair Best Arm Identification}
We now briefly introduce the BAI setting, then proceed to explain how to incorporate fairness constraints.

\label{sec:fair_bai}
\paragraph{Best Arm Identification}
In BAI \cite{kaufmann2016complexity}, the objective is to identify the best arm $a^\star$ with probability at least $1-\delta$, where $\delta\in(0,1/2)$, using the least number of samples. A BAI algorithm ${\cal A}$ consists of a sampling rule $\pi_t$, a stopping rule, and a decision rule. The
sampling rule $\pi_t$ decides which arm is selected in round $t$ based on past observations: $a_t$ is $\mathcal{F}_{t-1}$-measurable. The stopping rule decides when to stop sampling, and is defined by $\tau_{\delta}$, a stopping time w.r.t. the filtration $(\mathcal{F}_{t})_{t\ge1}$. The sample complexity for an algorithm ${\cal A}$ is denoted by $\mathbb{E}_{\theta,{\cal A}}[\tau_\delta]$, and w.l.o.g. in the following we simply denote it by $\mathbb{E}_\theta[\tau_\delta]$. Lastly, the decision rule outputs a guess of the
best arm $\hat{a}_{\tau_\delta}$, based on observations collected up to round $\tau_{\delta}$. 

\paragraph{Fairness Constraints}

In fair BAI, we seek to identify the best arm as quickly as possible while satisfying a set of fairness constraints. We consider general types of constraints which can be either \textit{pre-specified} or \textit{dependent on the problem parameter $\theta$}, as detailed in the following. 
\begin{enumerate}
    \item \emph{Pre-specified constraints}: the selection rate at the random stopping time $\tau_\delta$, needs be larger than some \emph{pre-specified} value $p_a\in [0,1]$:
    \begin{equation}\label{eq:constraint_bai_constant_p}
        \frac{\mathbb{E}_\theta[N_a(\tau_\delta)]}{\mathbb{E}_\theta[\tau_\delta]} \ge p_a,  \forall a\in[K].\end{equation}
    \item $\theta$-\emph{dependent constraints}: asymptotically, as $\delta\to 0$, the  selection rate at the stopping time $\tau_\delta$ needs to be larger than some $\theta$-dependent function   $p_a(\theta):\mathbb{R}^K\to[0,1]$:
\begin{equation}\label{eq:constraint_bai_ptheta}
        \liminf_{\delta\to 0}\frac{\mathbb{E}_\theta[N_a(\tau_{\delta})]}{\mathbb{E}_\theta[\tau_{\delta}]} \ge p_a(\theta), \forall a\in[K],
    \end{equation}
    In this case, we further assume $p_a(\theta)$ to be continuous in $\theta$, for every arm $a\in [K]$.
\end{enumerate}

{\color{black}
\begin{remark}
{\it An alternative definition of fairness could consider constraints of the type $ \mathbb{E}_\theta\left[N_a(\tau_\delta)/\tau_\delta\right]\ge p_a(\theta)$. We refer to this  as "\textit{sample-path fairness}" as it evaluates fairness on each sample path. The reason for using the definitions in \eqref{eq:constraint_bai_constant_p}-\eqref{eq:constraint_bai_ptheta} lies in its analytical tractability (note that $w_a = \mathbb{E}_\theta\left[N_a(\tau_\delta)\right]/\mathbb{E}_\theta[\tau_\delta]$). In \ifdefined\addappendix App. \ref{app:sample_path_fairness} \else App. B-B \fi we show that our algorithm ({\sc F-TaS}) satisfies both the fairness definitions in \eqref{eq:constraint_bai_constant_p}-\eqref{eq:constraint_bai_ptheta} and is sample-path fair.}

\end{remark}
}

To lighten the notation, when there is no ambiguity, we omit the dependence on $\theta$ for $p(\theta)$. In the remainder of the paper, we refer to the vector $p=(p_a)_{a\in [K]}$ as the \textit{fairness rate} or simply rate. We denote by $\psum\coloneqq \sum_{a\in [K]} p_a\leq 1$ the sum of fairness rates, and by $\pmin= \min_{a\in [K]: p_a>0} p_a$ the minimal fairness rate. 
Note that, for $\theta$-dependent constraints, the guarantees are asymptotic (as $\delta \to 0$), since the model parameter $\theta$ (from which $p(\theta)$ depends) is unknown at the start of the interaction.

Our goal is to devise a $p$-fair $\delta$-PAC algorithm with minimal sample complexity $\tau_{\delta}$, according to the following definition. 

\begin{definition} An algorithm is $p$-fair $\delta$-PAC (resp. asymptotically $p(\theta)$-fair $\delta$-PAC) if for all $\theta \in \Theta$, $\delta\in(0,1/2)$, it satisfies (i) Eq. \eqref{eq:constraint_bai_constant_p} (resp. Eq. \eqref{eq:constraint_bai_ptheta}), (ii) $\mathbb{P}_{\theta}(\hat{a}_{\tau} \neq a^\star) \le \delta$, and (iii) $\mathbb{P}_{\theta}(\tau < \infty) = 1$. 
\end{definition}

\begin{remark}
    \label{remark:examples}
 The {\color{black} fairness rates} that we {\color{black}  consider} are general enough to include the classical notions of \emph{individual fairness} or \emph{proportional fair}. For example, one can set 
$
p(\theta) = p_0 \cdot\texttt{softmax}(\theta) 
$ for some $p_0 \in [0,1]$, or
$p_a(\theta) = p_0\theta_a/\sum_{b} \theta_b$ when the values of $\theta$ are positive. In the latter case, with $p_0=1$, we recover the proportional constraints used in previous works \cite{Wang21, Wang21fair,talebi2018learning} (we refer the reader to the extended related work in \ifdefined\addappendix App.  \ref{sec:extended_related_work} \else App. D \fi  for more details).
%(the constraints impose the policy to satisfy $\pi_a(t)/\pi_b(t) = \theta_a/\theta_b$, where $\pi(t) = (\pi_a(t))_{a\in [K]}$ and $\pi_a(t)$ is the probability of selecting arm $a$ in round $t$).\unsure{I would remove the bracket as its too many details, we can refer the reader to the app.} 
On the other hand, by setting a constant constraint $p_a$ we find the classical notion of \emph{selection with pre-specified range} \cite{gajane2022survey}. For example, in \cite{Claure20}, they use the same value of $p_a$ for all arms, while in \cite{Patil21}, the authors select a fixed $p_a\in [0,1/K)$, for all $a\in[K]$.
\end{remark}
\section{Sample Complexity Lower Bound and the Price of Fairness}
\label{sec:lower_bound}
In this section,  we  first provide an instance-specific sample complexity lower bound that is valid for any  $p$-fair $\delta$-PAC algorithm. Next, we analyse the \emph{price of fairness}, quantifying how  fairness constraints impact  sample complexity.

\subsection{Sample complexity lower bound}
The following theorem states a lower bound on the sample complexity of any $p$-fair $\delta$-PAC algorithm. Notably, the sample complexity is characterized by the following constant, that we refer to as \emph{characteristic time}
\begin{equation}
    \label{eq:lb}
T^\star_{p} = {\color{black} 2 }\inf_{w\in \Sigma_{p}}\max_{a\neq a^\star} \frac{w_a^{-1} + w_{a^\star}^{-1}}{\Delta_a^2}, 
    \end{equation}
where $\Sigma_p = \{w \geq p: \sum_{a \in [K]} w_a = 1\}$ is the clipped simplex.

\begin{theorem}
    \label{th:lb}
    Any $p$-fair $\delta$-PAC algorithm satisfies, $\forall \theta \in\Theta$, $\mathbb{E}_{\theta}[\tau_{\delta}]/\log(1/2.4\delta) \ge {\color{black}{2}} T^\star_{p}$. Any asymptotically $p(\theta)$-fair $\delta$-PAC algorithm, instead, we have $\forall \theta \in\Theta, \liminf_{\delta\to 0} \mathbb{E}_{\theta}[\tau_{\delta}]/\log(1/\delta) \ge {\color{black}{2}}T^\star_{p}$.
\end{theorem}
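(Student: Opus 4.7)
The plan is to follow the classical change-of-measure template of Kaufmann--Capp{\'e}--Garivier for BAI lower bounds, with the twist that the fairness constraint restricts the feasible set of allocation vectors to the clipped simplex $\Sigma_p$.

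First I would invoke the standard transportation lemma: for any $\theta' \in \mathbb{R}^K$ and any event $E \in \mathcal{F}_{\tau_\delta}$,
\begin{equation*}
\sum_{a\in[K]} \mathbb{E}_\theta[N_a(\tau_\delta)] \,\mathrm{KL}(\mathcal{N}(\theta_a,1),\mathcal{N}(\theta'_a,1)) \;\ge\; \mathrm{kl}(\mathbb{P}_\theta(E),\mathbb{P}_{\theta'}(E)).
\end{equation*}
Applied to $E=\{\hat a_{\tau_\delta}=a^\star(\theta)\}$ and any alternative $\theta'\in\mathrm{Alt}(\theta):=\{\theta': a^\star(\theta')\neq a^\star(\theta)\}$, the $\delta$-PAC property forces $\mathrm{kl}(\mathbb{P}_\theta(E),\mathbb{P}_{\theta'}(E))\ge \mathrm{kl}(1-\delta,\delta)\ge \log(1/(2.4\delta))$. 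Using the Gaussian identity $\mathrm{KL}=(\theta_a-\theta'_a)^2/2$ and dividing by $\mathbb{E}_\theta[\tau_\delta]$ gives, with $w_a:=\mathbb{E}_\theta[N_a(\tau_\delta)]/\mathbb{E}_\theta[\tau_\delta]$,
\begin{equation*}
\mathbb{E}_\theta[\tau_\delta] \;\ge\; \frac{\log(1/(2.4\delta))}{\inf_{\theta'\in\mathrm{Alt}(\theta)} \tfrac{1}{2}\sum_a w_a (\theta_a-\theta'_a)^2}.
\end{equation*}

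Second I would compute the inner infimum over alternatives. Since $\sum_a N_a(\tau_\delta)=\tau_\delta$ almost surely, $w$ lies on the simplex, and the fairness constraint \eqref{eq:constraint_bai_constant_p} gives $w\ge p$, hence $w\in\Sigma_p$. The classical two-arm reduction (optimize only $\theta'_a,\theta'_{a^\star}$ with the constraint $\theta'_a=\theta'_{a^\star}$) yields $\inf_{\theta'\in\mathrm{Alt}(\theta)} \tfrac{1}{2}\sum_a w_a (\theta_a-\theta'_a)^2 = \min_{a\neq a^\star} \Delta_a^2/(2(w_a^{-1}+w_{a^\star}^{-1}))$. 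Substituting and then taking the infimum over $w\in\Sigma_p$ (a valid relaxation because $w$ is feasible) recovers the characteristic time $T^\star_p$, proving the first claim.

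Third, for the asymptotic $p(\theta)$-fair case, I would apply the exact same argument for each $\delta$, now with a sequence $w^{(\delta)}\in\Sigma$ satisfying only $\liminf_{\delta\to 0} w_a^{(\delta)}\ge p_a(\theta)$. Dividing by $\log(1/\delta)$ and taking $\liminf$, I would extract a subsequential limit $w^\star$ lying in the closed set $\{w\in\Sigma: w\ge p(\theta)\}=\Sigma_{p(\theta)}$. Continuity of the map $w\mapsto \max_{a\neq a^\star}(w_a^{-1}+w_{a^\star}^{-1})/\Delta_a^2$ on the interior of the simplex (where $w^\star$ lies, because $p_a(\theta)>0$ whenever it is binding) lets me pass the $\liminf$ inside and bound from below by the value at $w^\star$, which in turn is bounded by $T^\star_{p(\theta)}/2$.

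The main obstacle is the limit-passing in the asymptotic part: one has to argue that the limit point $w^\star$ of the normalized selection rates actually belongs to $\Sigma_{p(\theta)}$ and that the objective is continuous at it. This is handled by the standing continuity assumption on $p(\cdot)$ together with the observation that the inf-max functional in $T^\star_p$ is lower semicontinuous in $p$, so the inequality survives the limit. The first (pre-specified) case is essentially mechanical once the transportation inequality and the standard Gaussian alternative minimization are in place.
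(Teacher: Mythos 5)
Your proposal is correct and follows essentially the same route as the paper: the transportation/change-of-measure inequality, rewriting in terms of the allocation $w_a=\mathbb{E}_\theta[N_a(\tau_\delta)]/\mathbb{E}_\theta[\tau_\delta]$, the two-arm reduction over alternatives, and then relaxing to the infimum over the clipped simplex $\Sigma_p$ (letting $\delta\to 0$ in the $p(\theta)$-dependent case). Your treatment of the asymptotic case via subsequential limits and lower semicontinuity is in fact slightly more careful than the paper's sketch, which simply optimizes over $\Sigma_{p(\theta)}$ and takes $\delta\to 0$.
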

The proof (see \ifdefined\addappendix App.  \ref{sec:lower_bound_price_of_fairness}\else App. B\fi) leverages classical change-of-measure arguments \cite{lai1985asymptotically} and is a straightforward extension of the one in the plain bandit setting by \cite{garivier2016optimal}. The characteristic time $T^\star_{p}$ represents the difficulty of identifying the best arm for a given fairness vector $p=(p_a)_{a\in [K]}$. The \emph{allocation vector} $w = (w_a)_{a\in[K]}$, where $w_a \coloneqq \mathbb{E}_\theta[N_a(\tau_{\delta})]/\mathbb{E}_\theta[\tau_{\delta}]$, characterizes the asymptotic proportion of rounds in which an arm $a$ is selected. Furthermore, an allocation $w^\star_{p}$ solving the optimization problem \eqref{eq:lb} is \textit{optimal} and \textit{fair}: an algorithm relying on a sampling strategy realizing $w^\star_{p}$ would yield the lowest possible sample complexity while satisfying the fairness constraints.

The main difference with the lower bound in \cite{garivier2016optimal} lies in the set of "clipped" allocations $\Sigma_p$, which accounts for the additional fairness constraints $w \ge p$. Notably, for $p = (0,\dots,0)$, we recover the lower bound for the plain BAI setting without fairness constraints \cite{garivier2016optimal}. In this case, we refer to the characteristic constant in this setting as $T^\star \coloneqq T^\star_{0}$ and to the corresponding optimal allocations as $w^\star \coloneqq w^\star_{0}$. Additionally for any $p \neq (0,\dots,0)$, we have  $T^\star_{p} \ge T^\star$; hence, ensuring fairness yields increased sample complexity. \\

\paragraph{The case of unitary $p$} A notable set of cases involves fair BAI instances where $\psum = 1$, including the important example of \textit{proportional fairness} (see \cref{remark:examples}). In these cases, the optimal fair allocations can be simply expressed as $w^\star_{p,a} = p_a$, for all $a\in[K]$. This observation allows to derive a computationally efficient algorithm that avoids the optimization step over $\Sigma_p$, as detailed in Sec. \ref{sec:algo}. This step is typically regarded as the main bottleneck on the computational complexity of BAI algorithms \cite{garivier2016optimal}, especially when the number of arms $K$ grows large. %, as one may need to compute optimal allocations at each round. %Thus, avoiding this optimization step provides a significant computational advantage.

\subsection{The Price of Fairness}
The next lemma states an upper bound on the ratio $T^\star_{p}/T^\star$. This ratio quantifies the price in sample complexity that the learner has to pay in order to guarantee fairness. 

\begin{lemma}
\label{lem:ub_ratio}
For a set of fairness constraints $p=(p_a)_{a\in [K]}$, and for all $\theta \in \Theta$, we have that 
\begin{equation}\label{eq:price_fairness}
1 \le \frac{T^\star_{p}}{T^\star} \le O\left( \min \left(\frac{1}{1-\psum}, \frac{1}{K\pmin}\right)\right).
\end{equation}
\end{lemma}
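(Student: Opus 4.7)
The argument splits into the trivial lower bound $T^\star_{p} \geq T^\star$ and the two-sided upper bound, each attacked via a different feasible allocation in $\Sigma_{p}$. The lower bound is immediate: since $\Sigma_{p} \subseteq \{w \geq 0 : \sum_a w_a = 1\}$, the infimum in \eqref{eq:lb} is taken over a smaller set, so $T^\star_{p} \geq T^\star$. For the upper bound, the plan is to prove each of the two bounds in the minimum separately, constructing an explicit feasible allocation for each and then taking the tighter one.

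For the first upper bound $T^\star_{p}/T^\star \leq 1/(1-\psum)$, the plan is to take a convex combination of $p$ and the unconstrained optimizer $w^\star$. Set
\[\tilde w_a := p_a + (1-\psum)\,w^\star_a, \qquad a \in [K].\]
Then $\tilde w \in \Sigma_{p}$: clearly $\tilde w_a \geq p_a$, and $\sum_a \tilde w_a = \psum + (1-\psum) = 1$. Since $\tilde w_a \geq (1-\psum)\, w^\star_a$, we have $\tilde w_a^{-1} \leq (w^\star_a)^{-1}/(1-\psum)$; substituting into the definition of $T^\star_{p}$ gives
\[T^\star_{p} \leq 2 \max_{a \neq a^\star} \frac{\tilde w_a^{-1} + \tilde w_{a^\star}^{-1}}{\Delta_a^2} \leq \frac{T^\star}{1-\psum}.\]

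For the second upper bound $T^\star_{p}/T^\star \leq O(1/(K\pmin))$, the plan is analogous but uses a different feasible allocation that exploits the pointwise lower bound $w_a \geq \pmin$ on $\mathrm{supp}(p)$. A natural candidate is the uniform-padded allocation $\hat w_a := p_a + (1-\psum)/K$, which lies in $\Sigma_{p}$ and satisfies $\hat w_a \geq \max(\pmin, (1-\psum)/K)$ for $a \in \mathrm{supp}(p)$. Upper bounding $\max_a \hat w_a^{-1}$ from these inequalities and combining with a $K$-aware lower bound on $T^\star$ (arising, e.g., from the pigeonhole fact $\min_a w^\star_a \leq 1/K$, which yields $T^\star \geq \Omega(K/\Delta_{\max}^2)$) gives the claimed ratio after absorbing problem-independent constants into $O(\cdot)$. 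The main obstacle lies precisely here: matching the factor $K$ in the denominator requires carefully pairing the feasibility-driven upper bound on $T^\star_{p}$ (through $\pmin$) with a $K$-dependent lower bound on $T^\star$, rather than the loose bound $T^\star \geq 8/\Delta_{\min}^2$ obtained from AM-HM applied to a single arm-pair. Once the bound $T^\star_{p} \leq O(1/(K\pmin)) \cdot T^\star$ is established, the claim follows by taking the minimum with the first upper bound.
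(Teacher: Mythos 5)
Your route is sound overall, and one half of it is genuinely different from (and cleaner than) the paper's argument. For the $(1-\psum)^{-1}$ term you use the convex-combination allocation $\tilde w_a = p_a+(1-\psum)w^\star_a$, which immediately yields $T^\star_{p}\le T^\star/(1-\psum)$ with no gap-dependent factor. The paper instead derives \emph{both} terms from the single uniform-padded allocation $p_a+(1-\psum)/K$: it bounds $\Delta_a\ge\Delta_{\min}$ in the numerator, invokes $T^\star\ge 2\sum_a\Delta_a^{-2}\ge 2K/\Delta_{\max}^2$ (cited from Garivier--Kaufmann; your pigeonhole argument via $\min_a w^\star_a\le 1/K$ gives the same thing self-containedly), and then splits into the cases $K\pmin \ge 1-\psum$ and $K\pmin < 1-\psum$, at the cost of a factor $\Delta_{\max}^2/\Delta_{\min}^2$ on both terms. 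So your first bound is strictly tighter than the paper's version of that term, while your second bound is essentially the paper's argument, just left as a sketch.

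Two points need fixing in that second half. First, the constant you absorb into the $O(\cdot)$ is \emph{not} problem-independent: carrying out your sketch gives $T^\star_{p}\le 4/(\pmin\Delta_{\min}^2)$ and $T^\star\ge 2K/\Delta_{\max}^2$, hence a factor $\Delta_{\max}^2/\Delta_{\min}^2$, and no universal constant is possible (take $p_a=1/K$ for all $a$, one suboptimal arm with a tiny gap and the rest with huge gaps: then $K\pmin=1$ while $T^\star_{p}/T^\star$ grows like $K$, which can only be compensated by the gap ratio). The paper's $O(\cdot)$ hides exactly this instance-dependent ratio, so your conclusion still matches the lemma, but the wording should be corrected. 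Second, your lower bound $\hat w_a\ge\pmin$ holds only on the support of $p$ (as you note); if some suboptimal arm has $p_a=0$, the padding only gives $\hat w_a\ge(1-\psum)/K$ and the $(K\pmin)^{-1}$ bound does not follow from this allocation — a caveat the paper's own proof also glosses over, and in that regime only your first bound is actually established.
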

The proof is reported in \ifdefined\addappendix App.  \ref{app:proof_ub_ratio}\else App. B-C\fi. The lemma shows that the price typically scales either as $(1-\psum)^{-1}$ or  $(\pmin)^{-1}$. In the remainder of this section, we discuss two exemplary cases that shed light on the nature of this scaling. We also refer the reader to  \ifdefined\addappendix   App. \ref{subsec:pricefairness} \else App. B-D \fi  for further details and examples.

\paragraph{Case 1, larger fairness rate for suboptimal arms}
\begin{figure}[t]
    \centering
    \includegraphics[width = 0.98\linewidth]{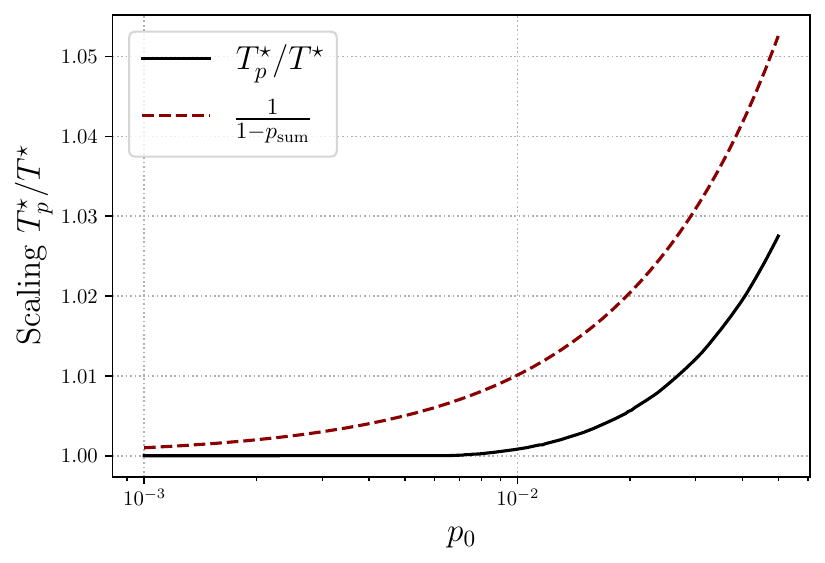}
    \caption{{\bf Price of fairness} for an instance with $K=30$ arms, and higher fairness rates for sub-optimal arms (see Ex. \ref{example_case_1}). The price of fairness $T^\star_{p}/T^\star$ scales closely with $(1-\psum)^{-1}$ }
    \label{fig:price_fairness_case1}
    \vskip -10pt
\end{figure}
We consider a scenario where the fairness rates $p$ for the sub-optimal arms are significantly larger than the optimal frequencies $w^\star$ prescribed in the unconstrained case. 

Under the assumption that $\psum<1$, if $p_{a^\star} \leq w_{a^\star}^\star$ and $p_a\geq w_{a}^\star$ for all $a\neq a^\star$  we can derive the following upper bound on the ratio $T_p^\star/T^\star$:
\begin{equation}
\label{eq:large_fairn_subopt}
    \frac{T_p^\star}{T^\star} \leq \frac{\Delta_{\rm max}^2}{K \Delta_{\rm min}^2}\left(\frac{1}{1-\psum} + \frac{1}{p_{\rm min}}\right).
\end{equation}
See \ifdefined\addappendix   App. \ref{subsec:pricefairness} \else App. B-D \fi for a detailed derivation. However, as explained in the following example, it is possible in some cases to characterize the behavior of $T_p^\star/T^\star$ by solely looking at one of the two terms $(1-\psum)^{-1}$ or $p_{\rm min}^{-1}$.
\\
\begin{example}
\label{example_case_1}
We consider an \emph{antagonistic} scenario with the following $\theta$-dependent fairness rate: $p_a(\theta)= K p_0 \frac{\Delta_a}{\sum_b \Delta_b}$, for $p_0 \in [0,1/K]$. This scenario is termed \emph{antagonistic} because the rates  $p_a(\theta)$ are roughly proportional to $\Delta_a$, and hence, larger for sub-optimal arms. To fix the ideas, consider a model where the rewards
$\theta=(\theta_a)_{a\in[K]}$ are evenly distributed in $[0,1]$. 
For small values of $p_0$ we have that  $p_{\rm min}^{-1}>(1-\psum)^{-1}$, with $(p_{\rm min})^{-1}=O(1/p_0)$.
Despite the term $(p_{\rm min})^{-1}$ being large, we find it is not necessary to characterize $T_p^\star/T^\star$. For this specific model,   as depicted in  \cref{fig:price_fairness_case1}, the ratio $T_p^\star/T^\star$ aligns more closely with $(1-\psum)^{-1}$ for small values of $p_0$. 
%In this case, we have that $(p_{\rm min})^{-1} = \frac{K-1}{2p_0}$, while $(1-\psum)^{-1} = (1-K p_0)^{-1}$. Hence, %Thus \cref{eq:price_fairness} can be considered more accurate.
\end{example} 
\paragraph{Case 2, the equal gap case}
\begin{figure}[b]
    \centering
    \includegraphics[width=\linewidth]{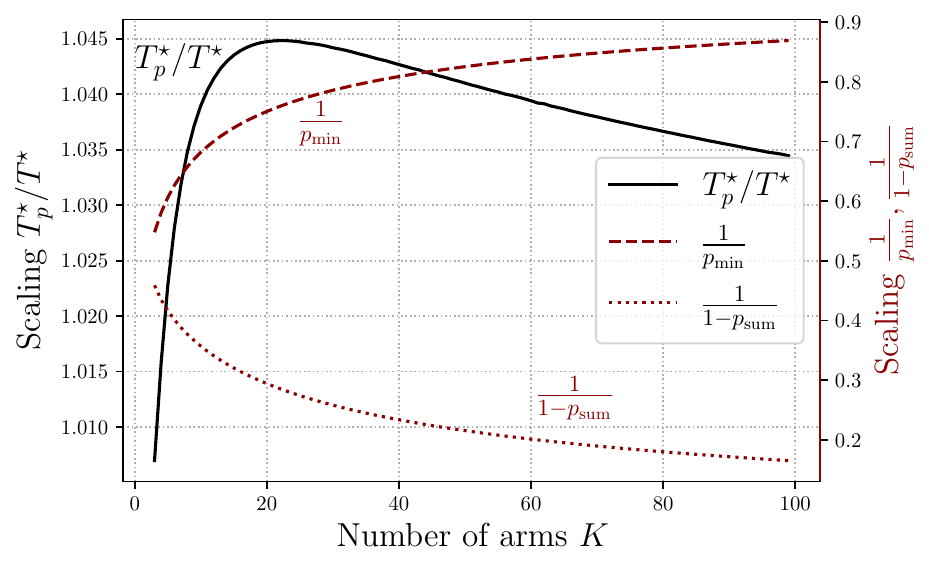}
    \caption{{\bf Price of fairness} for a MAB problem with equal gaps for different number of arms $K$. The fairness constraints are set to discourage exploration of the optimal arm by selecting $p_{a^\star}=0$ and otherwise $p_a=(w_{a}^\star+1/K)/2$ for $a\neq a^\star$.
    In black, on the left axis, it's depicted the ratio $T_p^\star/T^\star$. On the right axis, in dark-red, we plot the individual contributions due to $p_{\rm min}^{-1}$ and $(1-\psum)^{-1}$. }%As $K$ grows larger, the term $p_{\rm min}^{-1}$ dominates $(1-\psum)^{-1}$.}
    \label{fig:price_fairness_case2_equalgap}
    \vskip -10pt
\end{figure}
Another notable case involves instances with equal sub-optimality gaps, i.e., $\Delta_a=\Delta$, for all $a \neq a^\star$. In this case, we can characterize the ratio $T_p^\star/T^\star$ exactly. If $p_{a^\star}=0$ and $p_a\geq w_{a}^\star$, for all $a\neq a^\star$, we have
\begin{equation}\label{eq:tpstar_over_t_equalgap}
 \frac{T_p^\star}{T^\star} = \frac{1}{(1+\sqrt{K-1})^2}\left(\frac{1}{1-\psum}+\frac{1}{p_{\rm min}}\right).
\end{equation}
We refer the reader to \ifdefined\addappendix   App. \ref{subsec:pricefairness} \else App. B-D \fi for a detailed derivation. 

To better understand the scaling of the two terms, we note that in the unconstrained case,   the allocation $w_a^\star$ for all $a\neq a^\star$, decreases as $O(1/K)$. Hence, for a large value of $K$, if $p_a\approx w_a^\star$ for all $a\neq a^\star$, we may expect the term $p_{\rm min}^{-1}$ to be larger than $(1-\psum)^{-1}$  (see \cref{fig:price_fairness_case2_equalgap} for an example of this case). Otherwise, the term $(1-\psum)^{-1}$ is expected to be larger. 
\section{The {\sc \color{black}F-TaS} Algorithm}
\label{sec:algo}
In this section, we propose {\sc \color{black}F-TaS}, an (asymptotically) $p$-fair and $\delta$-PAC algorithm. The algorithm belongs to the family of Track-and-Stop (TaS) algorithms \cite{garivier2016optimal}, which track the sampling allocations $w^\star_p$ as suggested by the solution to the lower bound optimization problem \eqref{eq:lb}. The algorithm mainly consists of (i) a sampling rule and (ii) a stopping rule. We detail these steps in the remainder of this section, and present the pseudo-code for {\sc \color{black}F-TaS} in Alg. \ref{alg:fair_bair}.

\subsection{Sampling rule} 
\label{sec:sampling_rule}
The main idea of the algorithm is that sampling the arms according to $w_p^\star$ is automatically optimal in terms of sample complexity, and satisfies the fairness constraints. However, as the instance parameter $\theta$ is unknown, we leverage a \emph{certainty-equivalence} principle and use the current estimate $\hat{\theta}(t)=(\hat{\theta}_a(t))_{a\in [K]}$ in place of the true parameter.

In the algorithm, we denote by $w_p^\star(t)$ the solution to \cref{eq:lb} with $\hat \theta(t)$ plugged into the expression, i.e.,
$$w_p^\star(t)= \arg\inf_{w\in \Sigma_p} \max_{a\neq a_t^\star} \frac{w_a^{-1} + w_{a_t^\star}^{-1}}{\Delta_a(t)^2},$$ 
where $a_t^\star = \argmax_a \hat\theta_a(t)$ and $\Delta_a(t) = \hat\theta_{a_t^\star}(t) - \hat\theta_a(t)$.

To enforce that the parametric uncertainty asymptotically goes to $0$ (\emph{i.e.}, $\hat\theta(t)\to \theta$ a.s.), we take a convex combination of $w_p^\star(t)$ with a constant policy $\pi_c = (\pi_{c,a})_{a\in[K]}$, using a parameter $\epsilon_t$. This can be interpreted as a form of \textit{forced exploration} \cite{garivier2016optimal} and guarantees that, asymptotically, each arm is sampled infinitely often. 
\begin{algorithm}[b]
   \caption{{\sc \color{black}F-TaS}}
   \label{alg:fair_bair}
\begin{algorithmic}
   \STATE {\bfseries Input:} Fairness vector $p=(p_a)_{a\in [K]}$; confidence $\delta$; {\color{black} forced exploration schedule $(\epsilon_t)_t$}.
     \STATE Set $t\gets 1$
   \WHILE{$Z(t) <\beta(\delta,t)$}
   \STATE Compute $w_{p}^\star(t)$ and set $\pi(t)\gets (1-\epsilon_t)w_{p}^\star(t) + \epsilon_t \pi_c$
   \STATE Select $a_t\sim \pi(t)$ and observe reward $r_t$
   \STATE Update statistics $\hat \theta(t), N_a(t)$ and set $t\gets t+1$
   \ENDWHILE
   \STATE {\bf Return} $\hat{a}_{\tau_\delta} = \argmax_a \hat\theta_a(\tau_\delta)$
\end{algorithmic}
\end{algorithm}

The constant policy $\pi_c$, and the value of $\epsilon_t$ depend on the type of fairness constraint as follows:
\begin{itemize}
    \item \emph{Pre-specified constraints}: Let $K_0=|\{a\in [K]: p_a=0\}|$ be the number of arms for which $p_a = 0$. In the simple case that $K_0=0$, we set $\pi_{c,a}=p_a + (1-\psum)/K$. Otherwise we set $\epsilon_t=1/(2\sqrt{t})$, and define $\pi_c$ as
\[
\pi_{c,a} = \begin{cases}
    p_a & p_a >0\\
    \frac{1-\psum}{K_0} & \hbox{otherwise}.
\end{cases}
\]
    \item $\theta$-\emph{dependent constraints}: in this case, we select $\pi_{c,a} = 1/K$, i.e., a uniform policy for all $a\in [K]$, and we set $\epsilon_t=1/(2\sqrt{t})$.
\end{itemize}

The choice of $\pi_c$ is justified by the fact that, in the pre-specified setting, the fairness constraint naturally induces a linear exploration rate and hence we do not require any additional forced exploration. On the other hand, if the fairness constraints depend on $\theta$, we leverage a uniform policy $\pi_c$ to ensure that each arm is sufficiently explored. 

Note that our tracking procedure is probabilistic (we sample an arm from $\pi(t)$) and differs from the deterministic versions commonly employed in classical Track-and-Stop algorithms \cite{garivier2016optimal}. Therefore, our approach, inspired by best policy identification techniques \cite{al2021navigating}, requires different arguments in order to prove its optimality and fairness guarantee. See also \ifdefined\addappendix App.  \ref{sec:fbai_algo_app} \else App. C \fi for a detailed discussion.

\subsection{Stopping rule}
\label{sec:stopping_rule}
The stopping rule is defined through two components: (1) a generalized-likelihood ratio test (GLRT) $Z(t)$ and (2) a threshold function $\beta(\delta,t)$. Following  \cite{garivier2016optimal}, the GLRT can be expressed as $Z(t)\coloneqq t/ {\color{black}{2}}T^{\star}_{p}(t)$, where
\begin{equation*}%\label{eq:glrt}
    T^\star_{p}(t) := {\color{black} 2}\max_{a\neq a_t^\star} \frac{w_a(t)^{-1} + w_{a_t^\star}(t)^{-1}}{\Delta_a(t)^2},\; w_a(t) \coloneqq \frac{N_a(t)}{t}.
\end{equation*}
Next, we consider the following threshold function from \cite{kaufmann2021mixture}
\begin{equation*}
\beta(\delta,t) = 3\sum_{a\in[K]} \log(1+\log(N_a(t))) +K {\cal C}_{exp}\left(\frac{\log(\frac{1}{\delta})}{K}\right),
\end{equation*}
where ${\cal C}_{exp}$ is a function defined in Thm. 7 in \cite{kaufmann2021mixture}. %For practical reasons, we use the following approximation of the threshold ${\cal C}_{exp}(x)\approx x+ 4\log(1+x+\sqrt{2x})$, which is a good approximation for $x\geq 5$. As we state below, this choice guarantees that {\sc F-BAI} is $\delta$-PAC. 

\subsection{Sample Complexity and Fairness Guarantees} 

\paragraph{Fairness guarantees} We obtain the following guarantees on the fairness of {\sc \color{black}F-TaS}. 
\begin{proposition}
\label{prop:p_delta_PAC}
    {\sc \color{black}F-TaS} is $p$-fair (resp. asymptotically $p(\theta)$-fair) and $\delta$-PAC.  Furthermore, for pre-specified constraints, {\sc \color{black}F-TaS} satisfies the fairness constraints for all rounds, i.e., $\frac{\mathbb{E}[N_a(t)]}{t} \ge p_a, \forall t\ge 1, \; \forall a\in[K]$.
    
\end{proposition}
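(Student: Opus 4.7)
The proposition bundles three distinct claims: (a) $\delta$-PAC correctness, (b) per-round fairness in the pre-specified regime, and (c) asymptotic fairness in the $\theta$-dependent regime. My plan is to decouple these, because (a) depends only on the stopping rule and threshold, while (b)-(c) depend only on the sampling rule and on the structure of $\pi_c$.

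For the per-round, pre-specified fairness claim, the plan is to prove directly that the conditional sampling probability under {\sc F-TaS} dominates $p_a$ coordinate-wise, i.e.\ $\pi_a(t) \ge p_a$ almost surely for every $t$ and $a$. Since $w_p^\star(t)\in\Sigma_p$ by construction, $w_{p,a}^\star(t)\ge p_a$. It then suffices to check that the constant policy also dominates $p$ coordinate-wise: if $K_0=0$, $\pi_{c,a}=p_a+(1-\psum)/K\ge p_a$; if $K_0>0$, $\pi_{c,a}=p_a$ whenever $p_a>0$ and $\pi_{c,a}=(1-\psum)/K_0\ge 0=p_a$ otherwise. Taking the convex combination preserves the inequality, so $\pi_a(t)\ge p_a$. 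This immediately gives $\mathbb{E}[N_a(t)]=\sum_{s\le t}\mathbb{E}[\pi_a(s)]\ge t p_a$, which is the claimed per-round bound. To turn this into the stopping-time bound in Eq.~\eqref{eq:constraint_bai_constant_p}, I would observe that $M_a(t):=N_a(t)-tp_a$ is a submartingale w.r.t.\ $(\mathcal{F}_t)_t$ with increments bounded by $1$; combined with $\mathbb{E}_\theta[\tau_\delta]<\infty$ (which follows from the sample-complexity upper bound of {\sc F-TaS}), optional stopping yields $\mathbb{E}[N_a(\tau_\delta)]\ge p_a\mathbb{E}[\tau_\delta]$.

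For the asymptotic $\theta$-dependent case, the strategy is standard consistency $+$ continuity. The forced exploration term $\epsilon_t/K$ with $\epsilon_t=1/(2\sqrt t)$ guarantees $N_a(t)\to\infty$ for every arm a.s.; hence $\hat\theta(t)\to\theta$ a.s. By the assumed continuity of $p(\cdot)$ in $\theta$ we have $p(\hat\theta(t))\to p(\theta)$, and I would invoke continuity of the map $p\mapsto w_p^\star$ on the clipped simplex (an easy Berge-type argument, since the inner $\max$ in \eqref{eq:lb} is continuous, the feasible set varies continuously in $p$, and the optimizer is unique under $\theta\in\Theta$) to deduce $w_{p(\hat\theta(t))}^\star(t)\to w_{p(\theta)}^\star$. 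Combining this with a Cesàro/tracking argument on the probabilistic tracking rule (controlled e.g.\ via Azuma-Hoeffding on the martingale $N_a(t)-\sum_{s\le t}\pi_a(s)$) gives $N_a(t)/t\to w_{p(\theta),a}^\star\ge p_a(\theta)$ a.s.\ Passing to expectations via dominated convergence (ratios lie in $[0,1]$) and taking $\liminf_{\delta\to 0}$ delivers \eqref{eq:constraint_bai_ptheta}.

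For the $\delta$-PAC part, I would not re-derive it from scratch: the stopping statistic $Z(t)=t/(2T^\star_p(t))$ is exactly a GLRT statistic and the threshold $\beta(\delta,t)$ is the mixture threshold of Kaufmann and Koolen, for which Thm.~7 in \cite{kaufmann2021mixture} gives $\mathbb{P}_\theta(\exists t: Z(t)\ge \beta(\delta,t), \hat a_{\tau_\delta}\neq a^\star)\le\delta$ uniformly over $\theta\in\Theta$; a.s.\ finiteness of $\tau_\delta$ then follows from the forced-exploration-induced consistency combined with the growth of $Z(t)$ to $+\infty$ at a linear rate, exactly as in the classical Track-and-Stop analysis. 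The main technical obstacle I foresee is the continuity/compactness argument for $p\mapsto w_p^\star$ under $\theta$-dependent constraints, since the clipped simplex may become infeasible or change dimension at points where some $p_a(\theta)$ vanishes; this is where I would need to be most careful, restricting to a neighborhood of $\theta$ where the active set of fairness constraints is constant to legitimize the continuity claim.
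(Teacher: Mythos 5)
Your treatment of the $\delta$-PAC claim and of the pre-specified case is sound and essentially matches the paper: the paper also cites Thm.~7 of \cite{kaufmann2021mixture} for correctness, and its Proposition on per-round fairness is exactly your observation that $\pi_a(t)\ge p_a$ because both $w_p^\star(t)\in\Sigma_p$ and $\pi_c$ dominate $p$ coordinate-wise. Your passage from $\mathbb{E}_\theta[N_a(t)]\ge tp_a$ to \eqref{eq:constraint_bai_constant_p} differs in a minor but interesting way: the paper uses a lemma that decomposes $\mathbb{E}_\theta[N_a(\tau_\delta)]$ by conditioning on $\{\tau_\delta=t\}$ and applying the per-round bound conditionally, whereas you use optional stopping on the submartingale $N_a(t)-tp_a$ together with $\mathbb{E}_\theta[\tau_\delta]<\infty$; your route is clean and, since the domination $\pi_a(t)\ge p_a$ holds conditionally on $\mathcal{F}_{t-1}$, it is fully justified (and there is no circularity, as the finiteness of $\mathbb{E}_\theta[\tau_\delta]$ is proved independently of fairness).

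The genuine gap is in the $\theta$-dependent case. Your final step --- ``passing to expectations via dominated convergence (ratios lie in $[0,1]$)'' --- proves at best $\liminf_{\delta\to 0}\mathbb{E}_\theta\left[N_a(\tau_\delta)/\tau_\delta\right]\ge p_a(\theta)$, i.e.\ the sample-path version stated in the Corollary, not the constraint \eqref{eq:constraint_bai_ptheta}, which is a \emph{ratio of expectations}. Writing $\mathbb{E}_\theta[N_a(\tau_\delta)]=\mathbb{E}_\theta\left[\left(N_a(\tau_\delta)/\tau_\delta\right)\tau_\delta\right]$ shows that \eqref{eq:constraint_bai_ptheta} is a $\tau_\delta$-weighted average of the per-path ratio, so almost-sure convergence of $N_a(t)/t$ is not enough: you must show that the trajectories on which tracking fails contribute negligibly to $\mathbb{E}_\theta[\tau_\delta]$ itself, i.e.\ $\mathbb{E}_\theta[\tau_\delta\mathbf{1}_{\rm bad}]=o(\mathbb{E}_\theta[\tau_\delta])$, and mere $\mathbb{P}_\theta(\text{bad})\to 0$ does not give this since $\tau_\delta$ is unbounded. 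This is precisely where the paper's proof does real work: it introduces finite-horizon concentration events $C_{T_\varepsilon,0}(\varepsilon)$ (on $\hat\theta$) and $C_{T_\varepsilon,1}(\varepsilon)$ (on $N(t)/t$), proves the inclusion $(\tau_\delta\le T_\varepsilon)\supset C_{T_\varepsilon,0}(\varepsilon)\cap C_{T_\varepsilon,1}(\varepsilon)$ so that on the good event $N_a(\tau_\delta)\ge (w^\star_{p,a}-K(\varepsilon))\tau_\delta$ with $\tau_\delta$ confined to a controlled window, and then uses explicit, quantitative probability bounds (the forced-exploration and concentration propositions) to show the good event has probability tending to one fast enough that $\mathbb{E}_\theta[N_a(\tau_\delta)]\ge (w^\star_{p,a}-K(\varepsilon))\mathbb{E}_\theta[\tau_\delta](1-o(1))$ before letting $\varepsilon\to 0$. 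To repair your argument you would need to supply an analogous quantitative step (e.g.\ a second-moment or tail bound on $\tau_\delta$ combined with Cauchy--Schwarz, or the paper's event construction); the asymptotic consistency-plus-DCT argument alone does not deliver \eqref{eq:constraint_bai_ptheta}.
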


\begin{corollary}
{\sc \color{black}F-TaS} is 
sample-path $p$-fair (resp. $p(\theta)$-fair), i.e., it satisfies $\mathbb{E}_\theta\left[N_a(\tau_\delta)/\tau_\delta\right]\ge p_a(\theta)$, $\forall a\in[K]$.
\end{corollary}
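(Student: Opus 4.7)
The plan is to pass from the per-round selection-probability lower bound $\pi_a(t)\ge p_a$, which underlies Prop.~\ref{prop:p_delta_PAC}, to the ratio $\mathbb E_\theta[N_a(\tau_\delta)/\tau_\delta]$ by means of a martingale decomposition of the allocation and a tail control specific to the GLRT stopping rule. Writing $B_t:=\sum_{s=1}^{t}(\mathbf 1_{a_s=a}-\pi_a(s))$, an $(\mathcal F_t)$-martingale with increments in $[-1,1]$, I would start from the identity
\begin{equation*}
\frac{N_a(\tau_\delta)}{\tau_\delta}\;=\;\frac{1}{\tau_\delta}\sum_{s=1}^{\tau_\delta}\pi_a(s)\;+\;\frac{B_{\tau_\delta}}{\tau_\delta},
\end{equation*}
take expectations, and bound the two summands separately.

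For pre-specified constraints, the sampling rule in Section~\ref{sec:sampling_rule} gives $\pi_a(s)\ge p_a$ almost surely for every $s$: $w_p^\star(s)\in\Sigma_p$ yields $w_{p,a}^\star(s)\ge p_a$, and $\pi_c$ is constructed so that $\pi_{c,a}\ge p_a$ in both the $K_0=0$ and $K_0>0$ sub-cases. Hence the first summand is pathwise $\ge p_a$, and the task reduces to $\mathbb E_\theta[B_{\tau_\delta}/\tau_\delta]\ge 0$, possibly up to a slack absorbed by the strict gap $\pi_a(s)-p_a\ge\epsilon_s(1-\psum)/K$ coming from the forced-exploration term. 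I would obtain this via two ingredients: (i) a deterministic lower bound $\tau_\delta\ge t_0(\delta)=\Theta(\log(1/\delta))$ implied by the GLRT stopping rule, using that $Z(t)$ is upper-bounded by a universal expression in $t$ while $\beta(\delta,t)\ge K\mathcal C_{\exp}(\log(1/\delta)/K)$; and (ii) Azuma-Hoeffding applied to $(B_t)$ uniformly over $t\ge t_0(\delta)$, giving $\mathbb E_\theta[|B_{\tau_\delta}|/\tau_\delta]=O(\sqrt{\log(1/\delta)/t_0(\delta)})$, which is dominated by the cumulative forced-exploration slack. For arms with $p_a=0$ the inequality $N_a(\tau_\delta)/\tau_\delta\ge 0$ is automatic.

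For $\theta$-dependent constraints, I would rely on the asymptotic guarantees already embedded in Prop.~\ref{prop:p_delta_PAC}: continuity of $p(\cdot)$ and consistency $\hat\theta(\tau_\delta)\to\theta$ imply $w_p^\star(t)\to w_p^\star(\theta)$ and $N_a(\tau_\delta)/\tau_\delta\to w_{p,a}^\star(\theta)$ almost surely as $\delta\to 0$, with $w_{p,a}^\star(\theta)\ge p_a(\theta)$ by feasibility in $\Sigma_{p(\theta)}$. Since $N_a(\tau_\delta)/\tau_\delta\in[0,1]$, bounded convergence delivers $\lim_{\delta\to 0}\mathbb E_\theta[N_a(\tau_\delta)/\tau_\delta]=w_{p,a}^\star(\theta)\ge p_a(\theta)$, giving the sample-path inequality in the asymptotic form consistent with the definition of asymptotically $p(\theta)$-fair.

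The main obstacle is the finite-$\delta$ step in the pre-specified case. A simple two-round Bernoulli example (stop at $t=1$ iff $a_1\neq a$) shows that the bare pointwise bound $\pi_a(t)\ge p_a$ is insufficient to deduce $\mathbb E[N_a(\tau)/\tau]\ge p_a$ for a \emph{generic} stopping time, so the argument genuinely exploits the rigidity of the GLRT stopping rule of \textsc{F-TaS} through the deterministic lower bound on $\tau_\delta$. The most delicate quantitative point is verifying that $t_0(\delta)$ dominates the Azuma slack uniformly and that the cumulative forced-exploration gap over $s\le \tau_\delta$ exceeds it, with additional care needed at the boundary arms $p_a=0$ and for the $K_0>0$ regime where $\pi_a(s)-p_a$ can be zero on the $w_p^\star(s)$-component.
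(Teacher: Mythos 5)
Your architecture for the pre-specified, finite-$\delta$ case (the decomposition $N_a(\tau_\delta)/\tau_\delta=\tau_\delta^{-1}\sum_{s\le\tau_\delta}\pi_a(s)+B_{\tau_\delta}/\tau_\delta$, a lower bound on $\tau_\delta$, and Azuma--Hoeffding for the overshoot) is genuinely different from the paper's, but it is not closed, and the two quantitative claims it rests on both fail as stated. First, there is no deterministic lower bound $\tau_\delta\ge t_0(\delta)=\Theta(\log(1/\delta))$: the GLRT statistic is $Z(t)=t/(2T^\star_p(t))$ with $T^\star_p(t)\propto\max_{a\neq a_t^\star}(w_a(t)^{-1}+w_{a_t^\star}(t)^{-1})/\Delta_a(t)^2$, and the empirical gaps $\Delta_a(t)$ are unbounded Gaussian functionals, so $Z(t)$ admits no universal upper bound in $t$ and the test can fire at any round on a low-probability event; at best you get a high-probability lower bound, whose complement then needs separate control. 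Second, even granting such a bound, your own estimate of the overshoot, $\mathbb{E}_\theta[|B_{\tau_\delta}|/\tau_\delta]=O(\sqrt{\log(1/\delta)/t_0(\delta)})$, is $O(1)$ when $t_0(\delta)=\Theta(\log(1/\delta))$, whereas the cumulative forced-exploration slack is only $\tau_\delta^{-1}\sum_{s\le\tau_\delta}\epsilon_s(1-\psum)/K=O(1/\sqrt{\tau_\delta})$; the term you need to dominate is not dominated, so the inequality $\mathbb{E}_\theta[N_a(\tau_\delta)/\tau_\delta]\ge p_a$ is not established. Your two-round counterexample correctly shows that the bare bound $\pi_a(s)\ge p_a$ cannot suffice for a generic stopping time, but the proposal does not supply the missing ingredient.

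The paper's route is much shorter and does not pass through a martingale at all. For pre-specified constraints it proves the per-round guarantee $\mathbb{E}_\theta[N_a(t)]\ge p_a t$ for every deterministic $t$ (\cref{prop:sat_fairness}) and then invokes \cref{lemma:algo_fairness_simple}, which writes $\mathbb{E}_\theta[N_a(\tau_\delta)/\tau_\delta]=\sum_t\mathbb{E}_\theta[N_a(t)/t\mid\tau_\delta=t]\,\mathbb{P}_\theta(\tau_\delta=t)$ and lower-bounds each conditional term by $p_a$; note that this step uses the bound \emph{conditionally on} $\{\tau_\delta=t\}$, which is exactly the point your counterexample identifies as the crux, so any rigorous completion (of either argument) must justify $\mathbb{E}_\theta[N_a(t)\mid\tau_\delta=t]\ge p_a t$ rather than its unconditional version. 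For $\theta$-dependent constraints the statement is asymptotic and the paper derives it from concentration of $N(t)/t$ around $w^\star_p$ on a high-probability event contained in $\{\tau_\delta\le T_\varepsilon\}$ (\cref{lemma:asymptotic_algo_fairness_new} and the proof of \cref{prop:p_delta_PAC} in the appendix); your argument for that half is essentially the same as the paper's and is sound, modulo stating the conclusion as a $\liminf_{\delta\to 0}$ rather than an exact limit.
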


 \paragraph{Sample complexity guarantees} Next, we establish that our algorithm achieves optimal sample complexity asymptotically (as $\delta \to 0)$.
\begin{theorem}
\label{theorem:sample_complex_optimality}
For all $\delta\in(0,1/2)$, {\sc \color{black}F-TaS} 
has a finite expected sample complexity $\mathbb{E}_\theta[\tau_\delta]<\infty$, and it satisfies: 
\begin{enumerate}
    \item[(1)] Almost sure asymptotic optimality: 
    $$\mathbb{P}_\theta\left( \limsup_{\delta \to 0} \frac{\mathbb{E}_\theta[\tau_\delta]}{\log(1/\delta)} \leq T_{p}^\star \right) =1,$$
    \item[(2)] Asymptotic optimality in expectation: $$\limsup_{\delta \to 0} \frac{\mathbb{E}_\theta[\tau_\delta]}{\log(1/\delta)} \leq T_{p}^\star.$$ 
\end{enumerate}
\end{theorem}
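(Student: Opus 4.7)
I would follow the Track-and-Stop blueprint of \cite{garivier2016optimal}, adapted to accommodate the clipped simplex $\Sigma_p$ and the probabilistic tracking rule $\pi(t) = (1-\epsilon_t)w_p^\star(t) + \epsilon_t \pi_c$. The argument reduces to showing that the empirical allocation $w(t)=(N_a(t)/t)_a$ converges almost surely to the optimal fair allocation $w_p^\star$; plugging this into the GLRT then yields both claims once uniform integrability is established.

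First, I would argue \emph{consistency} of the plug-in estimate $\hat\theta(t)$. The choice of $\pi_c$ guarantees that every arm $a$ is sampled at least $\Omega(\sqrt{t})$ times in expectation (for $\theta$-dependent constraints) or $\Omega(t)$ times (for pre-specified constraints with $p_a>0$); converting these expected lower bounds to almost-sure ones via Azuma--Hoeffding applied to the martingale $\sum_{s\le t}(\mathbf{1}_{\{a_s=a\}}-\pi_a(s))$ yields $\hat\theta(t)\to\theta$ a.s. Next, I would invoke \emph{continuity} of $\theta\mapsto w_p^\star(\theta)$ at the true $\theta$: a Berge-type maximum theorem applies to the convex program \eqref{eq:lb}, with uniqueness of the optimizer (standard in the Gaussian case) preserved by intersecting with the closed convex set $\Sigma_p$. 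In the $\theta$-dependent case, continuity of $p(\cdot)$ is used to obtain Hausdorff convergence of the feasible set $\Sigma_{p(\hat\theta(t))}\to\Sigma_{p(\theta)}$. Combining, $w_p^\star(t)\to w_p^\star$ almost surely.

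The \emph{tracking} step is where the argument departs from the deterministic analysis of \cite{garivier2016optimal}. Since $a_t\sim \pi(t)$, I would decompose
\[
\frac{N_a(t)}{t} = \frac{1}{t}\sum_{s=1}^{t} \pi_a(s) + \frac{1}{t}\sum_{s=1}^{t}\bigl(\mathbf{1}_{\{a_s=a\}}-\pi_a(s)\bigr),
\]
and bound the second term by a martingale concentration inequality (so that it vanishes almost surely), while Cesàro averaging of $\pi_a(s)\to w_{p,a}^\star$ handles the first. Hence $w(t)\to w_p^\star$ a.s., so $T_p^\star(t)\to T_p^\star$ and $Z(t)/t$ converges to the correct limit. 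Combined with the scaling $\beta(\delta,t)\sim \log(1/\delta)$ of the stopping threshold, the definition $\tau_\delta=\inf\{t: Z(t)\ge\beta(\delta,t)\}$ yields part (1), namely $\limsup_{\delta\to 0}\tau_\delta/\log(1/\delta)\le T_p^\star$ almost surely.

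The main obstacle I anticipate lies in part (2), which requires \emph{uniform integrability} to upgrade the almost-sure bound to one in expectation, as well as in proving $\mathbb{E}_\theta[\tau_\delta]<\infty$ for fixed $\delta$. The standard route is a deviation inequality of the form $\mathbb{P}_\theta(\tau_\delta>t)\le \exp(-c(\delta)\, t^\alpha)$ for $t$ beyond a problem-dependent constant, obtained by quantifying the speed at which $w(t)$ concentrates around $w_p^\star$ and then showing that the stopping condition is met with high probability once $t\gg T_p^\star\log(1/\delta)$. The probabilistic tracking forces one to replace the sure-event inclusions of \cite{garivier2016optimal} with high-probability counterparts, using the martingale tail bounds from the tracking step uniformly over a growing collection of arms and time indices via union bounds. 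Integrating this tail then delivers both the finiteness of $\mathbb{E}_\theta[\tau_\delta]$ and the in-expectation optimality in part (2).
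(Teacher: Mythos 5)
Your outline follows essentially the same architecture as the paper's proof: almost-sure convergence of $N_a(t)/t$ to $w_{p,a}^\star$ via a Ces\`aro-plus-martingale decomposition together with Berge's theorem (\cref{propo:almost_sure_forced_asymptotic_forced_exploration_fair_bai}), the bound $\beta(\delta,t)\le\ln(Bt/\delta)$ and inversion of the stopping condition for part (1) (\cref{thm:fairbai_sample_complexity_as_bound}), and a tail bound on $\mathbb{P}_\theta(\tau_\delta>T)$ via concentration events, summed over $T$, for finiteness and part (2) (\cref{thm:fairbai_expected_sample_complexity}). There is, however, a genuine gap in the tool you designate for the key lower bounds on arm counts. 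For arms that are reached only through forced exploration ($p_a=0$ in the pre-specified case, and every arm in the $\theta$-dependent case), the per-round selection probability is only $\epsilon_t\pi_{c,a}\asymp 1/\sqrt{t}$, so $\sum_{s\le t}\pi_a(s)\asymp\sqrt{t}$. Azuma--Hoeffding applied to the martingale $\sum_{s\le t}(\mathbf{1}_{\{a_s=a\}}-\pi_a(s))$ only gives deviations of order $\sqrt{t\log(1/\gamma)}$ (or $\sqrt{t\log t}$ if you want an almost-sure statement via Borel--Cantelli over $t$), which is of the same order as, or larger than, the mean $\asymp\sqrt{t}$. Hence it cannot convert the expected $\Omega(\sqrt{t})$ growth into a high-probability or almost-sure lower bound on $N_a(t)$; without such a bound you get no concentration of $\hat\theta(t)$, and your deviation inequality $\mathbb{P}_\theta(\tau_\delta>t)\le\exp(-c(\delta)t^\alpha)$ — the engine behind $\mathbb{E}_\theta[\tau_\delta]<\infty$ and part (2) — does not follow.

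The paper circumvents this with two separate devices. For the almost-sure statements it only needs $N_a(t)\to\infty$, which follows from Borel--Cantelli since $\sum_t\epsilon_t\pi_{c,a}=\infty$ (\cref{prop:fair_bai_infinite_sampling}); no quantitative rate is required there, so your step (1) is easily repaired. For the in-expectation part it proves a dedicated forced-exploration result (\cref{prop:forced_exploration_fair_bai}), inspired by best-policy identification techniques: by bounding the inter-arrival times $\tau_a(k)$ of each arm with geometrically decaying tails, it shows that $N_a(t)\ge\bigl\lfloor(p_0t/\log^2(1+K'/\gamma))^{1/4}\bigr\rfloor$ for all $t$ and all arms simultaneously with probability at least $1-\gamma$. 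This $t^{1/4}$-rate (not $\sqrt{t}$) is then fed into Hoeffding to obtain concentration of $\hat\theta(t)$ (\cref{prop:concentration_estimate}), then into an Azuma bound for the concentration of $N(t)/t$ about $w_p^\star$ conditionally on that event (\cref{prop:concentration_average_sampling}), and finally into the summable bound on $\mathbb{P}_\theta(\tau_\delta>T)$. To fix your plan, either adopt this inter-arrival-time argument, or replace Azuma--Hoeffding by a Freedman/Bernstein-type martingale inequality exploiting the small predictable variance $\sum_{s\le t}\pi_a(s)(1-\pi_a(s))=O(\sqrt{t})$, which does yield $N_a(t)=\Omega(\sqrt{t})$ with high probability; with that ingredient in place, the remainder of your outline goes through as in the paper.
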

See \ifdefined\addappendix App.  \ref{sec:fairness_delta_pac} \else App. C-A \fi and \ifdefined\addappendix App.  \ref{sec:sample_complex_proofs} \else App. C-B \fi  for a detailed derivation of Prop. \ref{prop:p_delta_PAC} and Thm \ref{theorem:sample_complex_optimality}.

\begin{table*}[t]
\centering
\setlength\arrayrulewidth{1pt}
\resizebox{\textwidth}{!}{ % Adjusted to use full text width for better spacing
    \begin{tabular}{ll|cccc|cccc}\toprule
        & & \multicolumn{4}{c|}{\textbf{Pre-specified constraints}} & \multicolumn{4}{c}{\textbf{$\theta$-dependent constraints}} \\
        & Algorithm & \multicolumn{2}{c}{\textbf{Sample Complexity}} & \multicolumn{2}{c|}{\textbf{Fairness Violation}} & \multicolumn{2}{c}{\textbf{Sample Complexity}} & \multicolumn{2}{c}{\textbf{Fairness Violation}} \\
        & & Agonistic & Antagonistic & Agonistic & Antagonistic & Agonistic & Antagonistic & Agonistic & Antagonistic \\\toprule
        % Risk 0.1
\rowcolor[gray]{.95} $\delta=0.1$ & {\sc \color{black}F-TaS}  & $258.63 \pm 27.56$ & $935.25 \pm 129.53$ & $3.21\% \pm 1.30\%$ & $2.11\% \pm 0.69\%$ & $404.89 \pm 48.64$ & $936.52 \pm 130.58$ & $2.48\% \pm 0.46\%$ & $1.64\% \pm 0.21\%$\\ 
  & \textsc{TaS} & $258.87 \pm 38.29$ & $258.87 \pm 38.57$ & $7.50\% \pm 1.59\%$ & $18.57\% \pm 0.33\%$ & $488.01 \pm 73.29$ & $488.01 \pm 74.66$ & $4.48\% \pm 0.48\%$ & $6.31\% \pm 0.08\%$\\ 
\rowcolor[gray]{.95}  & \textsc{Uniform Fair} & $313.67 \pm 28.86$ & $1552.43 \pm 240.79$ & $3.11\% \pm 1.16\%$ & $1.03\% \pm 0.57\%$ & $673.91 \pm 72.51$ & $4905.09 \pm 697.26$ & $1.44\% \pm 0.48\%$ & $0.17\% \pm 0.23\%$\\ \midrule
\rowcolor[gray]{.95} $\delta=0.01$ & {\sc \color{black}F-TaS} & $390.72 \pm 31.42$ & $1522.11 \pm 152.75$ & $1.51\% \pm 0.38\%$ & $1.34\% \pm 0.17\%$ & $658.86 \pm 64.51$ & $1559.28 \pm 192.29$ & $1.79\% \pm 0.29\%$ & $1.20\% \pm 0.12\%$\\ 
  & \textsc{TaS} & $436.01 \pm 49.84$ & $436.01 \pm 49.46$ & $4.05\% \pm 1.24\%$ & $19.33\% \pm 0.13\%$ & $837.94 \pm 96.03$ & $837.94 \pm 95.43$ & $4.78\% \pm 0.37\%$ & $6.63\% \pm 0.08\%$\\ 
\rowcolor[gray]{.95}  & \textsc{Uniform Fair} & $475.54 \pm 33.65$ & $2504.16 \pm 289.01$ & $1.78\% \pm 0.34\%$ & $0.31\% \pm 0.11\%$ & $1052.03 \pm 112.42$ & $7666.75 \pm 948.81$ & $0.97\% \pm 0.45\%$ & $0.03\% \pm 0.08\%$\\ \midrule
\rowcolor[gray]{.95} $\delta=0.001$ & {\sc \color{black}F-TaS} & $508.11 \pm 32.49$ & $2039.71 \pm 199.70$ & $1.22\% \pm 0.32\%$ & $1.12\% \pm 0.17\%$ & $891.82 \pm 68.97$ & $2053.25 \pm 204.73$ & $1.44\% \pm 0.21\%$ & $1.03\% \pm 0.10\%$\\ 
  & \textsc{TaS} & $628.32 \pm 56.20$ & $628.32 \pm 57.14$ & $2.94\% \pm 0.91\%$ & $19.80\% \pm 0.12\%$ & $1272.28 \pm 123.52$ & $1272.28 \pm 119.12$ & $5.12\% \pm 0.34\%$ & $6.97\% \pm 0.09\%$\\ 
\rowcolor[gray]{.95}  & \textsc{Uniform Fair} & $604.36 \pm 36.91$ & $3416.73 \pm 341.82$ & $1.43\% \pm 0.28\%$ & $0.17\% \pm 0.09\%$ & $1433.40 \pm 115.00$ & $10740.08 \pm 1062.28$ & $0.51\% \pm 0.28\%$ & $0.02\% \pm 0.11\%$\\ \bottomrule

    \end{tabular}}
\caption{Synthetic experiments: sample complexity and fairness violation for {\sc \color{black}F-TaS}, {\sc TaS}, and {\sc Uniform Fair}. The fairness violation, as defined in \cref{eq:fairness_violation}, measures the average maximum extent of fairness deviation at the stopping time $\tau_\delta$. }
\vskip -10pt
\label{table:synthetic}
\end{table*}
\section{Numerical Results}
\label{sec:experiments}
In this section, we numerically evaluate the performance of {\sc \color{black}F-TaS}. We propose two sets of experiments: we apply {\sc \color{black}F-TaS} to a synthetic bandit instance (in Sec. \ref{sec:syntetic_exp}), and to an industrial use-case from the radio communication domain: wireless scheduling (in Sec. \ref{sec:resourse_alloc}). Additional results are reported in the appendix.

\paragraph{Fairness criteria}
For both sets of experiments, we focus on two settings: \emph{agonistic fairness} and \emph{antagonistic fairness}. These terms relate to how the fairness parameter $p$ impacts exploration. In the former setting, $p$ promotes exploration (e.g., by aligning with the optimal allocation in the unconstrained setting $w^\star$), while in the latter, it inhibits exploration. We clarify these concepts below in the context of {pre-specified} and {$\theta$-dependent constraints}.
\begin{itemize}%[noitemsep,topsep=0pt]
    \item[(i)] \textit{Pre-specified constraints}: we select the fairness vector as $p_{a}= p_0[\alpha w_a^\star + (1-\alpha) \bar w_a^\star$], where $p_0\in(0,1), \alpha\in(0,1)$ and $\bar w_a^\star = (1/w_a^\star) / \sum_{b\in[K]} (1/w_b^\star)$. The parameter $p_0$ regulates the "amount of fairness" in the problem. We set $\alpha=0.9$ for the \emph{agonistic} case and $\alpha=0.1$ in the \emph{antagonistic} one. Note that in the latter case, $p_a$ is almost inversely proportional to the optimal allocation $w^\star$ in the unconstrained case.

    \item[(ii)] \textit{$\theta$-dependent constraints:} in the \emph{agonistic case} we select the fairness functions as $p_a(\theta) = p_0 \frac{1/\max(\Delta_a, \Delta_{\rm min})}{\sum_{b\in[K]}1/\max(\Delta_b, \Delta_{\rm min})}$, with $p_0\in (0,1)$. In the \emph{antagonistic case} we select $p_a(\theta) = p_0 \frac{\Delta_a}{\sum_{b\in[K]}\Delta_b}$. In these two cases, we see how the fairness rates are proportional, or inversely proportional, to the sub-optimality gaps $\Delta_a=\theta_{a^\star}-\theta_a$.
\end{itemize}

\paragraph{Fairness violation}
We measure the \emph{fairness violation} 
at time $t\le \tau_{\delta}$ as $\rho(t) = (\max_a p_a(\theta)-N_a(t)/t)_+$, where and $(x)_+=\max(x,0)$. We also measure the expected fairness violation at the stopping time $\tau_\delta$ as

\begin{align}
    %\textrm{Fairness Violation } &=
    \textrm{Fairness Violation} = \mathbb{E}_\theta[\rho(\tau_\delta)].\label{eq:fairness_violation}
\end{align}

This metric measures the average maximum amount of fairness violation at the stopping time $\tau_\delta$. For instance, a $5\%$ violation, suggests that an arm has been sampled $5\%$ less frequently than the rate prescribed by $p_a$ (at most). 

\paragraph{Baseline algorithms} We compare our {\sc \color{black}F-TaS} algorithm to Track-and-Stop ({\sc TaS}) from \cite{garivier2016optimal}, a baseline that does not consider any fairness constraint, and {\sc Uniform Fair}, an algorithm selecting an arm $a$ in round $t$ with probability $p_a(\hat\theta(t)) + (1-\psum(\hat\theta(t)))/K$. Hence, {\sc Uniform Fair} guarantees that $\mathbb{E}_\theta[N_a(t)]/t\geq p_a , \forall t \geq 1$, or $\lim_{t\to\infty} \mathbb{E}[N_a(t)]/t \geq p_a(\theta)$, for pre-specified or $\theta$-dependent constraints, respectively. We test these algorithms by varying the values of $\delta \in\{10^{-3},10^{-2},10^{-1}\}$. The results are averaged over $N=100$ independent runs. All the confidence intervals refer to $95\%$ confidence. 

\subsection{Synthetic Experiments}
\label{sec:syntetic_exp}

\begin{figure}[b!]
    \centering
    \includegraphics[width=.92\linewidth]{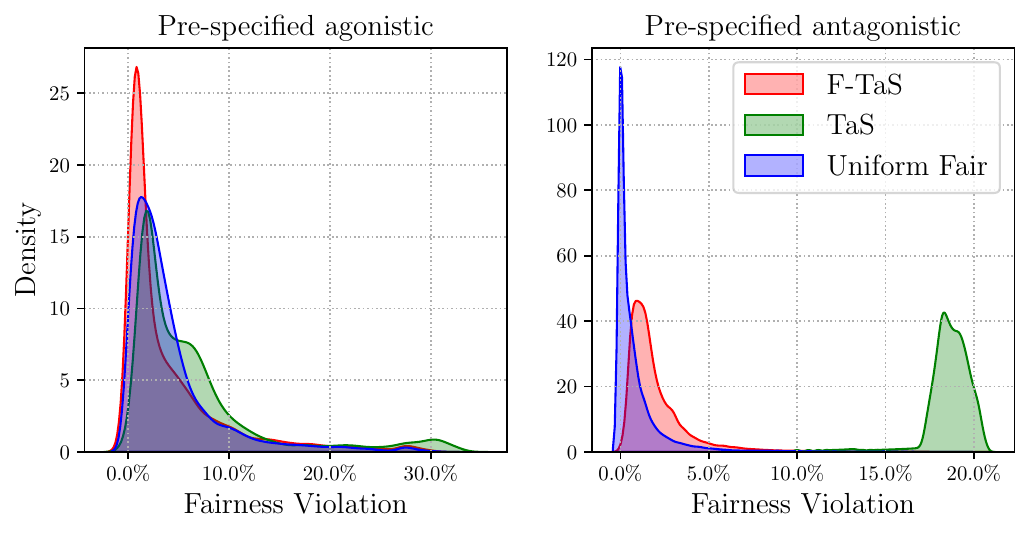}
     \includegraphics[width=.92\linewidth]{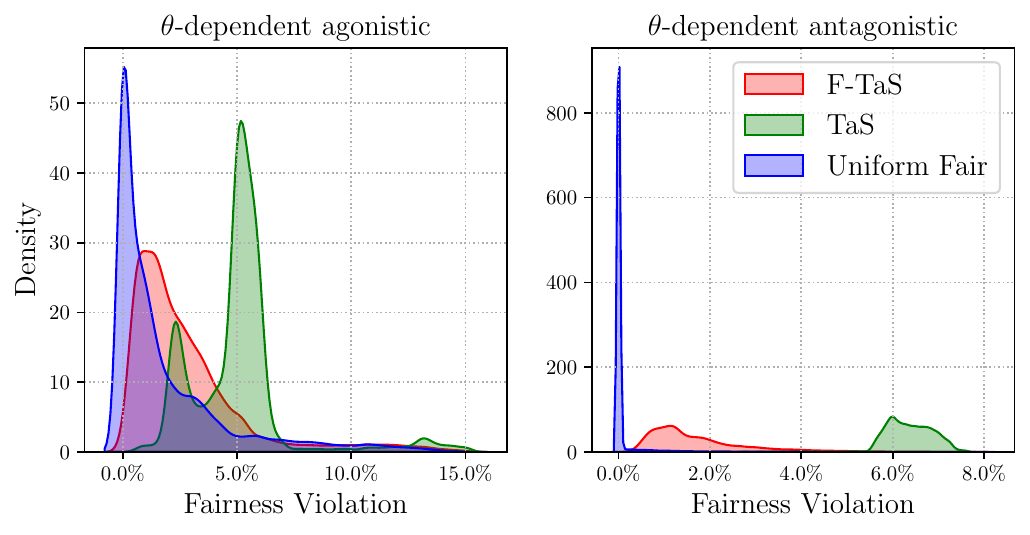}
    \caption{Violations for the synthetic experiments with $\delta=0.01$. Each subplot illustrates the distribution of maximum violation  $\rho(t) = (\max_a p_a(\theta)-N_a(t)/t)_+$, across all rounds $t \leq \tau_\delta$ and  experimental runs.}
    \label{fig:synthetic_violation_main}
\end{figure}
\paragraph{Model} For the pre-specified setting we consider a  model where the expected rewards $(\theta_a)_{a\in[K]}$ linearly range in $[0, K/2.5]$. We consider both \emph{agonistic} and \emph{antagonistic} fairness rates and select $K = 10$, $p_0=0.9$. For $\theta$-dependent constraints we consider an instance with $K=15$ arms with rewards linearly ranging in $[0,5]$, and $p_0=0.7$.\\

\paragraph{Results} In \cref{table:synthetic} we summarize the main results for this experiment for the sample complexity $\mathbb{E}_\theta[\tau_\delta]$ and fairness violation at the stopping time $\mathbb{E}_\theta[\rho(\tau_\delta)]$.

In terms of sample complexity,  {\sc \color{black}F-TaS} shows similar performances to {\sc TaS} in the agonistic setting. This is expected, since in this case the fairness constraints $p$ are closely related to $w^\star$, and thus greatly favor exploration. At the same time {\sc \color{black}F-TaS} is able to guarantee a lower fairness violation, twice as low than {\sc TaS}.

In case the fairness constraints are antagonistic, and thus do not favour exploration, we see how the sample complexity of {\sc \color{black}F-TaS} increases, while still maintaining a low fairness violation. In comparison, the sample complexity of {\sc Uniform Fair} is almost $50\%$ as high, while having similar violations. For $\theta$-dependent constraints the difference in sample complexity is even higher.

In \cref{fig:synthetic_violation_main} we show the distribution of maximum violation over all experimental runs. These results offer a comprehensive view of the algorithms' fairness throughout the duration of observation. Furthermore, the mean of these distributions effectively represents the average violation per round for each algorithm. From the results, we see that the behavior of {\sc \color{black}F-TaS} is close to that of {\sc Fair Uniform}, while {\sc TaS} has larger violations overall.

 \begin{table*}[t]
\centering
\setlength\arrayrulewidth{1pt}
\resizebox{\textwidth}{!}{ % Adjusted to use full text width for better spacing
    \begin{tabular}{ll|cccc|cccc}\toprule
        & & \multicolumn{4}{c|}{\textbf{Pre-specified constraints}} & \multicolumn{4}{c}{\textbf{$\theta$-dependent constraints}} \\
        & Algorithm & \multicolumn{2}{c}{\textbf{Sample Complexity}} & \multicolumn{2}{c|}{\textbf{Fairness Violation}} & \multicolumn{2}{c}{\textbf{Sample Complexity}} & \multicolumn{2}{c}{\textbf{Fairness Violation}} \\
        & & Agonistic & Antagonistic & Agonistic & Antagonistic & Agonistic & Antagonistic & Agonistic & Antagonistic \\\toprule
        % Risk 0.1
\rowcolor[gray]{.95} $\delta=0.1$ & {\sc \color{black}F-TaS} & $199.10 \pm 15.96$ & $457.90 \pm 48.15$ & $3.03\% \pm 0.39\%$ & $2.13\% \pm 0.24\%$ & $197.80 \pm 17.05$ & $599.79 \pm 68.83$ & $4.60\% \pm 0.43\%$ & $2.97\% \pm 0.32\%$\\ 
  & \textsc{TaS} & $136.88 \pm 9.59$ & $136.88 \pm 9.78$ & $6.55\% \pm 0.68\%$ & $10.76\% \pm 0.12\%$ & $136.88 \pm 9.48$ & $136.88 \pm 9.86$ & $5.32\% \pm 0.36\%$ & $8.22\% \pm 0.08\%$\\ 
\rowcolor[gray]{.95}  & \textsc{Uniform Fair} & $236.50 \pm 16.11$ & $726.52 \pm 85.13$ & $2.45\% \pm 0.37\%$ & $1.12\% \pm 0.25\%$ & $220.07 \pm 18.00$ & $1889.56 \pm 287.37$ & $4.07\% \pm 0.35\%$ & $1.94\% \pm 0.48\%$\\ \midrule
\rowcolor[gray]{.95} $\delta=0.01$ & {\sc \color{black}F-TaS} & $285.41 \pm 15.74$ & $696.11 \pm 58.62$ & $2.35\% \pm 0.27\%$ & $1.79\% \pm 0.20\%$ & $298.68 \pm 21.88$ & $833.55 \pm 78.24$ & $3.96\% \pm 0.37\%$ & $2.38\% \pm 0.23\%$\\ 
  & \textsc{TaS} & $207.79 \pm 13.53$ & $207.79 \pm 13.64$ & $5.71\% \pm 0.67\%$ & $11.14\% \pm 0.13\%$ & $207.79 \pm 13.84$ & $207.79 \pm 13.28$ & $4.92\% \pm 0.37\%$ & $8.55\% \pm 0.11\%$\\ 
\rowcolor[gray]{.95}  & \textsc{Uniform Fair} & $323.86 \pm 19.23$ & $1071.62 \pm 91.97$ & $1.91\% \pm 0.29\%$ & $0.68\% \pm 0.18\%$ & $359.49 \pm 24.66$ & $2853.99 \pm 319.41$ & $3.00\% \pm 0.26\%$ & $1.21\% \pm 0.40\%$\\ \midrule
\rowcolor[gray]{.95} $\delta=0.001$ & {\sc \color{black}F-TaS} & $358.81 \pm 17.44$ & $899.13 \pm 74.28$ & $2.00\% \pm 0.29\%$ & $1.60\% \pm 0.18\%$ & $398.94 \pm 24.53$ & $1048.52 \pm 84.89$ & $3.43\% \pm 0.34\%$ & $2.02\% \pm 0.18\%$\\ 
  & \textsc{TaS} & $271.05 \pm 16.99$ & $271.05 \pm 16.87$ & $5.22\% \pm 0.62\%$ & $11.51\% \pm 0.10\%$ & $271.05 \pm 16.93$ & $271.05 \pm 17.11$ & $4.67\% \pm 0.33\%$ & $8.90\% \pm 0.10\%$\\ 
\rowcolor[gray]{.95}  & \textsc{Uniform Fair} & $410.72 \pm 22.63$ & $1383.06 \pm 95.08$ & $1.52\% \pm 0.21\%$ & $0.41\% \pm 0.12\%$ & $476.13 \pm 32.11$ & $3703.97 \pm 354.92$ & $2.58\% \pm 0.24\%$ & $0.86\% \pm 0.37\%$\\ \bottomrule
    \end{tabular}}
\caption{Sample complexity and fairness violations for the scheduling experiments. 
}
\vskip -10pt
\label{tab:TabCombined_scheduling}
\end{table*}

\subsection{Wireless scheduling}
\label{sec:resourse_alloc}

\paragraph{Model}
\begin{wrapfigure}{r}{0.2\textwidth} 
    \centering
        \includegraphics[width=0.2\textwidth]{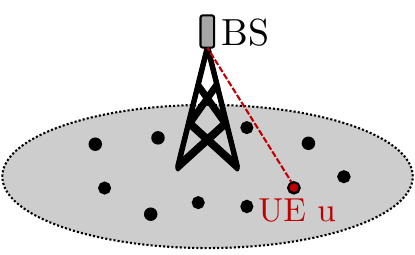} 
    \caption{Visual depiction of the scheduling environment with $K = 10$ UEs.}%. At each round the BS schedules a UE }
\label{fig:scheduling_env}
\end{wrapfigure}

We consider a wireless radio environment with a Base Station (BS) and a set of $K$ User Equipments (UEs) connected to the BS (see \cref{fig:scheduling_env}). Communication proceeds in time slots in a down-link fashion. The BS is placed at the center of a cell (or sector) of radius $d$ [m] (measured in meters), and the $K$ UEs are randomly distributed in the cell.

At each round, $t\ge 1$, the BS selects a single UE out of the $K$ to be scheduled for transmission. Naturally, in this formulation, the BS represents the learner, and the set of UEs $[K]$ represents the various arms that can be selected by the BS at each round.

\paragraph{Objective} The objective is to maximize the sum throughput across all UEs. The throughput $T_{u,t}$ of UE $u$ at round $t$ represents the rate at which information is delivered to the UE. This quantity depends on channel conditions (or {\it fading}) between the BS antenna and the user. These conditions rapidly evolve over time around their mean. The fadings between pairs of (antenna, user) are typically stochastically independent across users and antennas \cite{Tse09}, and we assume that can be modeled as independent Gaussian r.v. The reward at round $t$ is defined as the sum throughput across UEs in the cell, i.e.,
$r_t = \sum_{u \in [K]} T_{u,t} \mathbf{1}_{\{a_t = u\}}.$\\

\begin{figure}[b!]
    \centering
\includegraphics[width = \linewidth]{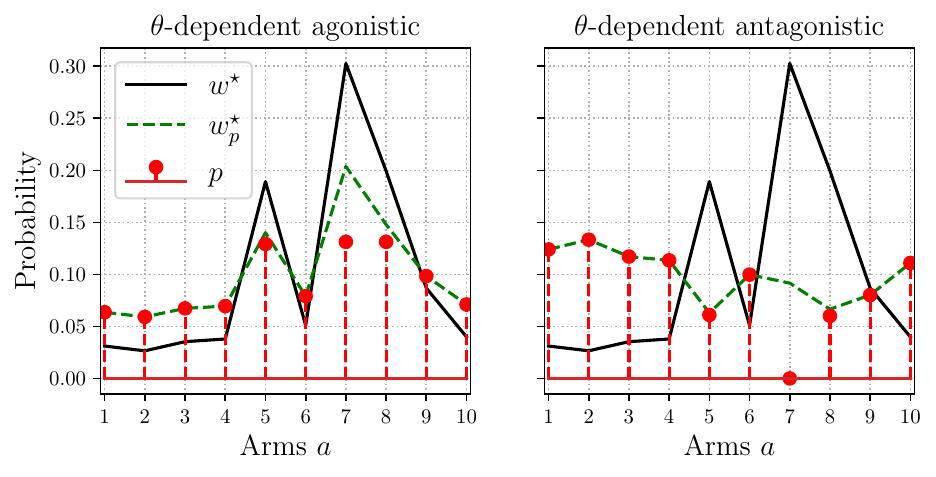}
    \caption{Allocations for the scheduling experiments with $\theta$-dependent constraints. Agonistic constraints favour exploration, since $w_p^\star\approx w^\star$, while antagonistic ones discourage exploration of good arms. }
    \label{fig:allocations_scheduling}
    
\end{figure}

\paragraph{Fairness constraints} In wireless scheduling, the fairness constraints represent the minimal fraction of rounds in which each UE is scheduled for transmission. This constraint naturally captures UE guarantees in terms of throughput: the higher the number of slots in which a UE is scheduled, the higher will be the throughput experienced. \\

\paragraph{Experimental setup}
We test {\sc \color{black}F-TaS} using mob-env, an open-source simulation environment \cite{Schneider2022mobile} based on the gymnasium interface. As for the synthetic setting, we consider two sets of experiments with \textit{pre-specified} and \textit{$\theta$-dependent} fairness. The fairness parameter $p$ and the optimal allocations $w^\star$ and $w^\star_p$ for the $\theta$-dependent setting are shown in \cref{fig:allocations_scheduling}. We set the number of UEs to $K = 10$ and  $p_0=0.9$. We refer the reader to the appendix for more details on the model and experimental setup.\\

\paragraph{Results} The sample complexity and fairness violation results are presented in  \cref{tab:TabCombined_scheduling}, while  \cref{fig:violations_scheduling} shows the distribution of the fairness violation metric. The results are generally in line with the experimental findings of the previous section: {\sc \color{black} F-TaS} achieves lower violation w.r.t. the non-fair baseline ({\sc TaS}) while outperforming the fair baseline ({\sc Uniform-Fair}) in terms of sample complexity. 
\begin{figure}[h!]
\vskip -10pt
    \centering
    \includegraphics[width=.92\linewidth]{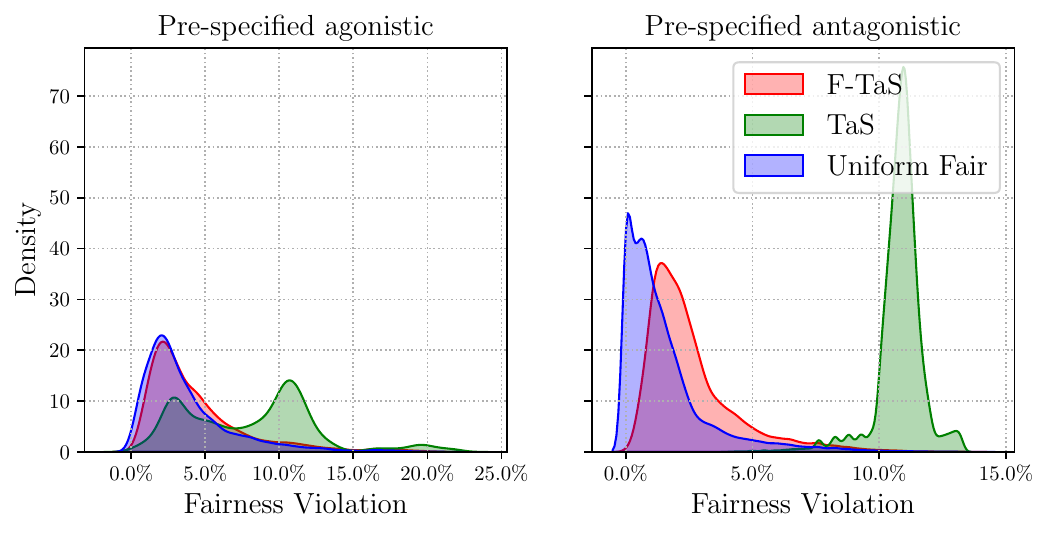}

    \includegraphics[width=.92\linewidth]{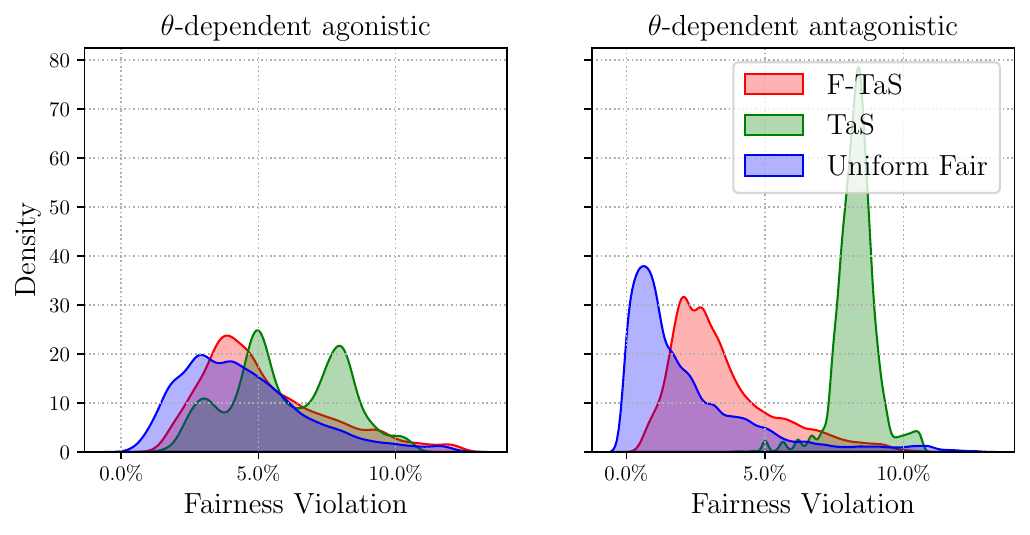}
     \caption{Distribution of the fairness violations density for the scheduling experiments.}
     \label{fig:violations_scheduling}
     \vskip -10pt
\end{figure}

\section{Conclusions}
\label{sec:conclusion}
In this paper, we introduced Fair Best Arm Identification (Fair BAI), a novel setting that integrates the classical BAI framework with fairness constraints, which are either model-agnostic or model-dependent.

For both scenarios, we derived a sample complexity lower bound and quantified the price of fairness in terms of sample complexity. Leveraging this lower bound, we devised {\sc \color{black}F-TaS}, an algorithm that provably matches this bound while complying with the fairness constraints.

Our experimental results, obtained from both synthetic and wireless scheduling scenarios, demonstrate that {\sc \color{black}F-TaS} effectively achieves low sample complexity while minimizing fairness violations.

The limitations of our work include: (i) the asymptotic nature of our fairness constraints in the $\theta$-dependent constraint; (ii) the sample complexity analysis operates in the asymptotic regime; (iii) quantifying the variance of our method is technically challenging.
Future research directions involve extending the fair BAI concept to bandits with additional structures, such as linear, Lipschitz, and unimodal bandits. Furthermore, incorporating regret minimization into our framework represents another exciting area for exploration.

\balance
\bibliography{example_paper}

\begin{thebibliography}{10}

\bibitem{caton2020fairness}
Simon Caton and Christian Haas.
\newblock Fairness in machine learning: A survey.
\newblock {\em ACM Computing Surveys}, 2020.

\bibitem{gajane2022survey}
Pratik Gajane, Akrati Saxena, Maryam Tavakol, George Fletcher, and Mykola
  Pechenizkiy.
\newblock Survey on fair reinforcement learning: Theory and practice.
\newblock {\em arXiv preprint arXiv:2205.10032}, 2022.

\bibitem{lai1985asymptotically}
Tze~Leung Lai and Herbert Robbins.
\newblock Asymptotically efficient adaptive allocation rules.
\newblock {\em Advances in applied mathematics}, 1985.

\bibitem{xu2013estimation}
Min Xu, Tao Qin, and Tie-Yan Liu.
\newblock Estimation bias in multi-armed bandit algorithms for search
  advertising.
\newblock In {\em Proc. of NeurIPS}, 2013.

\bibitem{Ariu2020regret}
Kaito Ariu, Narae Ryu, Se-Young Yun, and Alexandre Prouti{\`e}re.
\newblock Regret in online recommendation systems.
\newblock In {\em Proc. of NeurIPS}, 2020.

\bibitem{chu2011contextual}
Wei Chu, Lihong Li, Lev Reyzin, and Robert Schapire.
\newblock Contextual bandits with linear payoff functions.
\newblock In {\em Proc. of AISTATS}, 2011.

\bibitem{nguyen2019scheduling}
Thi Thuy~Nga Nguyen, Urtzi Ayesta, and Balakrishna Prabhu.
\newblock Scheduling users in drive-thru internet: a multi-armed bandit
  approach.
\newblock In {\em 2019 International Symposium on Modeling and Optimization in
  Mobile, Ad Hoc, and Wireless Networks (WiOPT)}, 2019.

\bibitem{Bouneffouf20_survey}
Djallel Bouneffouf, Irina Rish, and Charu Aggarwal.
\newblock Survey on applications of multi-armed and contextual bandits.
\newblock In {\em IEEE Congress on Evolutionary Computation (CEC)}, 2020.

\bibitem{Li19combinatorial}
Fengjiao Li, Jia Liu, and Bo~Ji.
\newblock Combinatorial sleeping bandits with fairness constraints.
\newblock {\em IEEE Transactions on Network Science and Engineering}, 2019.

\bibitem{Joseph16}
Matthew Joseph, Michael Kearns, Jamie~H Morgenstern, and Aaron Roth.
\newblock Fairness in learning: Classic and contextual bandits.
\newblock In {\em Proc. of NeurIPS}, 2016.

\bibitem{Patil21}
Vitshakha Patil, Ganesh Ghalme, Vineet Nair, and Yadati Narahari.
\newblock Achieving fairness in the stochastic multi-armed bandit problem.
\newblock In {\em JMLR}, 2021.

\bibitem{zhang2021fairness}
Xueru Zhang and Mingyan Liu.
\newblock Fairness in learning-based sequential decision algorithms: A survey.
\newblock In {\em Handbook of Reinforcement Learning and Control}. Springer,
  2021.

\bibitem{audibert2010best}
Jean-Yves Audibert, S{\'e}bastien Bubeck, and R{\'e}mi Munos.
\newblock Best arm identification in multi-armed bandits.
\newblock In {\em Proc. of COLT}, 2010.

\bibitem{garivier2016optimal}
Aur{\'e}lien Garivier and Emilie Kaufmann.
\newblock Optimal best arm identification with fixed confidence.
\newblock In {\em Proc. of COLT}. PMLR, 2016.

\bibitem{jabbari2017fairness}
Shahin Jabbari, Matthew Joseph, Michael Kearns, Jamie Morgenstern, and Aaron
  Roth.
\newblock Fairness in reinforcement learning.
\newblock In {\em ICML}, 2017.

\bibitem{grazzi2022group}
Riccardo Grazzi, Arya Akhavan, John~IF Falk, Leonardo Cella, and Massimiliano
  Pontil.
\newblock Group meritocratic fairness in linear contextual bandits.
\newblock In {\em Proc. of NeurIPS}, 2022.

\bibitem{Claure20}
Houston Claure, Yifang Chen, Jignesh Modi, Malte Jung, and Stefanos Nikolaidis.
\newblock Multi-armed bandits with fairness constraints for distributing
  resources to human teammates.
\newblock In {\em Proc. of the ACM/IEEE International Conference on Human-Robot
  Interaction}, 2020.

\bibitem{liu2022combinatorial}
Qingsong Liu, Weihang Xu, Siwei Wang, and Zhixuan Fang.
\newblock Combinatorial bandits with linear constraints: Beyond knapsacks and
  fairness.
\newblock In {\em Proc. of NeurIPS}, 2022.

\bibitem{Chen20}
Yifang Chen, Alex Cuellar, Haipeng Luo, Jignesh Modi, Heramb Nemlekar, and
  Stefanos Nikolaidis.
\newblock Fair contextual multi-armed bandits: Theory and experiments.
\newblock In {\em Conference on Uncertainty in Artificial Intelligence}, 2020.

\bibitem{celis2019controlling}
L~Elisa Celis, Sayash Kapoor, Farnood Salehi, and Nisheeth Vishnoi.
\newblock Controlling polarization in personalization: An algorithmic
  framework.
\newblock In {\em Proc. of the conference on fairness, accountability, and
  transparency}, 2019.

\bibitem{dwork2012fairness}
Cynthia Dwork, Moritz Hardt, Toniann Pitassi, Omer Reingold, and Richard Zemel.
\newblock Fairness through awareness.
\newblock In {\em Proc. of the 3rd innovations in theoretical computer science
  conference}, 2012.

\bibitem{joseph2016fair}
Matthew Joseph, Michael Kearns, Jamie~H Morgenstern, and Aaron Roth.
\newblock Fairness in learning: Classic and contextual bandits.
\newblock In D.~Lee, M.~Sugiyama, U.~Luxburg, I.~Guyon, and R.~Garnett,
  editors, {\em Advances in Neural Information Processing Systems}, volume~29.
  Curran Associates, Inc., 2016.

\bibitem{liu2017calibrated}
Yang Liu, Goran Radanovic, Christos Dimitrakakis, Debmalya Mandal, and David~C
  Parkes.
\newblock Calibrated fairness in bandits.
\newblock {\em arXiv preprint arXiv:1707.01875}, 2017.

\bibitem{gillen2018online}
Stephen Gillen, Christopher Jung, Michael Kearns, and Aaron Roth.
\newblock Online learning with an unknown fairness metric.
\newblock In {\em Proc. of NeurIPS}, 2018.

\bibitem{Wang21}
Tianyu Wang and Cynthia Rudin.
\newblock Bandit learning for proportionally fair allocations.
\newblock \url{https://wangt1anyu.github.io/papers/prop-fair-bandit.pdf}, 2021.

\bibitem{Wang21fair}
Lequn Wang, Yiwei Bai, Wen Sun, and Thorsten Joachims.
\newblock Fairness of exposure in stochastic bandits.
\newblock In {\em ICML}, 2021.

\bibitem{atkinson1970measurement}
Anthony~B Atkinson et~al.
\newblock On the measurement of inequality.
\newblock {\em Journal of economic theory}, 1970.

\bibitem{radunovic2007unified}
Bozidar Radunovic and Jean-Yves Le~Boudec.
\newblock A unified framework for max-min and min-max fairness with
  applications.
\newblock {\em IEEE/ACM Transactions on networking}, 2007.

\bibitem{si2022enabling}
Tareq Si~Salem, Georgios Iosifidis, and Giovanni Neglia.
\newblock Enabling long-term fairness in dynamic resource allocation.
\newblock {\em Proceedings of the ACM on Measurement and Analysis of Computing
  Systems}, 2022.

\bibitem{talebi2018learning}
Mohammad~Sadegh Talebi and Alexandre Proutiere.
\newblock Learning proportionally fair allocations with low regret.
\newblock {\em Proc. of the ACM on Measurement and Analysis of Computing
  Systems}, 2018.

\bibitem{wu2023best}
Yuhang Wu, Zeyu Zheng, and Tingyu Zhu.
\newblock Best arm identification with fairness constraints on subpopulations.
\newblock {\em arXiv preprint arXiv:2304.04091}, 2023.

\bibitem{kaufmann2016complexity}
Emilie Kaufmann, Olivier Capp{\'e}, and Aur{\'e}lien Garivier.
\newblock On the complexity of best arm identification in multi-armed bandit
  models.
\newblock In {\em JMLR}, 2016.

\bibitem{al2021navigating}
Aymen Al~Marjani, Aur{\'e}lien Garivier, and Alexandre Proutiere.
\newblock Navigating to the best policy in markov decision processes.
\newblock In {\em Proc. of NeurIPS}, 2021.

\bibitem{kaufmann2021mixture}
Emilie Kaufmann and Wouter~M Koolen.
\newblock Mixture martingales revisited with applications to sequential tests
  and confidence intervals.
\newblock In {\em JMLR}, 2021.

\bibitem{Tse09}
David N.~C. Tse and Pramod Viswanath.
\newblock Fundamentals of wireless communication.
\newblock {\em IEEE Trans. Inf. Theory}, 2009.

\bibitem{Schneider2022mobile}
Stefan Schneider, Stefan Werner, Ramin Khalili, Artur Hecker, and Holger Karl.
\newblock mobile-env: An open platform for reinforcement learning in wireless
  mobile networks.
\newblock In {\em NOMS IEEE/IFIP Network Operations and Management Symposium},
  2022.

\bibitem{Luenberger97}
David~G. Luenberger.
\newblock {\em Optimization by Vector Space Methods}.
\newblock John Wiley \& Sons, Inc., USA, 1st edition, 1997.

\bibitem{hall2014martingale}
Peter Hall and Christopher~C Heyde.
\newblock {\em Martingale limit theory and its application}.
\newblock Academic press, 2014.

\bibitem{russo2023sample}
Alessio Russo and Alexandre Proutiere.
\newblock On the sample complexity of representation learning in multi-task
  bandits with global and local structure.
\newblock In {\em Proceedings of the AAAI Conference on Artificial
  Intelligence}, 2023.

\bibitem{lattimore2020bandit}
Tor Lattimore and Csaba Szepesv{\'a}ri.
\newblock {\em Bandit algorithms}.
\newblock Cambridge University Press, 2020.

\bibitem{schumann2019group}
Candice Schumann, Zhi Lang, Nicholas Mattei, and John~P Dickerson.
\newblock Group fairness in bandit arm selection.
\newblock {\em arXiv preprint arXiv:1912.03802}, 2019.

\bibitem{huang2022achieving}
Wen Huang, Kevin Labille, Xintao Wu, Dongwon Lee, and Neil Heffernan.
\newblock Achieving user-side fairness in contextual bandits.
\newblock {\em Human-Centric Intelligent Systems}, 2022.

\bibitem{kusner2017counterfactual}
Matt~J Kusner, Joshua Loftus, Chris Russell, and Ricardo Silva.
\newblock Counterfactual fairness.
\newblock In {\em Proc. of NeurIPS}, 2017.

\bibitem{baek2021fair}
Jackie Baek and Vivek Farias.
\newblock Fair exploration via axiomatic bargaining.
\newblock In {\em Proc. of NeurIPS}, 2021.

\bibitem{Hossain21fair}
Safwan Hossain, Evi Micha, and Nisarg Shah.
\newblock Fair algorithms for multi-agent multi-armed bandits.
\newblock In {\em Proc. of NeurIPS}, 2021.

\bibitem{metevier2019offline}
Blossom Metevier, Stephen Giguere, Sarah Brockman, Ari Kobren, Yuriy Brun, Emma
  Brunskill, and Philip~S Thomas.
\newblock Offline contextual bandits with high probability fairness guarantees.
\newblock In {\em Proc. of NeurIPS}, 2019.

\end{thebibliography}

\bibliographystyle{unsrt}

\ifdefined \addappendix
\onecolumn
\appendices

\tableofcontents

\section{Additional Numerical Results and Detailed Experimental Setting}

% \setcounter{section}{0}
% \section{Appendix A: Additional Numerical Results and Detailed Experimental Setting}
% \addcontentsline{toc}{section}{Appendix A: Additional Numerical Results and Detailed Experimental Setting}
\label{sec:additional_experiments}
\setcounter{subsection}{0}
In this section we present additional numerical results. First, we briefly summarize some technical information regarding the experiments. Then, in App. \ref{subsec:synth_exp} we present additional results on the synthetic model, with pre-specified constraints and $\theta$-dependent constraints. Later, in App. \ref{sec:additional_results_scheduling}, we present additional details and experiments on the scheduling problem. 

As previously mentioned, we tested all the algorithms for different values of $\delta \in\{10^{-3},10^{-2},10^{-1}\}$. The results are averaged over $N=100$ independent runs. All the confidence intervals refer to $95\%$ confidence.  Moreover, in our implementation, we tested both the exploration threshold in \cref{sec:stopping_rule} $\beta(\delta,t) = 3\sum_{a\in[K]} \log(1+\log(N_a(t))) +K {\cal C}_{exp}\left(\frac{\log(\frac{1}{\delta})}{K}\right)$ \cite{kaufmann2021mixture}, and   $\beta(t,\delta) = \log((\log(t) + 1)/\delta)$ introduced in \cite{garivier2016optimal}.  We report the results using the latter threshold for simplicity. Lastly, the instructions to run the code can be found in the {\tt README.md} file in the supplementary material.

\paragraph{Fairness criteria}
For all experiments, we focus on two settings: \emph{agonistic fairness} and \emph{antagonistic fairness}. These terms relate to how the fairness parameter $p$ impacts exploration. In the former setting, $p$ promotes exploration (e.g., by aligning with the optimal allocation in the unconstrained setting $w^\star$), while in the latter, it inhibits exploration. We clarify these concepts below in the context of {pre-specified} and {$\theta$-dependent rates}.

\begin{itemize}
    \item[(i)] \textit{Pre-specified constraints}: we select the fairness vector as $p_{a}= p_0[\alpha w_a^\star + (1-\alpha) \bar w_a^\star$], where $p_0\in(0,1), \alpha\in(0,1)$ and $\bar w_a^\star = (1/w_a^\star) / \sum_{b\in[K]} (1/w_b^\star)$. The parameter $p_0$ regulates the "amount of fairness" in the problem. We set $\alpha=0.9$ for the \emph{agonistic} case and $\alpha=0.1$ in the \emph{antagonistic} one. Note that in the latter case, $p_a$ is almost inversely proportional to the optimal allocation in the unconstrained case.

    \item[(ii)] \textit{$\theta$-dependent constraints:} in the \emph{agonistic case} we select the fairness functions as $p_a(\theta) = p_0 \frac{1/\max(\Delta_a, \Delta_{\rm min})}{\sum_{b\in[K]}1/\max(\Delta_b, \Delta_{\rm min})}$, with $p_0\in (0,1)$. In the \emph{antagonistic case} we select $p_a(\theta) = p_0 \frac{\Delta_a}{\sum_{b\in[K]}\Delta_b}$. In these two cases, we see how the fairness rates are proportional, or inversely proportional, to the sub-optimality gaps $\Delta_a=\theta_{a^\star}-\theta_a$.
\end{itemize}

\paragraph{Fairness violation} 
We measure the \emph{fairness violation} 
at time $t\le \tau_{\delta}$ as $\rho(t) = (\max_a p_a(\theta)-N_a(t)/t)_+$, where and $(x)_+=\max(x,0)$. We also measure the expected fairness violation at the stopping time $\tau_\delta$ as
\begin{align*}
    \textrm{Fairness Violation} = \mathbb{E}_\theta[\rho(\tau_\delta)].
\end{align*}
\subsection{Synthetic experiments}\label{subsec:synth_exp}
\subsubsection{Synthetic Model with Pre-specified Constraints}\label{subsec:synth_exp_prespecified}

\paragraph{Model} We considered two bandit models. First, a model where the expected rewards $(\theta_a)_{a\in[K]}$ linearly range in $[0, K/2.5]$, with $K=10$ and $p_0=0.9$. Secondly, a model where all the suboptimal gaps $\Delta_a$ have the same value $\Delta= K/5$, and we set $p_0=0.99$.

\paragraph{Allocations} In \cref{fig:synthetic_prespecified_allocations} are depicted the optimal unconstrained allocation $w^\star$, the constrained one $w_p^\star$, and the fairness constraints $(p_a)_{a\in [K]}$. In the agonistic case we see how the fairness rates are closely related to the optimal exploration, while in the antagonistic one are inversely proportional.
\begin{figure}[htb!]
    \centering
    \includegraphics[width=.8\linewidth]{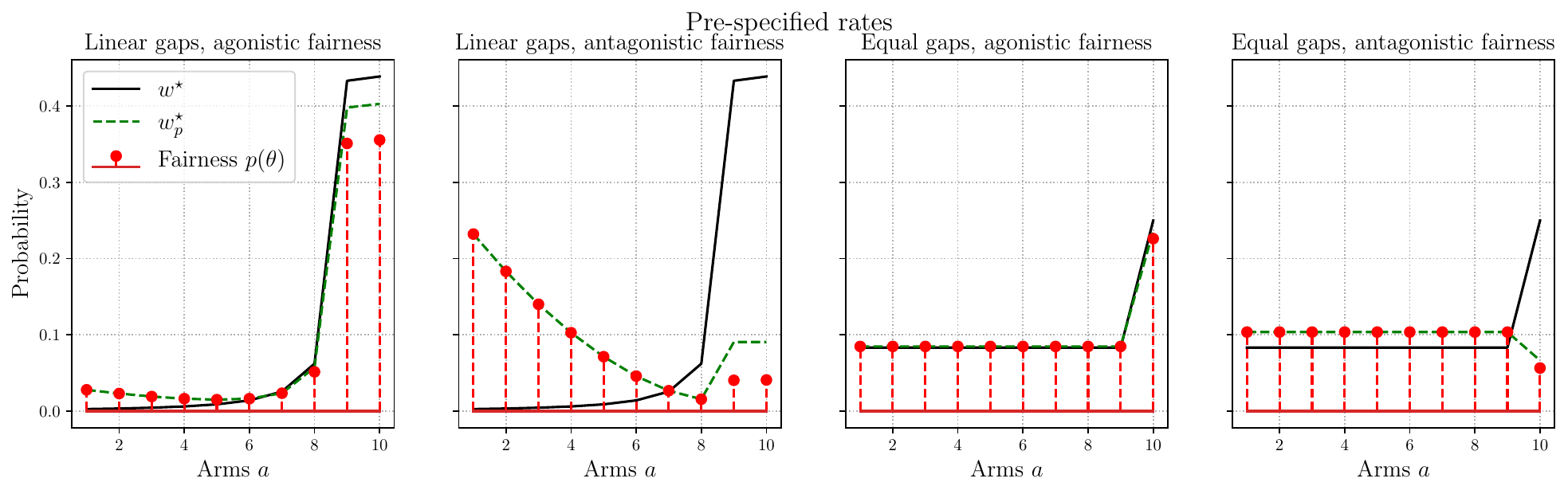}
    \caption{Synthetic experiments with pre-specified constraints. We show the the optimal unconstrained allocation $w^\star$, the constrained one $w_p^\star$, and the fairness constraints $(p_a)_{a\in [K]}$ for both the model with rewards linearly ranging in $[0,K/2.5]$ and the model with equal-gaps.}
    \label{fig:synthetic_prespecified_allocations}
\end{figure}

\paragraph{Sample complexity} In \cref{fig:synthetic_prespecified_samplecomplexity} we show the sample complexity results for each case, as well as the unconstrained sample complexity lower bound, and the constrained one. 

\begin{figure}[h!]
    \centering
    \includegraphics[width=.67\linewidth]{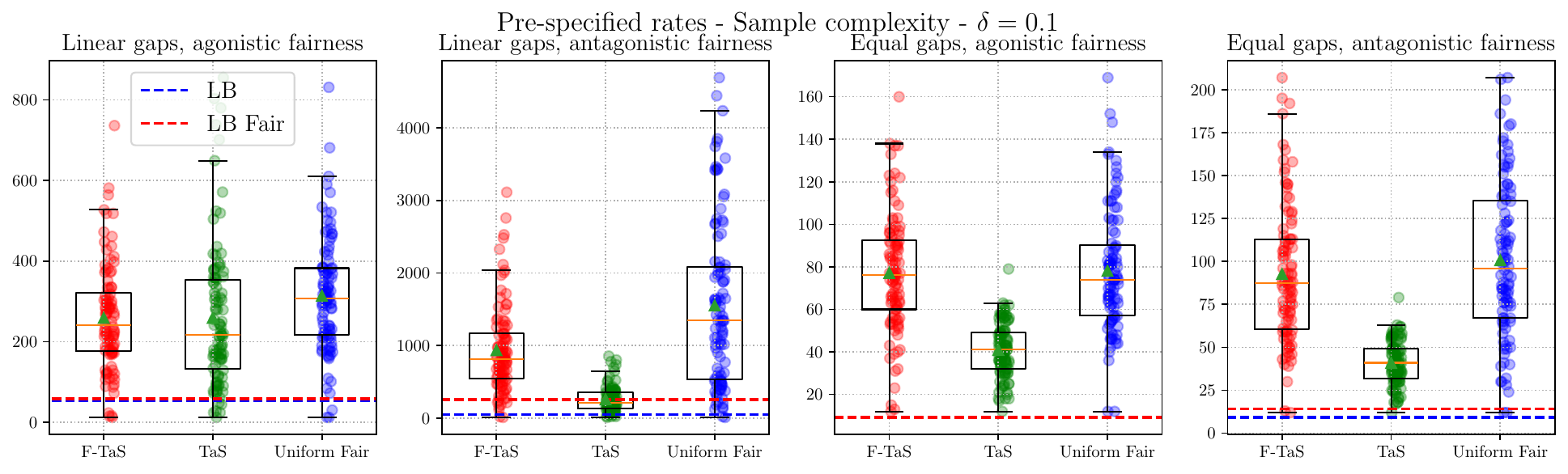}
    \includegraphics[width=.67\linewidth]{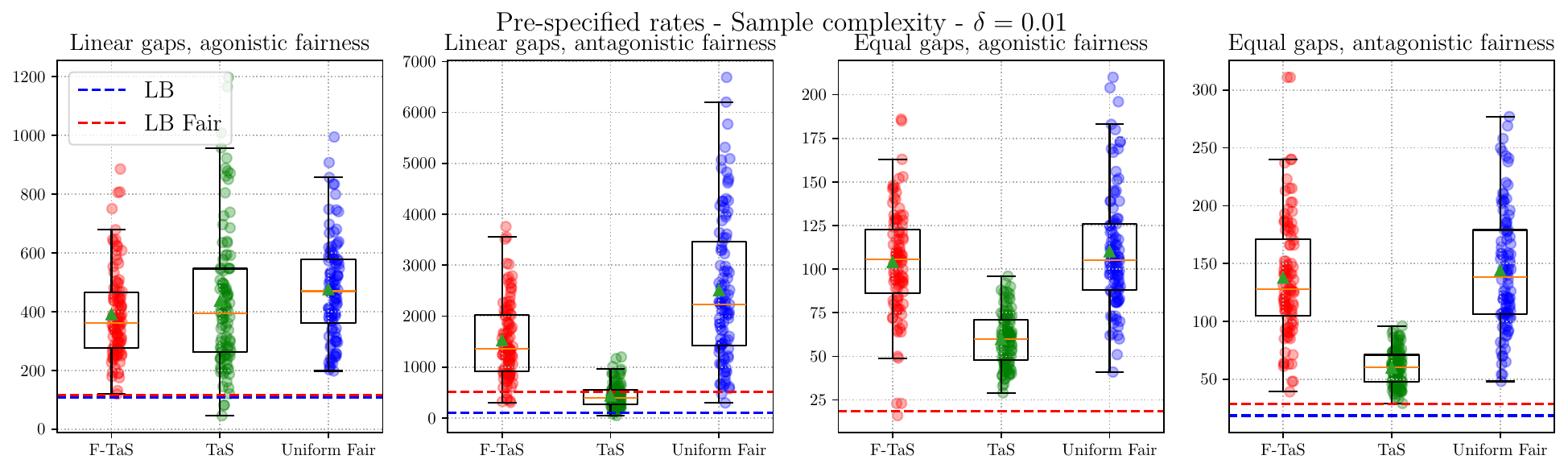}
    \includegraphics[width=.67\linewidth]{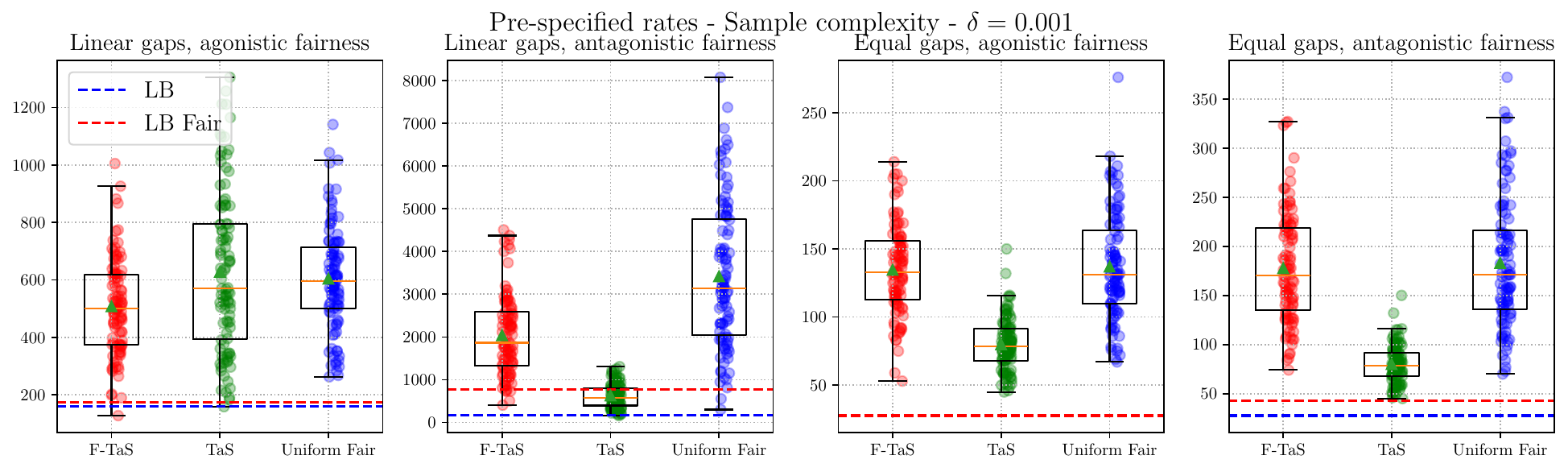}
    \caption{Synthetic model with pre-specified constraints. Sample complexity results for different values of $\delta$ are shown in each row.}
    \label{fig:synthetic_prespecified_samplecomplexity}
\end{figure}

\paragraph{Fairness violation} In \cref{fig:synthetic_prespecified_violations} we depict an aggregate distribution of the fairness violation $\rho(t)$ over all rounds. These plots offer a comprehensive understanding of the behavior of the algorithm.

\begin{figure}[H]
    \centering
    \includegraphics[width=.67\linewidth]{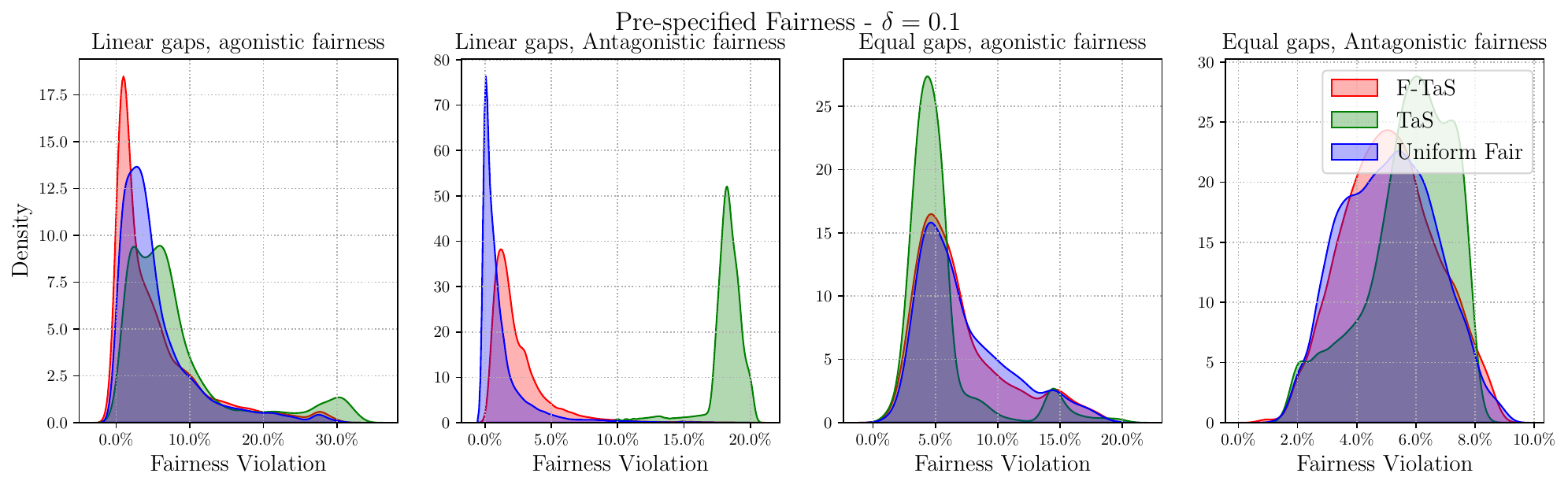}
    \includegraphics[width=.67\linewidth]{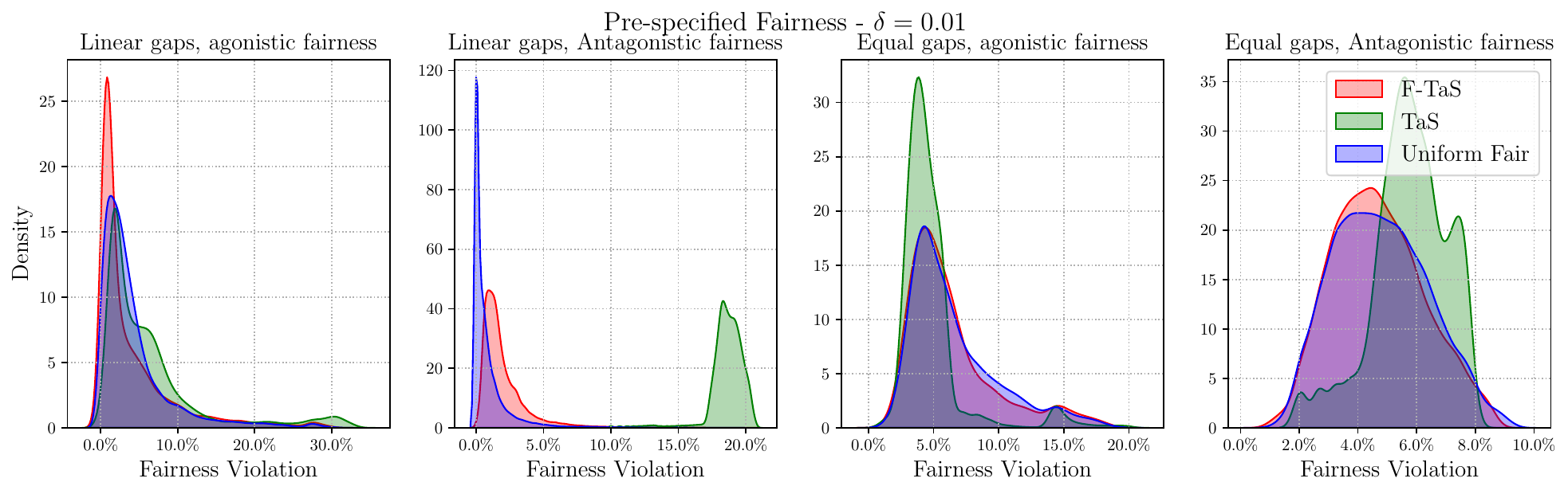}
    \includegraphics[width=.67\linewidth]{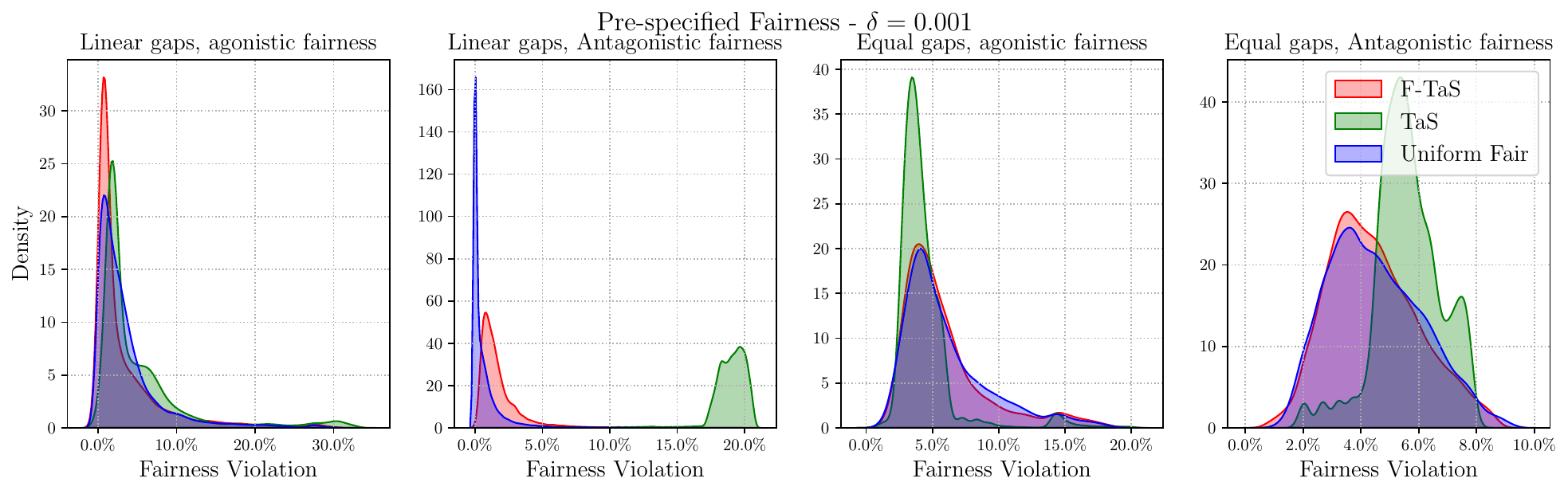}
    \caption{Violations for the synthetic experiments with pre-specified constraints for different values of $\delta$ in each row. Each subplot illustrates the distribution of maximum violation  $\rho(t) = (\max_a p_a(\theta)-N_a(t)/t)+$, across all rounds $t \leq \tau_\delta$ and  experimental runs.}
    \label{fig:synthetic_prespecified_violations}
\end{figure}

\newpage

\subsubsection{Synthetic Model with \texorpdfstring{$\theta$}{theta}-dependent Constraints}\label{subsec:synth_exp_thetadep}

\paragraph{Model} We considered a single bandit model, with $K=15$ arms and the reward linearly ranging in $[0,5]$. We used a value of $p_0=0.7$ in the fairness constraints.

\paragraph{Allocations} In \cref{fig:synthetic_thetadep_allocations} are depicted the optimal unconstrained allocation $w^\star$, the constrained one $w_p^\star$, and the fairness constraints $(p_a)_{a\in [K]}$. In the agonistic case we see how the fairness rates are closely related to the optimal exploration, while in the antagonistic one are inversely proportional.

\begin{figure}[ht!]
    \centering
    \includegraphics[width=.4\linewidth]{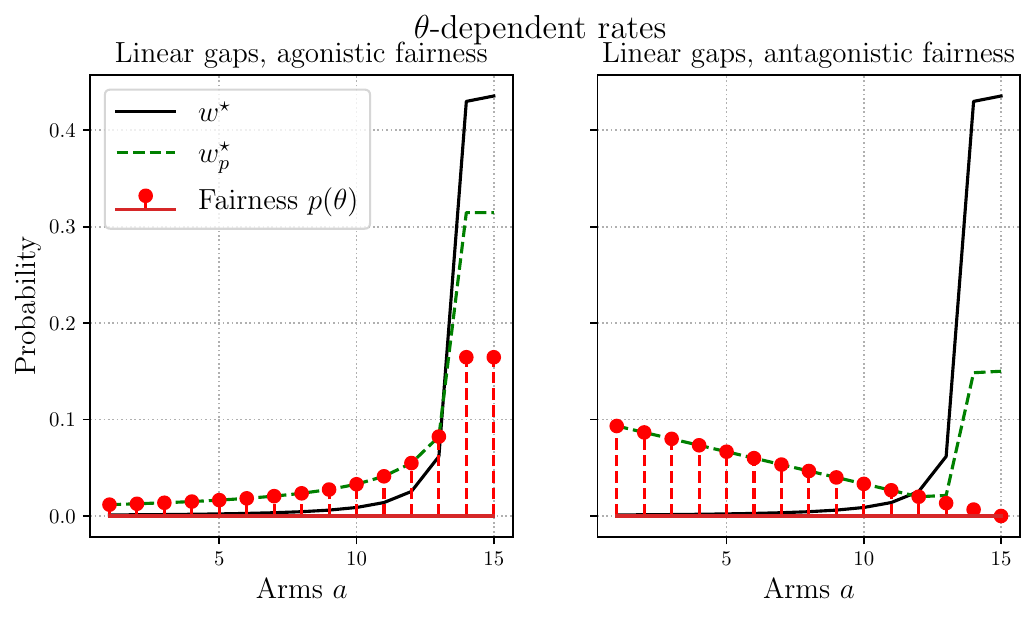}
    \caption{Synthetic experiments with $\theta$-dependent constraints. We show the the optimal unconstrained allocation $w^\star$, the constrained one $w_p^\star$, and the fairness constraints $(p_a(\theta))_{a\in [K]}$.}
    \label{fig:synthetic_thetadep_allocations}
\end{figure}

\paragraph{Sample complexity} In \cref{fig:synthetic_thetadep_samplecomplexity} we show the sample complexity results for each case, as well as the unconstrained sample complexity lower bound, and the constrained one. 

\begin{figure}[ht!]
    \centering
    \includegraphics[width=.4\linewidth]{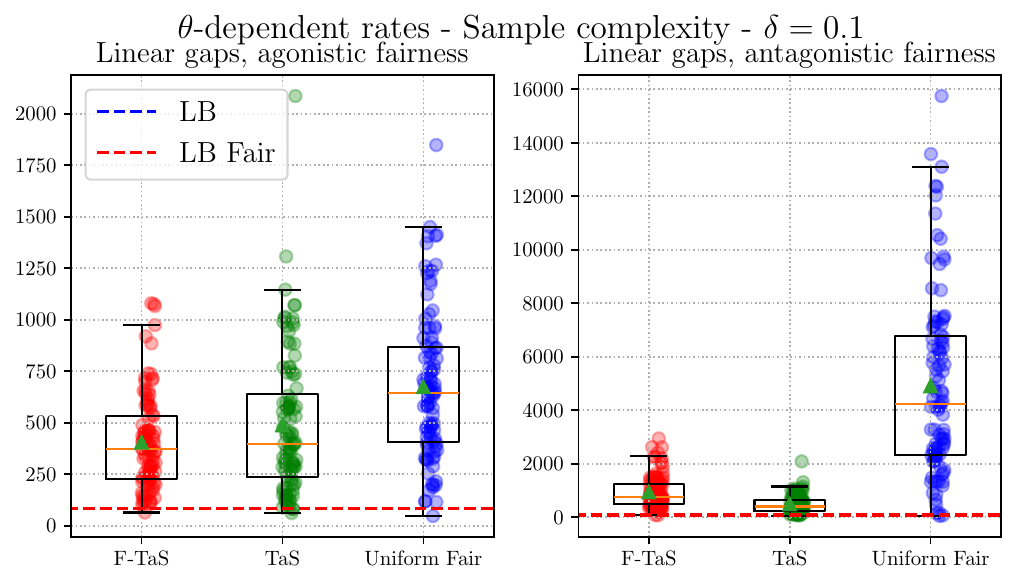}
    \includegraphics[width=.4\linewidth]{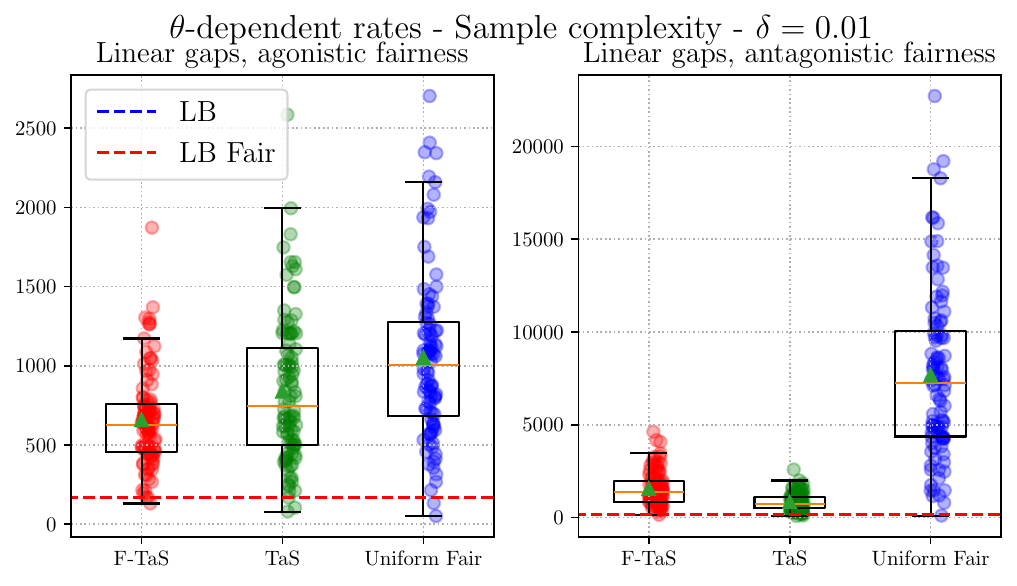}
    \includegraphics[width=.4\linewidth]{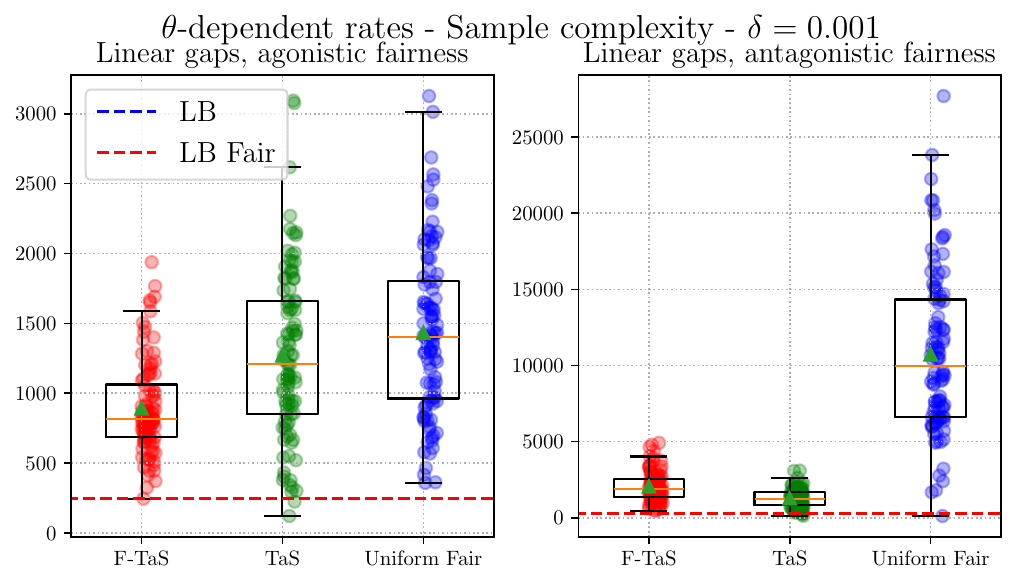}
    \caption{Synthetic model with $\theta$-dependent constraints. Sample complexity results for different values of $\delta$ are shown in each row.}
    \label{fig:synthetic_thetadep_samplecomplexity}
\end{figure}

\paragraph{Fairness violation} In \cref{fig:synthetic_thetadep_violations} we depict an aggregate distribution of the fairness violation $\rho(t)$ over all rounds. These plots offer a comprehensive understanding of the behavior of the algorithm.

\begin{figure}[h!]
    \centering
    \includegraphics[width=.4\linewidth]{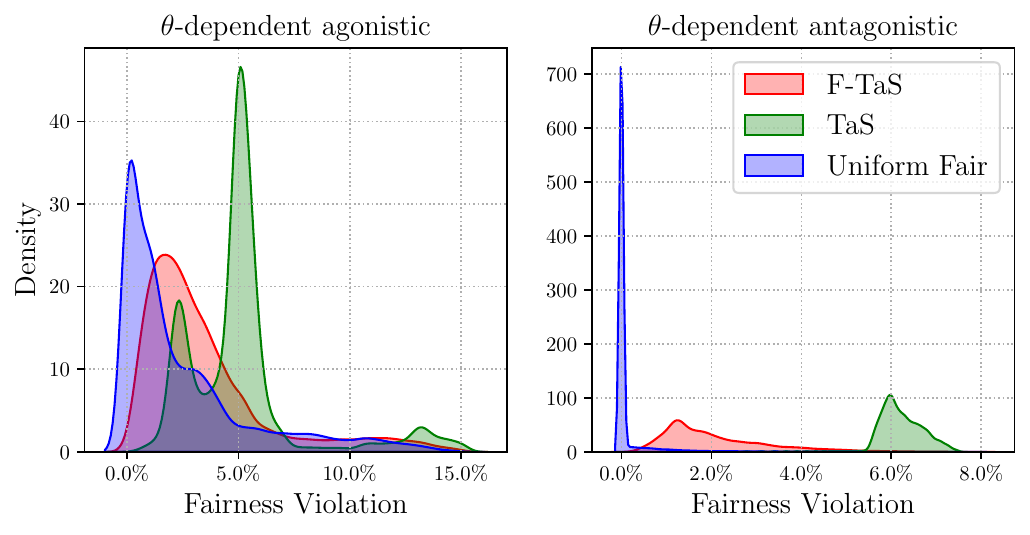}
    \includegraphics[width=.4\linewidth]{Figures/BAI/synthetic/thetadep_violation_0.01.pdf}
    \includegraphics[width=.4\linewidth]{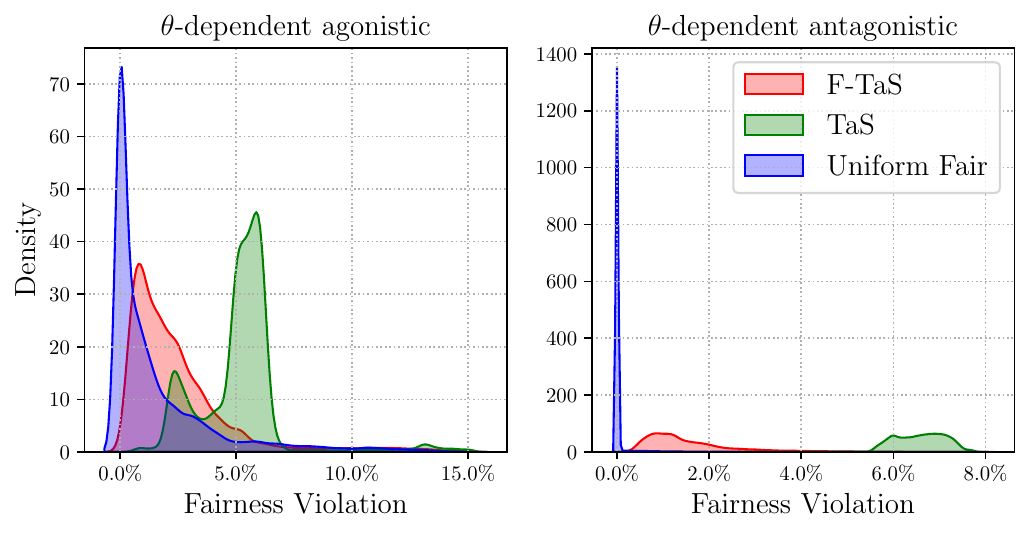}
    \caption{Violations for the synthetic experiments with $\theta$-dependent constraints for different values of $\delta$ in each row. Each subplot illustrates the distribution of maximum violation  $\rho(t) = (\max_a p_a(\theta)-N_a(t)/t)+$, across all rounds $t \leq \tau_\delta$ and  experimental runs.}
    \label{fig:synthetic_thetadep_violations}
\end{figure}

\subsection{Wireless Scheduling}
This appendix is organized as follows. In App. \ref{sec:additional_results_scheduling} we report additional experimental results on the wireless scheduling use-case and in App. \ref{sec:details_scheduling} we present more details on the experimental setup.

\subsubsection{Additional Numerical Results}
\label{sec:additional_results_scheduling}
%In this appendix we provide additional numerical results on the wireless scheduling experiments. More precisely
We report extended results on (i) \textit{optimal allocations}, (ii) \textit{sample complexity}, and (iii) fairness violations in all the experimental setup described at the beginning of this appendix. 

\paragraph{Allocations} In \cref{fig:allocations_scheduling_complete} are depicted (i) the optimal unconstrained allocation $w^\star$, (ii) the optimal fair allocations $w_p^\star$, and the fairness constraints $(p_a)_{a\in [K]}$. In the agonistic case, we see how the fairness rates are closely related to the optimal exploration, while in the antagonistic one are inversely proportional.

\begin{figure}
    \centering
    \includegraphics[width = 0.8\linewidth]{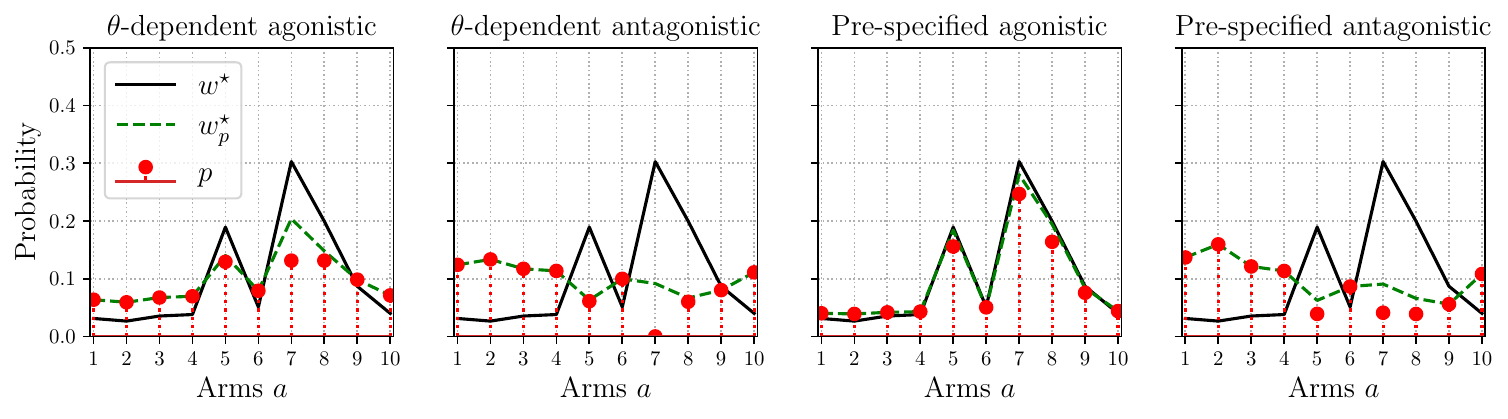}
    \caption{Wireless scheduling experiments: optimal unconstrained allocation $w^\star$, optimal fair allocations $w_p^\star$, and  fairness rates $(p_a)_{a\in [K]}$ for both pre-specified $\theta$-dependent constraints in the agonistic and antagonistic setting.}
    \label{fig:allocations_scheduling_complete}
\end{figure}

\paragraph{Sample complexity} In \cref{fig:sample_complexity_scheduling_complete} we show the boxplots for the sample complexity results. The points in each figure shows the realization of the sample complexity for each run. 

\paragraph{Fairness violation} In \cref{fig:fairness_violations_complete_scheduling} we depict an aggregate distribution of the fairness violation $\rho(t)$ over all rounds. These plots offer a comprehensive understanding of the behavior of the algorithm.

\subsubsection{Detailed Experimental Setting}
\label{sec:details_scheduling}
In this appendix, we present additional details on the scheduling experiments. 

\paragraph{Simulator} We test our F-BAI algorithm using mob-env, an
open-source simulation environment \cite{Schneider2022mobile}
based on the gymnasium interface. The simulator environment consist of a mobile network with a set of $K$ UEs and a BS. The BS is equipped with an antenna placed at a high of $h$ m. The antenna operates at a carrier frequency of $f$ Hz and the channel bandwidth is set at $W$ Hz. The size of the sector considered in the experiments is set at $L$ m$^2$. We report the configuration used in our experiments in Tab. \ref{tab:simulator_setup}.

\begin{table}[H]
\centering
\caption{Simulator parameters.}
\begin{tabular}{lcl}
\toprule
\textsc{Parameter} & \textsc{Symbol} & \textsc{Value} \\ \hline
Number of UEs & $K$ & $10$ \\
Bandwidth & $W$ & $9$ MHz \\
Carrier frequency & $f$ & $2500$ MHz \\
Antenna height & $h$ & $50$ m \\
Sector size & $L$ & $0.4 \text{ km}^2$ \\ \bottomrule
\label{tab:simulator_setup}
\end{tabular}
\end{table}
\normalsize
%Once the user positions and network parameters are provided, the simulator computes the path loss in the urban environment using the Okomura-Hata propagation model [22], and computes a set of performance indicators. 
In our experiments we base the definition of our reward function on the sum-throughput, an important metric detailed in the following.

\begin{figure}
    \centering
    \includegraphics[width=.67\linewidth]{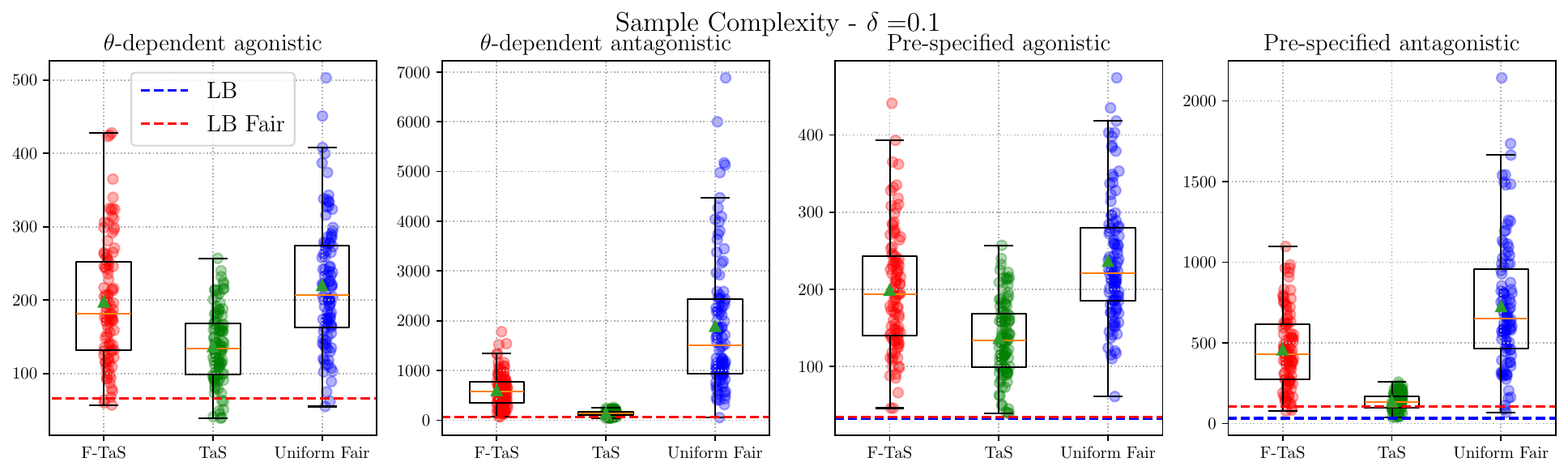}
    \includegraphics[width=.67\linewidth]{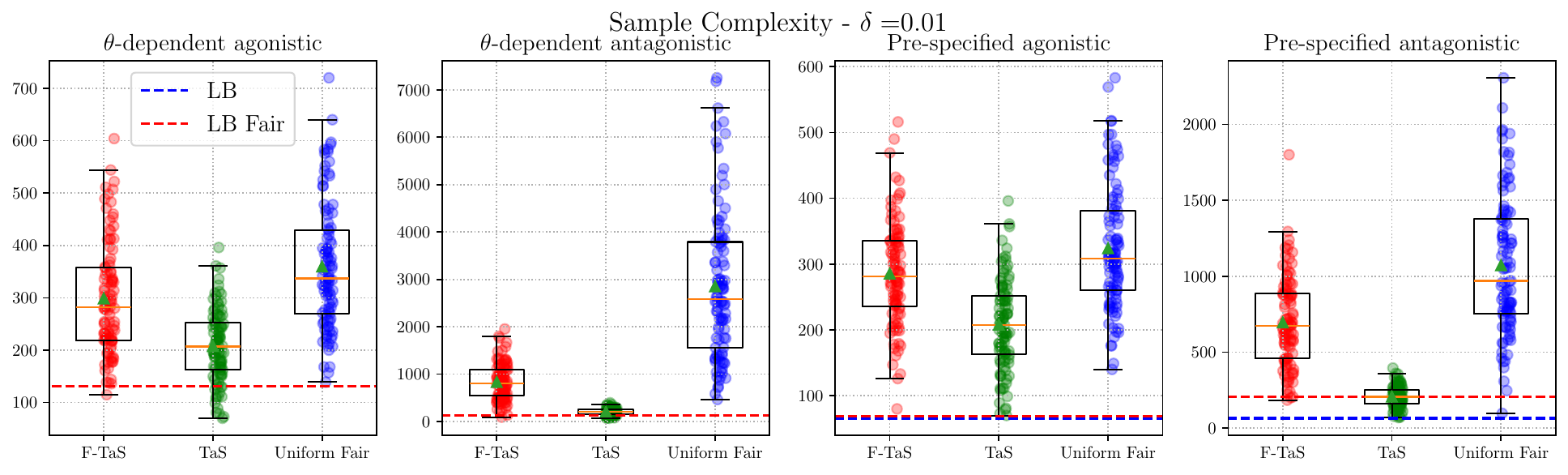}
    \includegraphics[width=.67\linewidth]{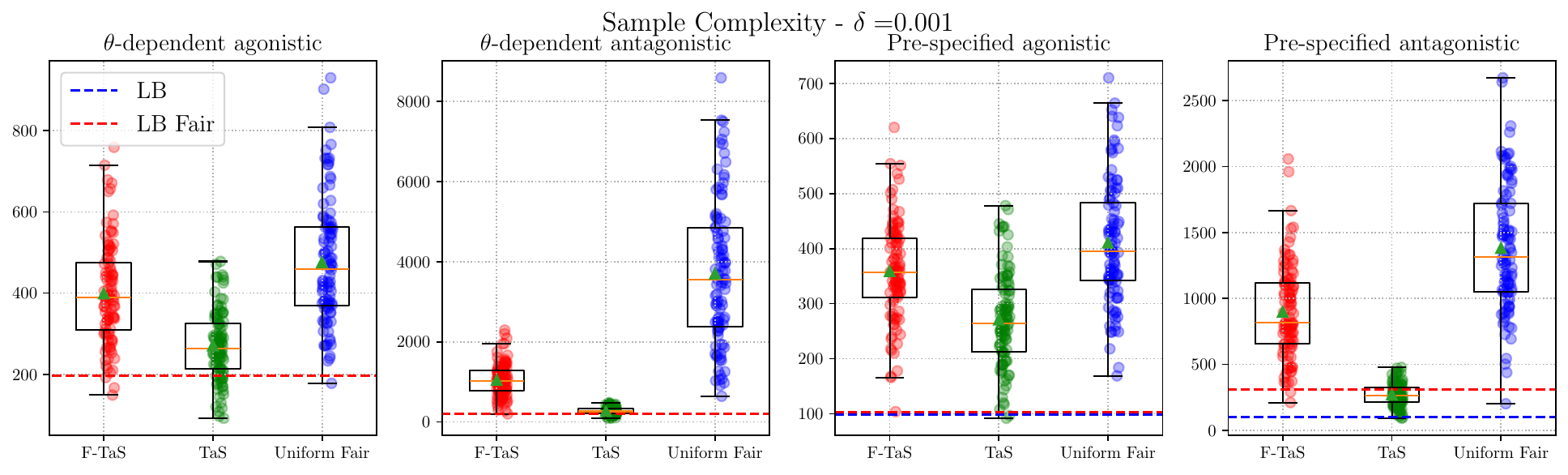}
    \caption{Wireless scheduling experiments: sample complexity results for different values of $\delta$.}
    \label{fig:sample_complexity_scheduling_complete}
\end{figure}
\begin{figure}[H]
    \centering
    \includegraphics[width=.67\linewidth]{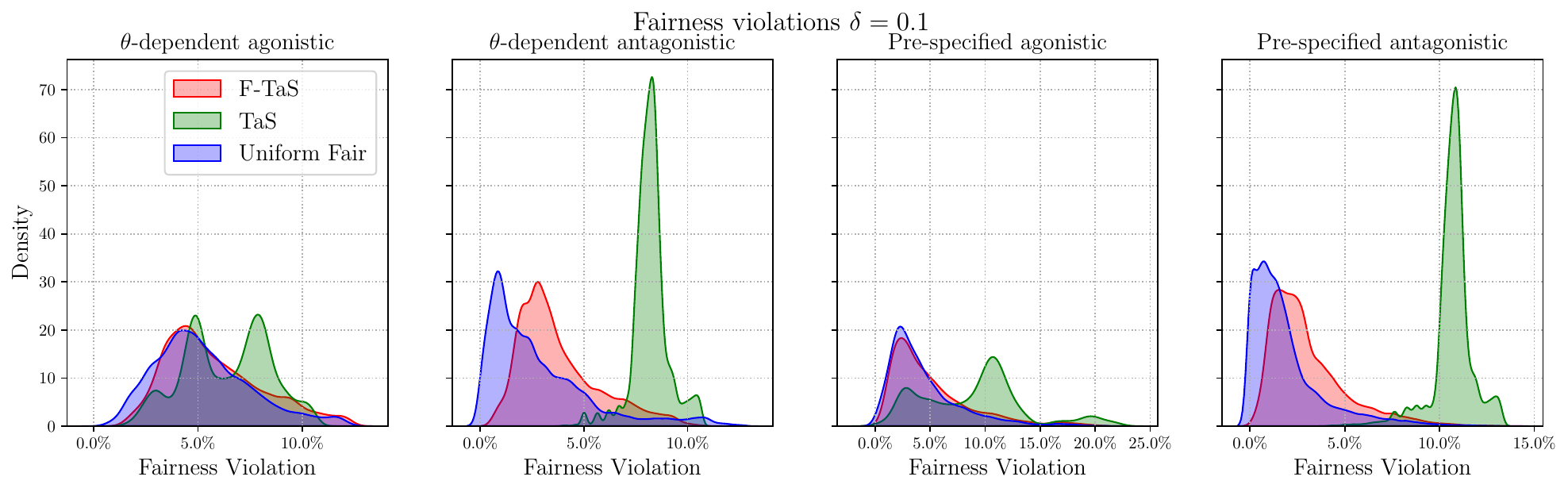}
    \includegraphics[width=.67\linewidth]{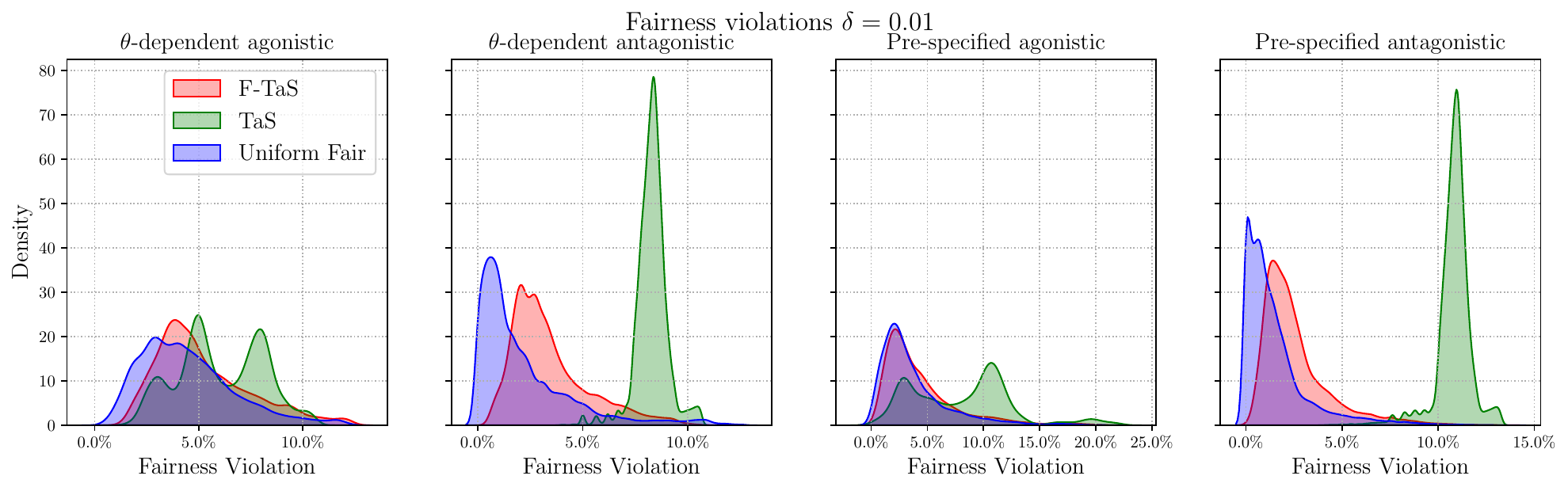}
    \includegraphics[width=.67\linewidth]{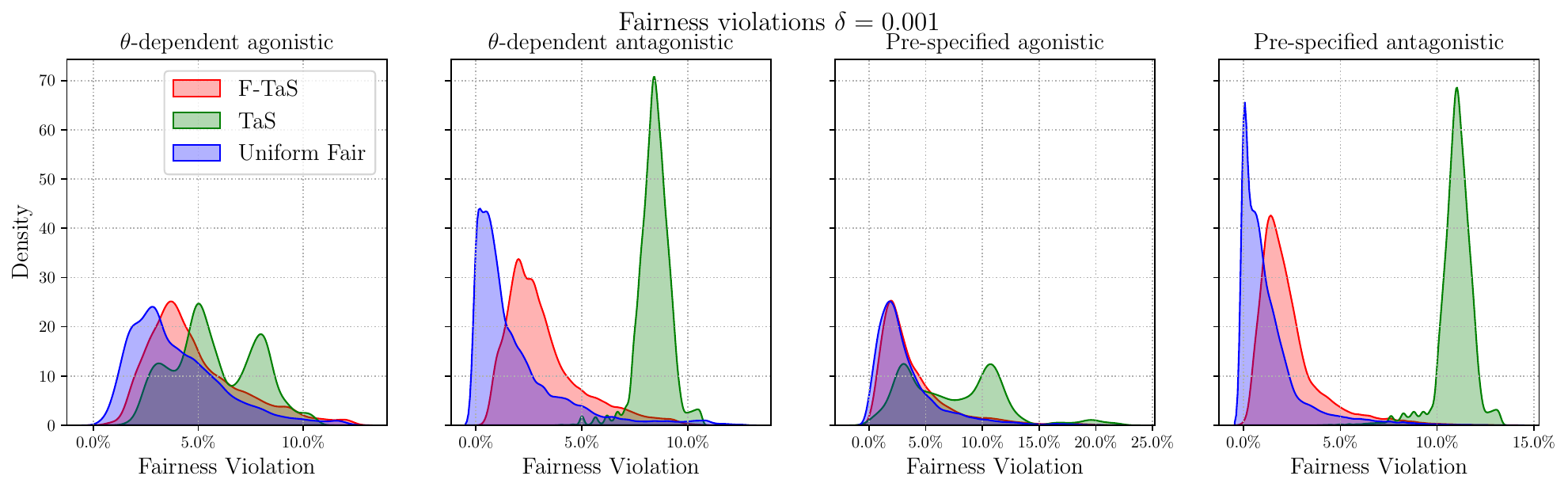}
    \caption{Violations for the scheduling experiments different values of $\delta$ in each row. Each subplot illustrates the distribution of maximum violation  $\rho(t) = (\max_a p_a(\theta)-N_a(t)/t)+$, across all rounds $t \leq \tau_\delta$ and  experimental runs.}
    \label{fig:fairness_violations_complete_scheduling}
\end{figure}

\paragraph{Throughput} The throughput $T_{u,t}$ of a UE $u\in\mathcal{U}$ at round $t\ge 1$, is formally defined in terms of the Signal-to-Noise Ratio (SNR), a metric that measures the quality of a signal in the presence of noise. Specifically, denote the SNR of a UE $u\in\mathcal{U}$ at time $t\ge 1$ is defined as $\text{SNR}_{u,t} = \frac{P_{\text{TX}}}{P_{\text{S}}}$, where $P_{\text{S}}$ and $P_{\text{N}}$ are the transmitted signal and noise power, respectively. Then the throughput (or rate) is expressed as 
$$
T_{u,t} = W\log_2(1+\text{SNR}).
$$
%where $W$ is the channel bandwidth [Hz]. 
%The throughput is definescreend as the number of bits transmitted by all the through the available channels within a given time frame.

\paragraph{Additional details} Although different works in the literature assume that a single UE can be scheduled at each time \cite{nguyen2019scheduling}, 
we mention that more complex formulations allow for the BS to schedule a subset of the UEs at each round (see e.g., \cite{Li19combinatorial}). This extension yields an interesting combinatorial structure in the action selection. However, analyzing our fair bandit framework for such combinatorial bandit structures is out of the scope of this paper and is left as a future work. 
\clearpage 
\renewcommand{\thetheorem}{B.\arabic{theorem}}
\renewcommand{\thelemma}{B.\arabic{lemma}}

\newpage

\section{Sample Complexity Lower bound and the Price of Fairness}
\label{sec:lower_bound_price_of_fairness}

In this appendix, we prove the sample complexity lower bound (Theorem \ref{th:lb}), in App. \ref{app:proof_lb}, and the upper bound on the price of fairness $T^\star_p/T^\star$ (Lemma \ref{lem:ub_ratio}), in App. \ref{app:proof_ub_ratio}. We also discuss the price of fairness for various specific bandit instances (App. \ref{subsec:pricefairness} and App. \ref{app:price_fairness_additional}) and provide an additional result on the characterization of the optimal allocations in fair BAI (App. \ref{sec:characterize_lb}). 
\subsection{Lower Bound: proof of Theorem \ref{th:lb}}
\label{app:proof_lb}
The proof is a straightforward extension of the one in the plain bandit setting in \cite{garivier2016optimal}. The main difference is that, due to the $p$-fair $\delta$-PAC definition, the allocations must satisfy the conditions $w_a \ge p_a(\theta)$ for all $a\in[K]$. We sketch the main steps in the following.

\begin{proof}
Consider a $p$-fair $\delta$-PAC algorithm $(a_t,\tau, \hat{a}_{\tau_{\delta}})$, and define the set of confusing parameters $B(\theta) = \{ \lambda\in \Theta: a^\star_{\theta} \neq a^\star_{\lambda} \}$, where $a_\theta^\star =\arg\max_a \theta_a$. Let $\mathcal{E}_{\theta} = \{\hat{a}_{\tau_{\delta}} = a^\star_{\theta}\}$, and note that for all $\theta$,%, and similarly $a_\lambda^\star$. 
$$
\mathbb{P}_{\theta}(\mathcal{E}_{\theta}) \ge 1-\delta,
$$
while for all $\lambda \in B(\theta)$ we have
$$
\mathbb{P}_{\lambda}(\mathcal{E}_{\theta}) \le \delta. 
$$
By Lemma 1 \cite{garivier2016optimal}, for any a.s. finite stopping time  $\tau_{\delta}$, we have that 
$$
\sum_{a\in[K]} \mathbb{E}_\theta[N_a(\tau_{\delta})] \frac{(\theta(a)-\lambda(a))^2}{2}\ge \mathrm{kl}(1-\delta,\delta).
$$
By letting $w_a = \frac{\mathbb{E}_\theta[N_a(\tau_\delta)]}{E_\theta[\tau_\delta]}$, we can rewrite the previous equation as 
$$
\mathbb{E}_\theta[\tau_{\delta}] \sum_{a\in[K]} w_a \frac{(\theta(a)-\lambda(a))^2}{2}\ge \mathrm{kl}(1-\delta,\delta). 
$$
By optimizing over the set of confusing parameters we get 
$$
\mathbb{E}_\theta[\tau_{\delta}] \min_{a\neq a^\star_{\theta}}\frac{\Delta(a)^2}{2(w_{a^\star_{\theta}}^{-1} + w_a^{-1})}\ge \mathrm{kl}(1-\delta,\delta). 
$$

\paragraph{Pre-specified fairness} As the lower bound holds for any $p$-fair $\delta$-PAC algorithm, we must have that \eqref{eq:constraint_bai_constant_p} holds, and hence $\mathbb{E}[N_a(\tau_\delta)]/\mathbb{E}[\tau_\delta] \ge p_a$. The result is finally obtained by optimizing the allocations over the set of clipped allocations $\Sigma_p = \{w \ge p: \sum_{a\in\mathcal{A}} w_a = 1\}$.

\paragraph{$\theta$-dependent fairness} As the lower bound holds for any asymptotically $p(\theta)$-fair $\delta$-PAC algorithm we must have that $\liminf_{\delta \to 0} \mathbb{E}[N_a(\tau_\delta)]/\mathbb{E}[\tau_\delta] \ge p_a(\theta)$. Hence, the result is finally obtained by optimizing the allocations over the set $\Sigma_p = \{w\ge p(\theta): \sum_{a} w_a = 1\}$ and letting $\delta \to 0$. 

%As the lower bound holds for any $p$-fair $\delta$-pac algorithm, we must have that $w_a \ge p_a(\theta)$, $\forall a \in[K]$. Hence, the result is finally obtained by optimizing the allocations over the set $\Sigma_p = \{w\ge p(\theta): \sum_{a} w_a = 1\}$. 
\end{proof}

\subsection{Sample-path Fairness}
\label{app:sample_path_fairness}
As mentioned in \cref{sec:fair_bai}, the fairness constraints introduced in \eqref{eq:constraint_bai_constant_p}-\eqref{eq:constraint_bai_ptheta} may seem unusual, as the expectation is taken separately on the numerator and denominator, rather than considering $\mathbb{E}_\theta\left[\frac{N_a(\tau_\delta)}{\tau_\delta}\right]$. This latter expression, which we refer to as "\textit{sample-path fairness}", is arguably a more intuitive definition because it evaluates fairness on each  sample path. 

\paragraph{Relations between fairness definitions} We now show that it is possible to relate the two expressions in a simple way. First, a simple application of Jensen's inequality yields
\[
\frac{\mathbb{E}_\theta[N_a(\tau_\delta)]}{\mathbb{E}_\theta[\tau_\delta]} \leq \mathbb{E}_\theta[N_a(\tau_\delta)]\mathbb{E}_\theta\left[\frac{1}{\tau_\delta}\right].
\]
Then, combining  this last inequality with the fact that ${\rm Cov}_\theta\left(N_a(\tau_\delta), 1/\tau_\delta\right)= \mathbb{E}_\theta\left[\frac{N_a(\tau_\delta)}{\tau_\delta}\right]- \mathbb{E}_\theta[N_a(\tau_\delta)]\mathbb{E}_\theta\left[\frac{1}{\tau_\delta}\right]$, one concludes that 
\[
\frac{\mathbb{E}_\theta[N_a(\tau_\delta)]}{\mathbb{E}_\theta[\tau_\delta]} \leq \mathbb{E}_\theta\left[N_a(\tau_\delta)/\tau_\delta\right] - {\rm Cov}_\theta\left(N_a(\tau_\delta), 1/\tau_\delta\right).
\]

Henceforth, an algorithm guaranteeing a small covariance $ {\rm Cov}_\theta\left(N_a(\tau_\delta), 1/\tau_\delta\right)$ ensures that the sample-path fairness is approximately guaranteed. In the following paragraph we list some properties.

\paragraph{Further properties}
In general, for any algorithm it is possible to say that the average covariance is negative. In fact, we find that
\[
\sum_a{\rm Cov}_\theta\left(N_a(\tau_\delta), 1/\tau_\delta\right)= 1- \mathbb{E}_\theta[\tau_\delta]\mathbb{E}_\theta\left[1/\tau_\delta\right] \leq 0,
\]
since $\mathbb{E}_\theta[\tau_\delta]\mathbb{E}_\theta\left[1/\tau_\delta\right]\geq 1$.  Furthermore, we also have the following result stating that any algorithm that is $p$-fair (resp. $p(\theta)$-fair) for all $t\ge 1$ is also $p$-sample-path fair (resp. $p(\theta)$-sample-path fair). %, and the covariance is negative for each arm.

\begin{lemma}\label{lemma:algo_fairness_simple}
Consider an algorithm satisfying $\mathbb{E}_\theta[N_a(t)]\geq p_a(\theta) t$, for all $a\in [K], t\geq 1$. Then,  for all $a$: 
    \begin{itemize}
    \item[(I)]$\frac{\mathbb{E}_\theta[N_a(\tau_\delta)]}{\mathbb{E}_\theta[\tau_\delta]} \ge p_a(\theta) $,
    \item[(II)]$\mathbb{E}_\theta\left[\frac{N_a(\tau_\delta)}{\tau_\delta}\right] \ge p_a(\theta)$.
    %\item[(III)] ${\rm Cov}_\theta\left(N_a(\tau_\delta), 1/\tau_\delta\right)\leq0$,
    \end{itemize}

\end{lemma}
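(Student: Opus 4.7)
The plan is to interpret the lemma's averaged hypothesis as coming from the stronger pathwise condition $\pi_t(a) := \mathbb{P}_\theta(a_t=a \mid \mathcal{F}_{t-1}) \ge p_a(\theta)$ almost surely, which is exactly what \textsc{F-TaS} guarantees: its sampling rule $\pi(t) = (1-\epsilon_t) w_p^\star(t) + \epsilon_t \pi_c$ is a convex combination of vectors that coordinatewise dominate $p$, since $w_p^\star(t) \in \Sigma_p$ and $\pi_c$ was designed to dominate $p$ as well. Under this form, the process $S_t := N_a(t) - p_a(\theta) \, t$ is a submartingale with $S_0 = 0$, because its conditional increment equals $\pi_{t+1}(a) - p_a(\theta) \ge 0$ a.s.

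For claim (I), I would apply the optional stopping theorem to the bounded truncation $\tau_\delta \wedge n$, obtaining $\mathbb{E}_\theta[S_{\tau_\delta \wedge n}] \ge 0$. The dominating bound $|S_{\tau_\delta \wedge n}| \le (1+p_a(\theta))\tau_\delta$ is integrable since $\mathbb{E}_\theta[\tau_\delta] < \infty$ by \cref{theorem:sample_complex_optimality}, so dominated convergence lets me pass to $n \to \infty$ and conclude $\mathbb{E}_\theta[S_{\tau_\delta}] \ge 0$; rearranging gives (I).

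For claim (II), I would write $N_a(\tau_\delta)/\tau_\delta - p_a(\theta) = S_{\tau_\delta}/\tau_\delta$ and decompose $S_t = M_t + D_t$, where
\[
M_t := \sum_{s=1}^t (\mathbf{1}\{a_s=a\} - \pi_s(a)), \qquad D_t := \sum_{s=1}^t (\pi_s(a) - p_a(\theta)),
\]
so that $M_t$ is a zero-mean martingale and $D_t \ge 0$ pathwise under the pathwise hypothesis. The contribution $\mathbb{E}_\theta[D_{\tau_\delta}/\tau_\delta] \ge 0$ is immediate. The delicate piece is a lower bound on $\mathbb{E}_\theta[M_{\tau_\delta}/\tau_\delta]$, since dividing a zero-mean martingale by a correlated stopping time can introduce negative bias; this is where I expect the main obstacle to sit. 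I would attack it via the telescoping identity $1/\tau_\delta = \sum_{k \ge \tau_\delta} 1/(k(k+1))$ together with Fubini, rewriting the term as $-\sum_k \mathbb{E}_\theta[M_k \mathbf{1}\{\tau_\delta > k\}]/(k(k+1))$, and then using the almost sure convergence $N_a(t)/t \to w_{p,a}^\star \ge p_a(\theta)$ implied by the forced exploration of \textsc{F-TaS} to argue that the drift contributed by $D_k$ asymptotically dominates the fluctuations of $M_k$ on the event $\{\tau_\delta > k\}$. This last step genuinely exploits the \textsc{F-TaS}-specific sampling structure, not just the abstract submartingale property, and it is where the bulk of the technical work will be concentrated.
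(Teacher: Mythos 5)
Your argument for (I) is essentially correct, but it takes a heavier and different route than the paper's, and it needs two inputs the lemma does not state: the pathwise reading $\mathbb{P}_\theta(a_t=a\mid\mathcal{F}_{t-1})\ge p_a(\theta)$ (the lemma only assumes the averaged bound $\mathbb{E}_\theta[N_a(t)]\ge p_a(\theta)t$) and the finiteness $\mathbb{E}_\theta[\tau_\delta]<\infty$ imported from \cref{theorem:sample_complex_optimality}. The paper instead proves (I) in one line by decomposing over the stopping events: $\mathbb{E}_\theta[N_a(\tau_\delta)]=\sum_t \mathbb{E}_\theta[N_a(t)\mid\tau_\delta=t]\,\mathbb{P}_\theta(\tau_\delta=t)\ge p_a(\theta)\sum_t t\,\mathbb{P}_\theta(\tau_\delta=t)=p_a(\theta)\mathbb{E}_\theta[\tau_\delta]$, i.e., it applies the per-round fairness bound conditionally on $\{\tau_\delta=t\}$ and sums; no optional stopping, truncation, or dominated-convergence bookkeeping is required.

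The genuine gap is in (II): you do not prove it. You correctly flag that the problematic term is $\mathbb{E}_\theta[M_{\tau_\delta}/\tau_\delta]$, which can indeed be negatively biased when the stopping time is correlated with the martingale, but the telescoping/Fubini/concentration plan you sketch is left entirely open (``the bulk of the technical work''), and as described it would not establish the lemma as stated: it leans on the almost-sure convergence $N_a(t)/t\to w^\star_{p,a}$, an F-TaS-specific property, whereas the lemma concerns any algorithm satisfying the round-wise fairness bound; the F-TaS-specific asymptotic route is exactly what the paper reserves for the separate result \cref{lemma:asymptotic_algo_fairness_new}. The paper's proof of (II) is again a one-line conditioning argument, $\mathbb{E}_\theta\big[N_a(\tau_\delta)/\tau_\delta\big]=\sum_t \mathbb{E}_\theta\big[N_a(t)/t\mid\tau_\delta=t\big]\,\mathbb{P}_\theta(\tau_\delta=t)\ge p_a(\theta)$, which applies the per-round bound conditionally on the stopping event and thereby sidesteps precisely the martingale-versus-stopping-time correlation you were worried about. (One may object that the hypothesis as literally written, an unconditional expectation bound, does not by itself license this conditional application; but that is the reading the paper takes, and under it both (I) and (II) are immediate with no martingale analysis at all, so your decomposition adds difficulty rather than resolving it.)
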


\begin{proof}
W.l.o.g. fix an arm $a \in [K]$. Then, (II) follows  from the fact that $$\mathbb{E}_\theta[N_a(\tau_\delta)] = \sum_{t}\mathbb{E}_\theta[N_a(t)|\tau_\delta=t]\mathbb{P}_\theta(\tau_\delta=t) \geq p_a(\theta)\sum_t  t \mathbb{P}_\theta(\tau_\delta=t) =p_a(\theta)\mathbb{E}_\theta[\tau_\delta]. $$ 

Similarly, (III) follows as 

$$\mathbb{E}_\theta\left[\frac{N_a(\tau_\delta)}{\tau_\delta}\right] = \sum_{t}\mathbb{E}_\theta\left[\frac{N_a(t)}{t}\Bigg|\tau_\delta=t\right] \mathbb{P}_\theta(\tau_\delta=t) \geq p_a(\theta)\sum_t \mathbb{P}_\theta(\tau_\delta=t) =p_a(\theta). $$ 

\end{proof}

A similar result also holds asymptotically as $\delta \to 0$ as long as $N_a(t)/t$ concentrates around $w_a(\geq p_a)$ sufficiently fast. We have the following (informal) result.

\begin{lemma}\label{lemma:asymptotic_algo_fairness_new}
    Consider an event ${\cal E}$ such that for some $T\geq T_\delta$ (with $\lim_{\delta \to 0}T_\delta \to \infty$) we have $(\tau_\delta \leq T) \supset {\cal E}$, and that for all $\varepsilon>0$ we have $\|N(t)/t - w_p^\star \|_\infty \leq \varepsilon$ under ${\cal E}$, where $N(t)=(N_a(t))_{a\in [K]}$ and $w_p^\star = (w_{p,a}^\star)_{a\in [K]}$. Assume furthermore that $\liminf_{\delta \to 0} \mathbb{P}_\theta({\cal E})\geq1$.  Then we have that: (I) $\liminf_{\delta \to 0}{\rm Cov}_\theta\left(N_a(\tau_\delta), 1/\tau_\delta\right)\leq0$; (II) $p_a(\theta) \leq \liminf_{\delta \to 0}\frac{\mathbb{E}_\theta[N_a(\tau_\delta)]}{\mathbb{E}_\theta[\tau_\delta]} $ and (III) $p_a(\theta)\leq \liminf_{\delta \to 0}\mathbb{E}_\theta\left[\frac{N_a(\tau_\delta)}{\tau_\delta}\right]$.
\end{lemma}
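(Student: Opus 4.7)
The plan is to establish (III), then (II), and finally (I), exploiting the two key features of $\mathcal{E}$: the uniform deadline $\tau_\delta \leq T$ and the tight allocation control $|N_a(\tau_\delta)/\tau_\delta - w_{p,a}^\star| \leq \varepsilon$, together with $w_{p,a}^\star \geq p_a(\theta)$ (because $w_p^\star \in \Sigma_p$) and $\mathbb{P}_\theta(\mathcal{E}) \to 1$ as $\delta \to 0$.

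For (III), I would bound
\[
\mathbb{E}_\theta[N_a(\tau_\delta)/\tau_\delta] \geq \mathbb{E}_\theta\!\left[(N_a(\tau_\delta)/\tau_\delta)\mathbf{1}_\mathcal{E}\right] \geq (w_{p,a}^\star - \varepsilon)\, \mathbb{P}_\theta(\mathcal{E}),
\]
take $\liminf_{\delta\to 0}$, and let $\varepsilon \to 0$, obtaining $\liminf \mathbb{E}_\theta[N_a(\tau_\delta)/\tau_\delta] \geq w_{p,a}^\star \geq p_a(\theta)$. Symmetrically, using $N_a(\tau_\delta)/\tau_\delta \leq 1$ on $\mathcal{E}^c$ and $\leq w_{p,a}^\star + \varepsilon$ on $\mathcal{E}$ yields the matching upper bound $\limsup \mathbb{E}_\theta[N_a(\tau_\delta)/\tau_\delta] \leq w_{p,a}^\star$, which I will reuse for (I).

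For (II), starting from $\mathbb{E}_\theta[N_a(\tau_\delta)] \geq (w_{p,a}^\star - \varepsilon)\mathbb{E}_\theta[\tau_\delta \mathbf{1}_\mathcal{E}]$ and dividing by $\mathbb{E}_\theta[\tau_\delta]$, the claim reduces to showing $\mathbb{E}_\theta[\tau_\delta \mathbf{1}_{\mathcal{E}^c}] = o(\mathbb{E}_\theta[\tau_\delta])$ as $\delta \to 0$, which I expect to be the main obstacle. A standard route is the Hölder estimate $\mathbb{E}_\theta[\tau_\delta \mathbf{1}_{\mathcal{E}^c}] \leq (\mathbb{E}_\theta[\tau_\delta^{1+\eta}])^{1/(1+\eta)}\mathbb{P}_\theta(\mathcal{E}^c)^{\eta/(1+\eta)}$, combined with (i) a uniform control on the $(1+\eta)$-moment of $\tau_\delta$ of the expected order $(\log(1/\delta))^{1+\eta}$ (in line with the finite-expectation result of \cref{theorem:sample_complex_optimality}) and (ii) a sufficiently fast decay $\mathbb{P}_\theta(\mathcal{E}^c) \to 0$. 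Once this uniform-integrability step is in place, letting $\varepsilon \to 0$ yields the sharper bound $\liminf \mathbb{E}_\theta[N_a(\tau_\delta)]/\mathbb{E}_\theta[\tau_\delta] \geq w_{p,a}^\star \geq p_a(\theta)$, which implies (II).

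For (I), I would combine the identity $\mathrm{Cov}_\theta(N_a(\tau_\delta),1/\tau_\delta) = \mathbb{E}_\theta[N_a(\tau_\delta)/\tau_\delta] - \mathbb{E}_\theta[N_a(\tau_\delta)]\mathbb{E}_\theta[1/\tau_\delta]$ with Jensen's inequality $\mathbb{E}_\theta[1/\tau_\delta] \geq 1/\mathbb{E}_\theta[\tau_\delta]$ to obtain
\[
\mathrm{Cov}_\theta(N_a(\tau_\delta),1/\tau_\delta) \leq \mathbb{E}_\theta[N_a(\tau_\delta)/\tau_\delta] - \frac{\mathbb{E}_\theta[N_a(\tau_\delta)]}{\mathbb{E}_\theta[\tau_\delta]}.
\]
Then, passing to $\liminf$ on the left and using the general inequality $\liminf(A-B) \leq \limsup A - \liminf B$ together with $\limsup \mathbb{E}_\theta[N_a(\tau_\delta)/\tau_\delta] \leq w_{p,a}^\star$ (from (III)) and $\liminf \mathbb{E}_\theta[N_a(\tau_\delta)]/\mathbb{E}_\theta[\tau_\delta] \geq w_{p,a}^\star$ (the sharper form of (II)), one obtains $\liminf \mathrm{Cov}_\theta(N_a(\tau_\delta),1/\tau_\delta) \leq w_{p,a}^\star - w_{p,a}^\star = 0$, which is (I).
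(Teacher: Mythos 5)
Your proposal follows essentially the same route as the paper's proof: restrict attention to $\mathcal{E}$, use the $\varepsilon$-closeness of $N_a(\tau_\delta)/\tau_\delta$ to $w_{p,a}^\star$ together with the feasibility bound $w_{p,a}^\star\ge p_a(\theta)$, invoke $\mathbb{P}_\theta(\mathcal{E})\to 1$, and finally let $\varepsilon\to 0$. Your argument for (III) is exactly the paper's and is complete, since $N_a(\tau_\delta)/\tau_\delta\in[0,1]$ makes the contribution of $\overline{\mathcal{E}}$ harmless. For (II), you have correctly isolated the crux that the paper (which explicitly presents this lemma as informal) glosses over: the paper simply asserts that $\mathbb{E}_\theta[\tau_\delta]$ is asymptotically captured by $\mathbb{E}_\theta[\tau_\delta\mid\mathcal{E}]\mathbb{P}_\theta(\mathcal{E})$, which is precisely your requirement $\mathbb{E}_\theta[\tau_\delta\mathbf{1}_{\overline{\mathcal{E}}}]=o(\mathbb{E}_\theta[\tau_\delta])$. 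Your H\"older/uniform-integrability route is a reasonable way to close this, but note that neither a $(1+\eta)$-moment bound on $\tau_\delta$ nor a decay rate for $\mathbb{P}_\theta(\overline{\mathcal{E}})$ is among the lemma's hypotheses, so as written that step of (II) remains a conditional sketch --- on par with the paper's informal assertion. The rigorous version of this argument in the paper lives in the proof of \cref{prop:p_delta_PAC}, where $\mathcal{E}$ is a concrete concentration event with explicit exponential probability bounds.

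One genuine improvement is available for (I): your derivation routes through the sharpened form of (II) and therefore inherits the uniform-integrability gap, whereas (I) can be decoupled from it. Restricting each factor to $\mathcal{E}$ and using the conditional Jensen inequality $\mathbb{E}_\theta[\tau_\delta\mid\mathcal{E}]\,\mathbb{E}_\theta[\tau_\delta^{-1}\mid\mathcal{E}]\ge 1$ gives
\[
\mathbb{E}_\theta[N_a(\tau_\delta)]\,\mathbb{E}_\theta[1/\tau_\delta]\;\ge\;\mathbb{E}_\theta[N_a(\tau_\delta)\mathbf{1}_{\mathcal{E}}]\,\mathbb{E}_\theta[\tau_\delta^{-1}\mathbf{1}_{\mathcal{E}}]\;\ge\;(w_{p,a}^\star-\varepsilon)\,\mathbb{P}_\theta(\mathcal{E})^2,
\]
and combining this with your bound $\limsup_{\delta\to 0}\mathbb{E}_\theta[N_a(\tau_\delta)/\tau_\delta]\le w_{p,a}^\star+\varepsilon$ in the covariance identity yields $\liminf_{\delta\to 0}{\rm Cov}_\theta\left(N_a(\tau_\delta),1/\tau_\delta\right)\le 2\varepsilon$ for every $\varepsilon>0$, hence (I), using only $\mathbb{P}_\theta(\mathcal{E})\to 1$ and no moment control on $\tau_\delta$.
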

\begin{proof}
    Observe that 
    \begin{align*}
        \mathbb{E}_\theta[N_a(\tau_\delta)] &= \mathbb{E}_\theta[N_a(\tau_\delta)|{\cal E}]\mathbb{P}_\theta({\cal E}) + \mathbb{E}_\theta[N_a(\tau_\delta)|\overline {\cal E}]\mathbb{P}_\theta(\overline{\cal E}),\\
        &\geq (w_{p,a}^\star -  \varepsilon)\mathbb{E}_\theta[\tau_\delta|{\cal E}]\mathbb{P}_\theta({\cal E}).
    \end{align*}
    Since $\lim_{\delta \to 0}\mathbb{P}_\theta({\cal E})=1$ then as $\delta \to 0$ we have $\mathbb{E}_\theta[\tau_\delta] \to \liminf_{\delta \to 0} \mathbb{E}_\theta[\tau_\delta|{\cal E}]$ as $\delta$ goes to $0$. Hence
    \[
    \liminf_{\delta \to 0}\frac{\mathbb{E}_\theta[N_a(\tau_\delta)]}{\mathbb{E}_\theta[\tau_\delta]}\geq (w_{p,a}^\star -  \varepsilon) \geq p_a(\theta) - \varepsilon.
    \]
    Letting $\varepsilon \to 0$ yields (II). Similarly, one can also find $\liminf_{\delta \to 0}\mathbb{E}_\theta\left[\frac{N_a(\tau_\delta)}{\tau_\delta}\right]\geq p_a(\theta)$ from which follows (III) and (I).
\end{proof}

\subsection{Price of Fairness: proof of Lemma \ref{lem:ub_ratio}}
\label{app:proof_ub_ratio}
\begin{proof}
Consider the following feasible allocation $\tilde{w}\in \Sigma_p$:
$\tilde{w}_a = p_a + \frac{1-\psum}{K}$, where we denote $p_a = p_a(\theta)$ and $\psum=\psum(\theta)$ for the sake of simplicity.

We can write
$$
T^\star_{p} = {\color{black} 2}\min_{w\in\Sigma_p} \max_{a \neq a^\star} \frac{w_a^{-1} + w_{a^\star}^{-1}}{\Delta_a^2} \le {\color{black} 2}\min_{w\in \Sigma_p} \max_{a \neq a^\star} \frac{w_a^{-1} + w_{a^\star}^{-1}}{\Delta^2_{\min}} \le {\color{black} 2}\max_{a \neq a^\star} \frac{\tilde{w}_a^{-1} + \tilde{w}_{a^\star}^{-1}}{\Delta^2_{\min}} = \frac{{\color{black} 2}}{\Delta_{\min}^2}(\tilde{w}_{a^\star}^{-1} + \max_{a\neq a^\star}  \tilde{w}_a^{-1}) . %\le K \left ( 1 + \frac{\alpha(1-\alpha)}{\alpha - (1-\alpha)\Bar{p}}\right).
$$
Let $p_{\min}= \min_{a\in[K]: p_a > 0} p_a$, then

$$\tilde{w}_{a^\star}^{-1} + \max_{a\neq a^\star}  \tilde{w}_a^{-1} 
    \leq \frac{K}{Kp_{a^\star} + (1-\psum)} + \frac{K}{Kp_{\min} + (1-\psum)} \le \frac{2K}{Kp_{\min} + (1-\psum)}
$$

On the other hand, by App. A.4 in \cite{garivier2016optimal}, we have that 
$$T^\star \ge{\color{black} 2} \sum_{a\in[K]}\frac{1}{\Delta_a^2} \ge \frac{{\color{black} 2}K}{\Delta_{\max}^2}.$$ 
Hence, we find
\[
\frac{T^\star_{p}}{T^\star} \le \frac{\Delta_{\max}^2}{\Delta_{\min}^2} \frac{2}{Kp_{\min} + 1-\psum}.
\]

Now, we consider two separate cases. If $K p_{\rm min} \ge 1-\psum$ we get 
$$
\frac{2}{Kp_{\min} + 1-\psum} \le \frac{1}{1-\psum} ,
$$
%and hence $\frac{T^\star_{p}}{T^\star} \le   \frac{\Delta_{\max}^2}{\Delta_{\min}^2} \frac{1}{ 1-\psum}$. 

Otherwise, if $1 -\psum > K p_{\rm min}$ we have 
$$
\frac{2}{Kp_{\min} + 1-\psum} < \frac{1}{Kp_{\min}} ,
$$
and hence we can conclude 
\begin{equation}
   \label{eq:ub_ratio}
\frac{T^\star_{p}}{T^\star} \le   \frac{\Delta_{\max}^2}{\Delta_{\min}^2} \min\left(\frac{1}{1-\psum}, \frac{1}{Kp_{\min}}\right). 
\end{equation}

\end{proof}

\subsection{Price of Fairness: examples and specific instances}\label{subsec:pricefairness}
For specific bandit instances, we can obtain tighter bounds on $T^\star_p/T^\star$. We consider the following cases.

\paragraph{1. The 2-armed bandits case} In the $2$-armed bandit case it is known that the optimal allocation in the plain bandit setting without fairness constraints satisfies $w^\star = (1/2,1/2)$ \cite{garivier2016optimal}. In the fair BAI setting, we have that $w_p^\star \neq w^\star$, when either $p_1$ or $p_2$ are greater than $1/2$ (naturally the case where both $p_1$ and $p_2$ are greater than $1/2$ is unfeasible). Without loss of generality let $p_1 > 1/2$. Then the optimal allocations satisfy $w_p^\star = (p_1, 1-p_1)$, and we have that
\begin{equation}
\label{eq:ub_K_2}
\frac{T^\star_{p}}{T^\star} = \frac{1}{4p_1(1-p_1)}.	
\end{equation}
This ratio quantifies the price of fairness in MAB and naturally, it is minimized for $p_1 = 1/2$. Note that when particularizing our bound in \eqref{eq:ub_ratio} for $K = 2$, with $p_1 > 1/2$, we have
$$
\frac{T^\star_{p}}{T^\star} \le \frac{2}{(1-p_1)}. 
$$

\paragraph{2. The case of unitary $p$}
Note that if $p$ is such that $\sum_{a\in[K]} p_a = 1$, the optimal solution satisfies $w^\star_{p,a} = p_a$, for all $a\in[K]$, and hence 
$$
T^\star_{p} ={\color{black} 2} \max_{a\neq a^\star} \frac{p_a^{-1} + p_{a^\star}^{-1}}{\Delta_{a}^2}.
$$

In this case, we have $ T^\star_{p} \le \frac{2}{p_{\rm min}\Delta_{\max}^2},$
and hence 
\begin{equation}
\frac{T^\star_{p}}{T^\star} \le  \frac{1}{Kp_{\min}} \left(\frac{\Delta_{\max}}{\Delta_{\min}}\right)^2,
\end{equation}
where $p_{\rm min} = \min_{a: p_a>0} p_a$.

\paragraph{3. The case $p_{a^\star}=0$ and $p_a\geq w_{a}^\star$ for $a\neq a^\star$} This scenario is important since it displays both dependencies on $\psum$ and $p_{\min} = \min_{a: p_a>0} p_a$.

Assuming $\psum<1$, if $p_{a^\star}=0$, and $p_a\geq w_{a}^\star$ for $a\neq a^\star$, by \cref{lem:opt_all_eq} we immediately have that $w_{p,a^\star}^\star = 1-\psum$ and $w_{p,a}^\star=p_a$ for $a\neq a^\star$.

We immediately conclude that $T_p^\star \leq \frac{1}{\Delta_{\rm min}^2}(\frac{1}{1-\psum} + \frac{1}{p_{\rm min}})$, and thus 
\begin{equation}
    \frac{T_p^\star}{T^\star} \leq \frac{1}{K}\frac{\Delta_{\rm max}^2}{\Delta_{\rm min}^2} \left(\frac{1}{1-\psum} + \frac{1}{p_{\rm min}}\right).
\end{equation}

We now discuss the example in Case 1 of \textsection\ref{sec:syntetic_exp} in more details. We consider the case where the values of $\theta=(\theta_a)_{a\in[K]}$ linearly range in $[0,1].$ In this case the minimum gap scales as $1/(K-1)$ and $\sum_a \Delta_a= K/2$, henceforth $p_{\rm min} = K p_0 \frac{1/(K-1)}{K/2} = \frac{2p_0}{K-1}$, while $\psum = K p_0$. Therefore $(1-\psum)^{-1}$ is the leading term for $p_0\to 1/K$ (see also \cref{fig:price_fairness_case1}), while $p_{\rm min}^{-1}$ is decreasing in $p_0$.

\paragraph{4. The case $p_{a^\star}=0$ and $p_a\geq w_{a}^\star$ for $a\neq a^\star$ with equal gaps} 
The previous scenario can be extended to the case of equal gaps, i.e., $\Delta_a =\Delta$ for all $a\neq a^\star$. In this particular case, we are able to exactly compute $T_p^\star/T^\star$. In fact, the solution to the lower bound problem $T^\star$ satisfies
\begin{equation}
    w_{a}^\star = \begin{cases}
         \dfrac{\sqrt{K-1}}{\sqrt{K-1} + K-1} & a=a^\star,\\
        \dfrac{1}{\sqrt{K-1} + K-1} &  a \neq a^\star.
    \end{cases}
\end{equation}
Therefore 
\begin{align*}
    T^\star & = \frac{1}{\Delta^2} \max_{a\neq a^\star} \frac{1}{w_a^\star} + \frac{1}{w_{a^\star}^\star}= \frac{\sqrt{K-1}+K-1}{\Delta^2} \cdot \frac{1+\sqrt{K-1}}{\sqrt{K-1}},\\
    &= \frac{1+\sqrt{K-1}}{\Delta^2} (1+\sqrt{K-1}),\\
    &= \frac{(1+\sqrt{K-1})^2}{\Delta^2}.
\end{align*}

Hence, using the fact that $T_p^\star = \frac{1}{\Delta^2}(\frac{1}{1-\psum} + \frac{1}{p_{\rm min}})$, the ratio $T_p^\star/T^\star$ becomes
\begin{equation}
    \frac{T_p^\star}{T^\star} = \frac{1-\psum +p_{\rm min}}{p_{\rm min}(1-\psum)(1+\sqrt{K-1})^2}.
\end{equation}
Lastly, we also recover the form of the previous upper bound
 $\frac{T_p^\star}{T^\star} \leq  \frac{1}{K-1} \left(\frac{1}{p_{\rm min}} + \frac{1}{1-\psum}\right).$

 \paragraph{5. The case $p_{a^\star} \geq w_{a^\star}^\star$ and $p_a\leq (1-p_{a^\star})/(K-1)$ for $a\neq a^\star$ with equal gaps} 
For the equal gap case, we can also study the scenario where $p_a$, for $a\neq a^\star$, is significantly small, and $p_{a^\star}  \geq w_{a^\star}^\star$. 
 
Similarly as before, if $p_a\in [0, \frac{1-p_{a^\star}}{K-1}]$, then we have $w_{p,a^\star}^\star = p_{a^\star}$ and $w_{p,a}^\star = c$ with $(K-1)c + p_{a^\star} = 1 \Rightarrow w_{p,a}^\star= (1-p_{a^\star})/(K-1)$ for all $a\neq a^\star$. Therefore $T_p^\star = \frac{1}{\Delta^2}(\frac{K-1}{1-p_{a^\star}} + \frac{1}{p_{a^\star}})$ and the ratio $T_p^\star/T^\star$ becomes
 \begin{equation}
      \frac{T_p^\star}{T^\star} = \frac{p_{a^\star}(K-2) + 1 }{p_{a^\star}(1-p_{a^\star})(1+\sqrt{K-1})^2}.
 \end{equation}
Note that in this case $\psum$ and $p_{\rm min}$ are equivalent. Furthermore, in this case the price of fairness is due to the additional sampling of the optimal arm at the cost of not sampling enough all the other arms.

\subsection{Additional results on Price of Fairness}
\label{app:price_fairness_additional}
In this appendix, we provide additional numerical results with the goal of quantifying numerically the price of fairness and the tightness of our bound on $T^\star_p/T^\star$ presented in \ref{lem:ub_ratio}. 

\subsubsection{The two regimes in the price of fairness}
\label{sec:the_two_regimes}
In this section, we illustrate the existence of two regimes of our upper bound on $T^\star_{p}/T^\star$. Indeed, each of the terms appearing in the bound ($1/Kp_{\min}$ and $1/(1-\psum)$) is tighter 
in different scenarios. 

To illustrate this phenomenon, we consider a set of instances where the sub-optimality gaps are fixed, i.e., $\Delta_a = \Delta$, for all $a\neq a^\star$. The fairness parameters are selected as $p = \lambda w^\star$, where $w^\star$ are the optimal allocations in the plain bandit setting (without fairness constraints), i.e., $w^\star$ are the allocations optimizing $T^\star$, and $\lambda\in [0,1]$ is a scaling parameter. Fig. \ref{fig:price_of_fairness} shows a plot of the value of $T^\star_p/T^\star$, and the terms appearing in our bound, i.e., $1/Kp_{\min}$ and $1/(1-\psum)$ when varying the parameter $\lambda$. 
\begin{figure}[ht!]
    \centering
    \includegraphics[width=0.45\linewidth]{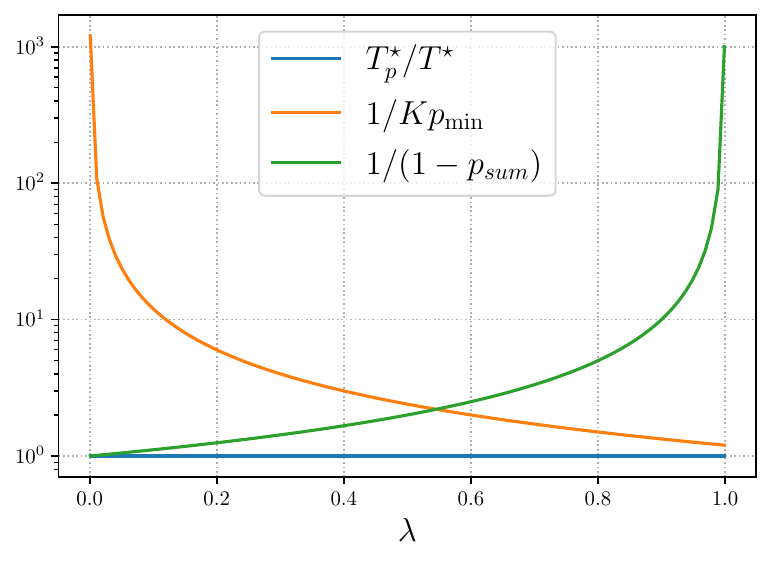}
    \caption{Illustration of the  two regimes in the upper bound on $T^\star_p/T^\star$.}
    \label{fig:price_of_fairness}
\end{figure}

Note that $T^\star_p/T^\star = 1$ as for all $\lambda$, we have $\lambda w_a^\star \le w_a^\star$, which implies  $w_a^\star = w_{p,a}^\star$. 
As it can be observed from the figure, generally we have two regimes: for low values of $\lambda$ the term $(1-\psum)^{-1}$ is tighter (as $\pmin$ will also have low value); as $\lambda$ increases, the term $(1-\psum)^{-1}$ (as $\psum$  approaches $1$) and the term $1/K\pmin$ provides a tighter bound. Our bound $T^\star_{p}/T^\star \le O(\min\{(1-\psum)^{-1},(K\pmin)^{-1}\})$ captures both these scenarios and provides a tighter bound. 

\subsubsection{Price of Fairness: scaling of \texorpdfstring{$(1-\psum)^{-1}$}{1psum}-dependent Constraints}
In this subsection we investigate the scaling of $T_p^\star/T^\star$, and how it is related to $1/(1-\psum)$. We study $4$ different cases, each for a different number of arms $K\in\{10,20,30\}$.
\begin{enumerate}
    \item In the first setting, we study a model where the rewards linearly ranging in $[0,1]$. The fairness constraints are constant, set to some value $p \in (0,1/K)$ for all arms $a\in [K]$. Results are depicted in \cref{fig:scaling_price_of_fairness_1}.
    \item In the second setting, we study a model where all the arms have equal gap $\Delta_a=0.1$. The fairness constraints are constant, set to some value $p \in (0,1/K)$ for all arms $a\in [K]$. Results are depicted in \cref{fig:scaling_price_of_fairness_1}.
    \item In the third setting we study a model with rewards linearly ranging in $[0,1]$. The fairness constraints are $\theta$-dependent. In particular, we compute it as follows
\begin{equation}\label{eq:pof_eq_fairness_theta} p_a(\theta) = pK \frac{f_a(\theta)}{\sum_b f_b(\theta)}, \hbox{ where }f_a(\theta) = 1 + \frac{1}{\Delta_a},
\end{equation}
with $f_{a^\star} = 1$. This makes the rates inversely proportional to the gap. However, since $f_{a^\star} = 1$, the optimal arm will have a very low fairness constraint, resulting in the algorithm sampling "good" arms at a very large rate. Results are depicted in \cref{fig:scaling_price_of_fairness_2}.
\item The last setting uses the same fairness constraint as the previous case, but the arms now have the same gap $\Delta_a=0.1,\; \forall a\in[K]$. Results are depicted in \cref{fig:scaling_price_of_fairness_2}.
\end{enumerate}
In all cases, we evaluate $T_p^\star/T^\star$ over different values of $p\in(0,1/K)$. From the results, we see a clear scaling in $(1-\psum)^{-1}$ when the rewards are linearly ranging, while the equal gap case is less affected by this parameter.

\begin{figure}[h!]
    \centering
    \includegraphics[width=.8\linewidth]{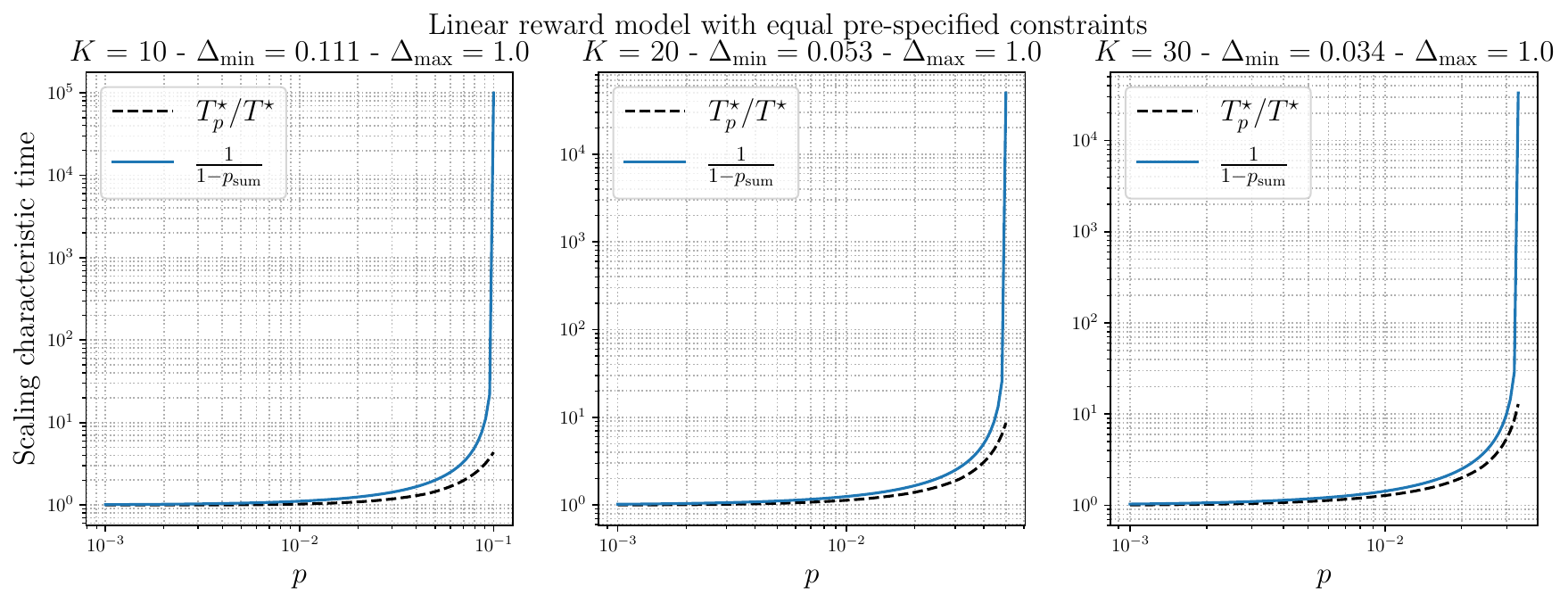}
    \includegraphics[width=.8\linewidth]{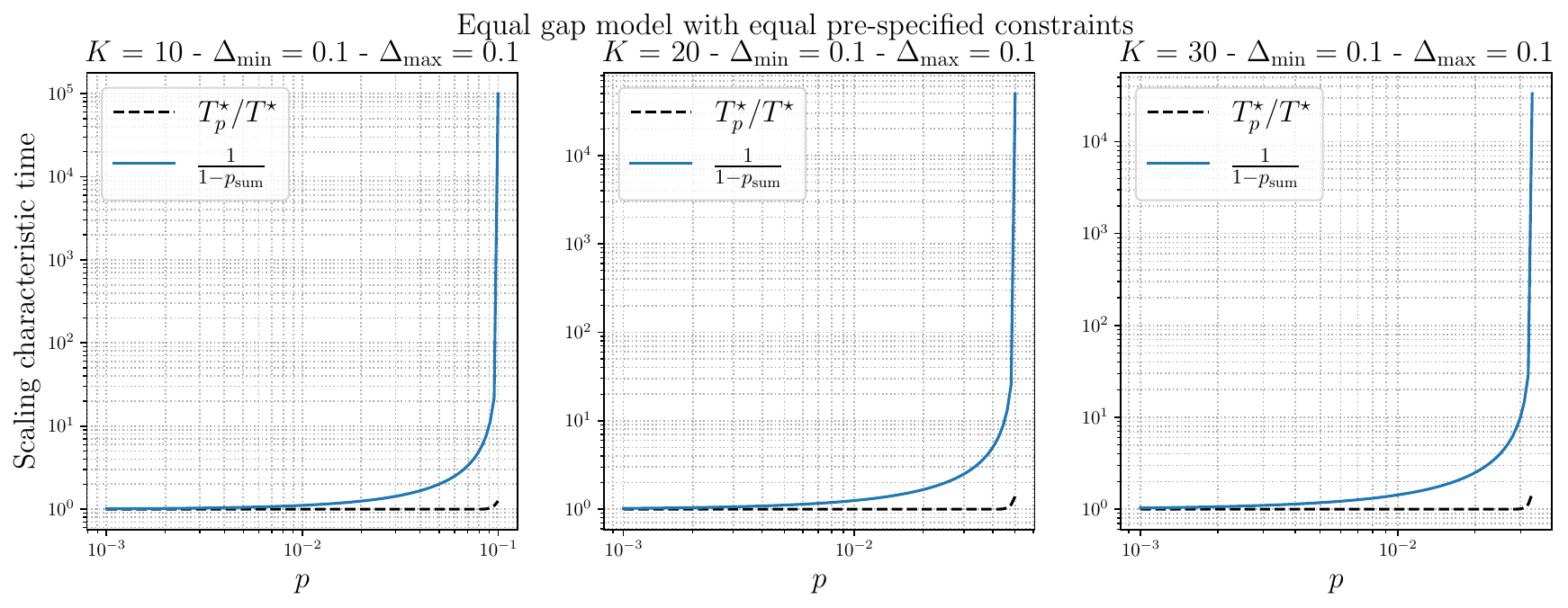}
    \caption{Scaling of $T_p^\star/T^\star$ for two different models with equal pre-specified constraints. All the arms have the same fairness constraint $p_a\geq p$. On the top we show results for the model with rewards linearly ranging in $[0,1]$. On the bottom we show results for the model with equal gaps $\Delta_a=0.1$. From left to right we depict results for different number of arms $K\in\{10,20,30\}$. }
    \label{fig:scaling_price_of_fairness_1}
\end{figure}

\begin{figure}[h!]
    \centering
    \includegraphics[width=.8\linewidth]{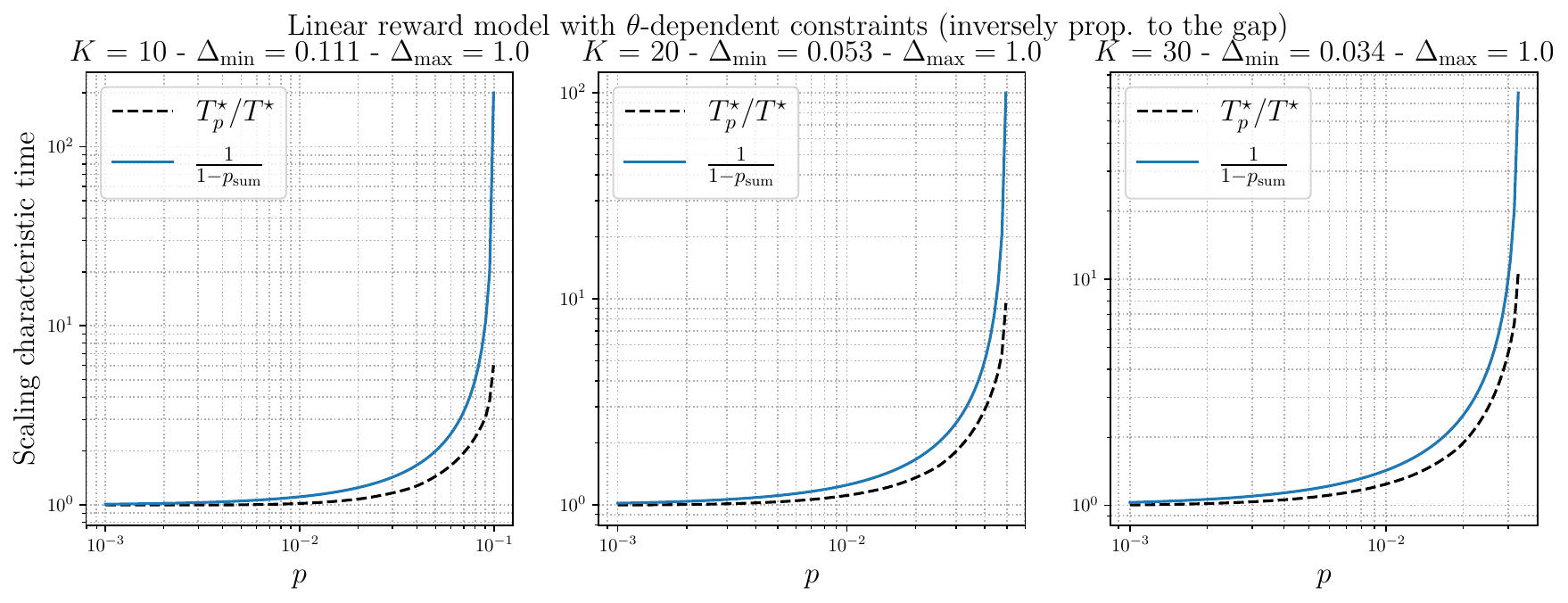}
    \includegraphics[width=.8\linewidth]{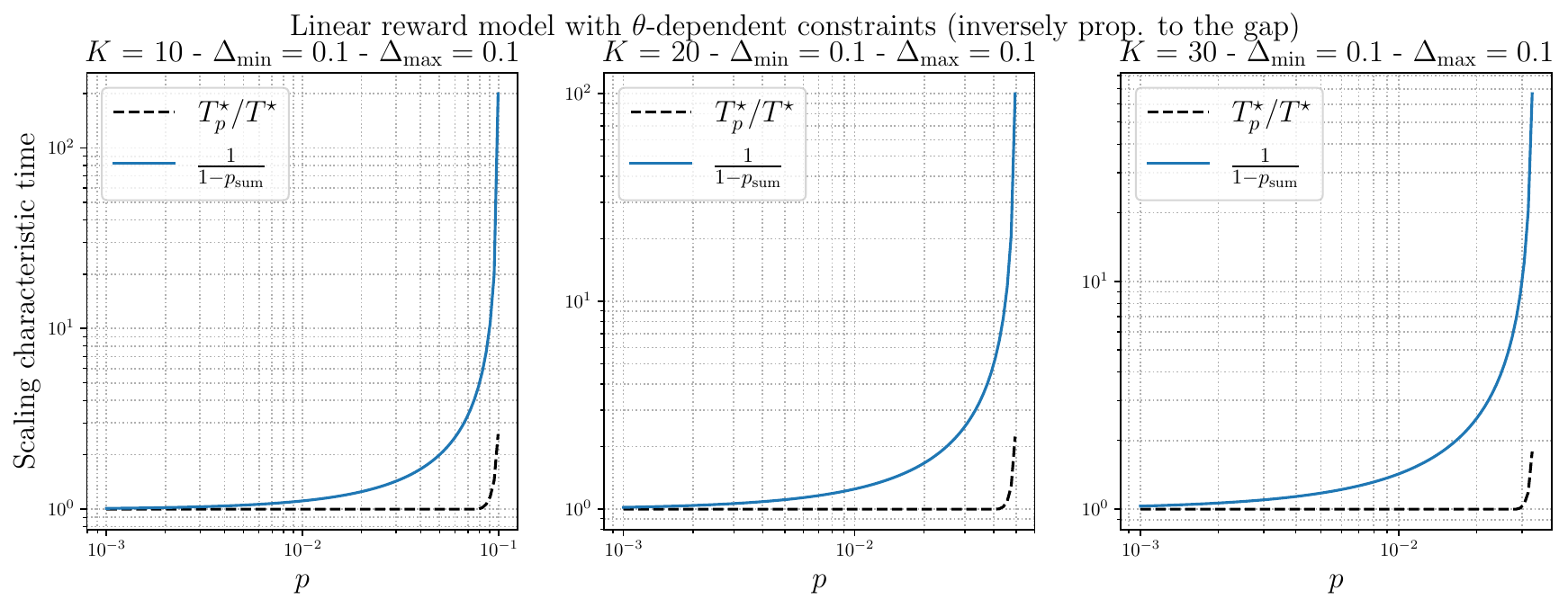}
    \caption{Scaling of $T_p^\star/T^\star$ for two different models with equal $\theta$-dependent constraints (see also the $p(\theta)$ function in \cref{eq:pof_eq_fairness_theta}). On the top we show results for the model with rewards linearly ranging in $[0,1]$. On the bottom we show results for the model with equal gaps $\Delta_a=0.1$. From left to right we depict results for different number of arms $K\in\{10,20,30\}$. }
    \label{fig:scaling_price_of_fairness_2}
\end{figure}

\newpage
\subsection{Characterizing the optimal allocations} 
\label{sec:characterize_lb}

\begin{lemma}
\label{lem:opt_all_eq}
The optimal allocations to \eqref{eq:lb} satisfies $ w_{p,a}^\star = p_a$, for all $a\in[K]$ such that $p_a \ge w^\star_{a}$.
\end{lemma}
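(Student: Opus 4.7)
The plan is to argue by contradiction via a perturbation argument. Suppose there exists $a_0 \in [K]$ with $p_{a_0} \geq w_{a_0}^\star$ but $w_{p,a_0}^\star > p_{a_0}$, so that the fairness constraint at $a_0$ is strictly inactive at the constrained optimum $w_p^\star$, and in particular $w_{p,a_0}^\star > w_{a_0}^\star$. The objective $\Psi(w) := \max_{a \neq a^\star}(w_a^{-1} + w_{a^\star}^{-1})/\Delta_a^2$ is convex on the positive orthant, and each term $g_a(w) := (w_a^{-1} + w_{a^\star}^{-1})/\Delta_a^2$ is strictly decreasing in both $w_a$ and $w_{a^\star}$. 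The boundary case $\psum = 1$ is trivial since then the feasible set reduces to the single point $w = p$.

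The goal would be to construct a feasible direction $d$ at $w_p^\star$ with $d_{a_0} < 0$, $\sum_b d_b = 0$, and $d_b \geq 0$ whenever $w_{p,b}^\star = p_b$, along which the directional derivative of $\Psi$ is strictly negative, yielding the desired contradiction. The freed mass $|d_{a_0}|$ would be redistributed among the non-binding arms $U := \{b : w_{p,b}^\star > p_b\}$, which contains $a_0$ and is thus nonempty. When $a^\star \in U$ the natural move is to place all freed mass on $a^\star$, since raising $w_{a^\star}$ uniformly decreases every $g_a$; the $g_{a_0}$ term is the only one that could a priori rise due to the simultaneous decrease in $w_{a_0}$, and my plan is to use the KKT stationarity at $a_0$ (which holds precisely because $w_{p,a_0}^\star > p_{a_0}$), together with the equalization characterization of $w^\star$ in the unconstrained setting from \cite{garivier2016optimal}, to show that $w_{p,a_0}^\star > w_{a_0}^\star$ forces the marginal cost of keeping $w_{a_0}$ above $p_{a_0}$ to be strictly positive. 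When $a^\star \notin U$ (the optimal arm is itself fairness-binding), one instead spreads the freed mass among the non-binding arms with weights proportional to $1/(w_{p,b}^{\star 2}\Delta_b^2)$, again mirroring the KKT balance.

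The main obstacle is the non-smoothness of $\Psi$: as a pointwise maximum of smooth convex functions, its subdifferential at $w_p^\star$ depends on the active set $A(w_p^\star) = \{a : g_a(w_p^\star) = \Psi(w_p^\star)\}$, so the directional-derivative argument must be framed using subgradients rather than gradients. A case analysis on whether $a_0 \in A(w_p^\star)$ and on whether $a^\star$ is fairness-binding appears unavoidable. I expect the cleanest treatment to write the full KKT system for the clipped simplex (with multipliers $\mu_b \geq 0$ on each fairness constraint and $\nu$ on the budget), express stationarity at $a_0$ in closed form, and then exploit $p_{a_0} \geq w_{a_0}^\star$ to derive a strict sign inequality that contradicts the dual feasibility of the assumed configuration.
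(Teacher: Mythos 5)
Your proposal is a plan rather than a proof, and the step it leaves open is precisely the content of the lemma. Under the contradiction hypothesis $w_{p,a_0}^\star > p_{a_0} \ge w_{a_0}^\star$, the direction you propose (move mass $\epsilon$ from $a_0$ to $a^\star$) is always feasible and strictly decreases every term $g_a$ with $a\neq a_0$, so the only case that matters is when $g_{a_0}$ itself attains the maximum at $w_p^\star$; along your direction $g_{a_0}$ changes at rate proportional to $1/(w_{p,a_0}^\star)^2 - 1/(w_{p,a^\star}^\star)^2$, so your contradiction goes through only if $w_{p,a_0}^\star > w_{p,a^\star}^\star$. Ruling out the remaining configuration ($g_{a_0}$ active, $w_{p,a_0}^\star \le w_{p,a^\star}^\star$, constraint at $a_0$ slack) is exactly where the hypothesis $p_{a_0}\ge w_{a_0}^\star$ must enter, and your plan only gestures at this ("KKT stationarity at $a_0$ together with the equalization characterization of $w^\star$ forces the marginal cost to be strictly positive") without deriving any inequality. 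Note that the needed comparison between $w_p^\star$ and $w^\star$ is not local: the other active fairness constraints shift every coordinate (in particular $w_{p,a^\star}^\star$ can be far from $w_{a^\star}^\star$), so the equalization conditions satisfied by the unconstrained optimizer give no direct sign information at $w_p^\star$; what is missing is a sensitivity or monotonicity statement linking the two programs. The alternative branch (spreading mass with weights proportional to $1/(w_{p,b}^{\star 2}\Delta_b^2)$ when $a^\star$ is binding) is likewise never analyzed, and the case split itself is not where the difficulty lies, since pushing mass onto a binding arm is feasible in any case. Two smaller imprecisions: stationarity holds at any KKT point regardless of slackness — what slackness gives you is $\mu_{a_0}=0$ by complementary slackness; and strict decrease of a single coordinate does not by itself contradict optimality unless you control the active set of the max, which is the non-smoothness issue you flag but do not resolve.

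For comparison, the paper does not build a primal descent direction at all: it writes \eqref{eq:lb} in epigraph form, uses complementary slackness to kill the multiplier of the slack constraint $w_{a_0}\ge p_{a_0}$, and then invokes the convex-programming sensitivity inequality of Luenberger (Lemma~\ref{lem:luenberger}), which bounds $T^\star - T_p^\star$ by inner products of the constraint perturbation with the optimal multipliers, to reach a contradiction with $T^\star < T_p^\star$. If you wish to complete your primal route, you would need to actually prove the missing comparison — for instance, a monotonicity lemma stating that imposing the lower bounds can only (weakly) decrease the optimizer's coordinates on arms whose constraints are slack, from which $w_{p,a_0}^\star \le w_{a_0}^\star \le p_{a_0}$ follows and contradicts slackness — and that is a nontrivial lemma of its own; otherwise, the value-function/multiplier sensitivity argument is the more economical path.
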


\begin{proof}
The problem \eqref{eq:lb} can be equivalently rewritten in epigraph form as 
\begin{align*}
T^\star_{p} := \min_{w,z} \quad & z \\
\textrm{s.t.} \quad
& z\ge \frac{w_a^{-1} + w_{a^\star}^{-1}}{\Delta_a^2} , \quad \forall a\neq a^\star,\\
& w_a \ge p_a, \quad \forall a\in [K], \\
& \sum_{a\in[K]} w_a = 1  .
\end{align*}

Hence, this problem can be rewritten as 

\begin{align*}
T^\star_{p} := \min_{x} \quad & f(x) \\
\textrm{s.t.} \quad
& G(x) \le \theta(p) ,\\
& x\in \Omega  \\
\end{align*}
where $x = (z,w)$, $\Omega = (0,\infty) \times \Sigma_K$, $\Sigma_K$ is the $(K-1)$-dimensional simplex, and 

\[
G(x) \coloneqq\left[\phantom{\begin{matrix}a_0\\ \ddots\\a_0\\b_0\\ \ddots\\b_0 \end{matrix}}
\right.\hspace{-1.5em}
\begin{matrix}
\frac{w_{1}^{-1} +{w_a^\star}^{-1}}{\Delta_{1}^2} -z \\
\vdots \\
 \frac{w_{K}^{-1} +{w_a^\star}^{-1}}{\Delta_{K}^2} -z \\
-w_1 \\
 \vdots \\
 -w_K
\end{matrix}
\hspace{-1.5em}
\left.\phantom{\begin{matrix}a_0\\ \ddots\\a_0\\b_0\\ \ddots\\b_0\end{matrix}}\right]\hspace{-1em}
\begin{tabular}{l}
$\left.\lefteqn{\phantom{\begin{matrix} a_0\\ \ddots\\ a_0\ \end{matrix}}}\right\}K-1$\\
$\left.\lefteqn{\phantom{\begin{matrix} b_0\\ \ddots\\ b_0\ \end{matrix}}} \right\}K$
\end{tabular}
, \quad
\theta(p)  \coloneqq\left[\phantom{\begin{matrix}a_0\\ \ddots\\a_0\\b_0\\ \ddots\\b_0 \end{matrix}}
\right.\hspace{-1.5em}
\begin{matrix}
0 \\
\vdots \\
 0  \\
-p_1 \\
 \vdots \\
 -p_K
\end{matrix}
\hspace{-1.5em}
\left.\phantom{\begin{matrix}a_0\\ \ddots\\a_0\\b_0\\ \ddots\\b_0\end{matrix}}\right]\hspace{-1em}
\begin{tabular}{l}
$\left.\lefteqn{\phantom{\begin{matrix} a_0\\ \ddots\\ a_0\ \end{matrix}}}\right\}K-1$\\
$\left.\lefteqn{\phantom{\begin{matrix} b_0\\ \ddots\\ b_0\ \end{matrix}}} \right\}K$.
\end{tabular}
\]

Then, the Lagrangian can be then written as
$$
L(x,\lambda) = f(x) + \lambda^\top G(x),
$$
where $\lambda$ is the Lagrange multiplier, and we denote by $\lambda_p^\star$ the optimal multiplier for a given value of $p$.

Note that if $p < w^\star$, the optimal Lagrange multiplier must satisfy 
\[
\lambda_p^\star =\left[\phantom{\begin{matrix}a_0\\ \ddots\\a_0\\b_0\\ \ddots\\b_0 \end{matrix}}
\right.\hspace{-1.5em}
\begin{matrix}
\ge 0  \\
\vdots \\
\ge 0 \\
0 \\
\vdots \\
0
\end{matrix}
\hspace{-1.5em}
\left.\phantom{\begin{matrix}a_0\\ \ddots\\a_0\\b_0\\ \ddots\\b_0\end{matrix}}\right]\hspace{-1em}
\begin{tabular}{l}
$\left.\lefteqn{\phantom{\begin{matrix} a_0\\ \ddots\\ a_0\ \end{matrix}}}\right\}K-1$\\
$\left.\lefteqn{\phantom{\begin{matrix} b_0\\ \ddots\\ b_0\ \end{matrix}}} \right\}K$
\end{tabular},
\]
i.e., the Lagrange multipliers corresponding the constraints $z \ge \frac{w_{a}^{-1} +{w_a^\star}^{-1}}{\Delta_{a}^2} $ can be either zero or positive, while those corresponding to $w\ge p$ must be null since the constraints are not active. 

Denote by $x_p$ the solution for a given value of $p$. For $p = 0$, we have that the solution ($x_0$) satisfies $f(x_0) = T^\star$.  

For simplicity, suppose that $\exists a_0 \in [K]: w^\star_{a_0} < p_{a_0}$. By contradiction, we assume that the optimal $w_p^\star$ satisfies $w_{p,a_0}^\star > p_{a_0}$.
Hence, this last inequality implies that $\lambda_{p,i_{a_0}}^\star = 0$, where $i_{a_0}$ the index corresponding to the constraint $w^\star_{a_0} \geq p_{a_0}$ (this argument applies to any action that satisfies $w_{p,a}^\star > p_a$). Then, since for any $p\neq0$ we have $T^\star< T_p^\star$ (by strong convexity), by Lem. \ref{lem:luenberger} we have that

$$0> T^\star - T_p^\star \ge \theta(p)^\top \lambda_p^\star = 0,$$ which is a contradiction. Hence $w_{p,a_0}^\star =p_{a_0}$.

\end{proof}

\begin{lemma}[Theorem 1 \cite{Luenberger97}]
\label{lem:luenberger}
    Let $f$ and $G$ be convex functions, $\Omega$ a convex set, and suppose $x_0, x_1$ are solutions to 
    \begin{align*}
    \min_x \;\; & f(x)
    \\ \mathrm{s.t}\;\;\; & x \in \Omega,
    \\ &  G(x) \le u,
    \end{align*}
    with $u=u_0$ and $u=u_1$ respectively. Suppose $z_0^\star$, $z_1^\star$ are (optimal) Lagrange multipliers corresponding to these problems. Then
    $$
    (u_1-u_0)^\top z_1^\star \le f(x_0) -f(x_1) \le (u_1-u_0)^\top z_0^\star.
    $$
\end{lemma}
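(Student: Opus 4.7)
The plan is to exploit the two characterizing properties of a convex program's optimal primal--dual pair: Lagrangian minimization and complementary slackness. Define the Lagrangian $L_i(x,z) = f(x) + z^\top(G(x) - u_i)$. The hypothesis that $z_i^\star \ge 0$ is an optimal multiplier for problem $i \in \{0,1\}$ delivers two facts I will use repeatedly: (i) $x_i \in \arg\min_{x \in \Omega} L_i(x, z_i^\star)$, i.e.\ $L_i(x_i, z_i^\star) \le L_i(x, z_i^\star)$ for every $x \in \Omega$, and (ii) complementary slackness, $z_i^{\star\top}(G(x_i) - u_i) = 0$.

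To obtain the upper bound $f(x_0) - f(x_1) \le (u_1 - u_0)^\top z_0^\star$, I would insert the (feasible for problem $1$, but also in $\Omega$) point $x_1$ into the Lagrangian inequality for $i = 0$:
\begin{equation*}
f(x_0) + z_0^{\star\top}(G(x_0) - u_0) \le f(x_1) + z_0^{\star\top}(G(x_1) - u_0).
\end{equation*}
Complementary slackness annihilates the slack on the left. On the right, primal feasibility $G(x_1) \le u_1$ combined with $z_0^\star \ge 0$ yields $z_0^{\star\top}(G(x_1) - u_0) \le z_0^{\star\top}(u_1 - u_0)$, and rearrangement gives the desired inequality.

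The lower bound $f(x_0) - f(x_1) \ge (u_1 - u_0)^\top z_1^\star$ follows by the symmetric argument: test the Lagrangian inequality for $i = 1$ at $x = x_0 \in \Omega$, apply complementary slackness at $(x_1, z_1^\star)$, and invoke $G(x_0) \le u_0$ together with $z_1^\star \ge 0$ to bound the remaining linear term. This produces $f(x_1) - f(x_0) \le (u_0 - u_1)^\top z_1^\star$, which is exactly the left half of the claimed sandwich.

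There is no real obstacle here; convexity of $f$, $G$, $\Omega$ and the existence of the optimal multipliers $z_i^\star$ are all assumed in the hypotheses, and the derivation never invokes differentiability or strict convexity. The only care required is sign-bookkeeping when toggling between the two symmetric applications, which is the standard pitfall in sensitivity-type arguments for convex programs.
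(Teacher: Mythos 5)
Your proof is correct: both halves of the sandwich follow exactly as you write, using the Lagrangian-minimization property of each optimal multiplier pair, complementary slackness at $(x_i, z_i^\star)$, and the sign conditions $z_i^\star \ge 0$, $G(x_{1-i}) \le u_{1-i}$. The paper does not prove this lemma itself (it is quoted as Theorem 1 of Luenberger), and your argument is precisely the standard sensitivity proof from that source, so there is nothing to reconcile beyond noting that the saddle-point facts (i)--(ii) you invoke are exactly what "optimal Lagrange multiplier" means in that framework.
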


\newpage 

\renewcommand{\thetheorem}{C.\arabic{theorem}}
\renewcommand{\thelemma}{C.\arabic{lemma}}

\section{{\sc \color{black}F-TaS} Algorithm}
\label{sec:fbai_algo_app}
In this appendix, we provide an analysis of the {\sc \color{black}F-TaS} algorithm. First, we analyze the fairness in App. \ref{sec:fairness_delta_pac}, and later the sample complexity in App. \ref{sec:sample_complex_proofs}. The main sample complexity results are given in 2 forms: almost sure sample-complexity optimality and optimality in expectation. For conciseness, we provide unified results for both \emph{pre-specified rates} and \emph{$\theta$-dependent rates}. 

\subsection{Fairness and \texorpdfstring{$\delta$}{delta}-PAC Results}
\label{sec:fairness_delta_pac}
We provide first a generic result on the guarantees for the case of \emph{pre-specified rates}, and later provide the proof of \cref{prop:p_delta_PAC}.
\begin{proposition}
\label{prop:sat_fairness}
    For $t\geq 1$, {\sc \color{black}F-TaS} with \emph{pre-specified rates} guarantees that $\mathbb{E}_\theta[N_a(t)]\geq t p_a$ if $p_a>0$, and $ \mathbb{E}_\theta[N_a(t)] \geq (1-\psum) (\sqrt{t+1}-1)/K_0$ otherwise, where $K_0 = |\{a\in[K]: p_a=0\}|$. For {\sc \color{black}F-TaS} with $\theta$-dependent rates the arms are being selected at a rater greater than $\sqrt{t}$, i.e., $\mathbb{E}_\theta[N_a(t)]\geq O(\sqrt{t}).$
\end{proposition}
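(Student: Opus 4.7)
The plan is to exploit the convex-combination structure of the sampling rule $\pi(t) = (1-\epsilon_t)w_p^\star(t) + \epsilon_t \pi_c$ together with the fact that, by construction, $w_p^\star(t) \in \Sigma_p$ and hence $w_{p,a}^\star(t) \geq p_a$ for every realization of $\hat\theta(t)$. Since $a_t \sim \pi(t)$ and $\pi(t)$ is $\mathcal{F}_{t-1}$-measurable, we have $\mathbb{P}_\theta(a_t = a \mid \mathcal{F}_{t-1}) = \pi_a(t)$, so by the tower rule
\begin{equation*}
\mathbb{E}_\theta[N_a(t)] = \sum_{s=1}^{t} \mathbb{E}_\theta[\pi_a(s)].
\end{equation*}
The whole proof reduces to deriving an almost-sure pointwise lower bound on $\pi_a(s)$ in each of the three regimes and summing.

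For pre-specified rates with $p_a > 0$, both endpoints of the convex combination dominate $p_a$: $w_{p,a}^\star(t) \geq p_a$ from the $\Sigma_p$ constraint, and $\pi_{c,a} \geq p_a$ by the definition of $\pi_c$ (whether $K_0=0$, where $\pi_{c,a} = p_a + (1-\psum)/K$, or $K_0>0$, where $\pi_{c,a} = p_a$). Hence $\pi_a(t) \geq p_a$ almost surely irrespective of $\epsilon_t \in [0,1]$, and summing yields $\mathbb{E}_\theta[N_a(t)] \geq t p_a$.

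For pre-specified rates with $p_a = 0$ (so $K_0 \geq 1$), the $\Sigma_p$ constraint is vacuous for arm $a$, so I fall back on the forced-exploration term: $\pi_a(t) \geq \epsilon_t \pi_{c,a} = \frac{1-\psum}{2 K_0 \sqrt{t}}$. Using monotonicity of $s \mapsto 1/\sqrt{s}$ and the integral comparison $\sum_{s=1}^t \frac{1}{2\sqrt{s}} \geq \int_1^{t+1} \frac{ds}{2\sqrt{s}} = \sqrt{t+1}-1$, one obtains $\mathbb{E}_\theta[N_a(t)] \geq \frac{(1-\psum)(\sqrt{t+1}-1)}{K_0}$. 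The $\theta$-dependent case is treated identically with $\pi_{c,a} = 1/K$ in place of $(1-\psum)/K_0$, yielding $\mathbb{E}_\theta[N_a(t)] \geq (\sqrt{t+1}-1)/K = \Omega(\sqrt{t})$.

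Honestly there is no serious obstacle: the argument is essentially a direct bookkeeping exercise once the convex-combination structure is written out. The only subtlety worth flagging is that $w_p^\star(t)$ depends on the random estimate $\hat\theta(t)$, so the pointwise inequality $w_{p,a}^\star(t) \geq p_a$ must hold on every sample path; this is guaranteed because the defining optimization is taken over $\Sigma_p$ for every realization of $\hat\theta(t)$. A second minor point is that the $K_0=0$ branch of the pre-specified case is not explicitly supplied with a choice of $\epsilon_t$ in the algorithm description, but this is harmless because the bound $\pi_a(t) \geq p_a$ is independent of $\epsilon_t$.
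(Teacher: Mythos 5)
Your proposal is correct and follows essentially the same route as the paper: write $\mathbb{E}_\theta[N_a(t)]=\sum_{s\le t}\mathbb{E}_\theta[\pi_a(s)]$, lower-bound $\pi_a(s)$ using $w_{p,a}^\star(s)\ge p_a$ (from $\Sigma_p$) and $\pi_{c,a}\ge p_a$ when $p_a>0$, use only the forced-exploration term $\epsilon_s\pi_{c,a}$ when $p_a=0$ or in the $\theta$-dependent case, and conclude with the same integral comparison $\sum_{s=1}^t \tfrac{1}{2\sqrt{s}}\ge\sqrt{t+1}-1$. The only cosmetic difference is that the paper keeps both terms of the convex combination together and drops the nonnegative cross term $(\pi_{c,a}-p_a)(\sqrt{t+1}-1)$ at the end, whereas you split into cases up front.
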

\begin{proof}
   Due to the properties of $w^\star$, for $t>1$ and any arm $a\in[K]$, we have that
\begin{align*}
    \mathbb{E}_\theta[N_a(t)] &\geq 1+\sum_{s=1}^{t} 
    \left(1-\frac{1}{2\sqrt{s}}\right)p_a + \frac{\pi_{c,a}}{2\sqrt{s}},\\
    &\geq tp_a + \frac{(\pi_{c,a}-p_a)}{2}\sum_{s=1}^{t} \frac{1}{\sqrt{s}},\\
    &\geq tp_a + \frac{(\pi_{c,a}-p_a)}{2} \int_1^{t+1}\frac{1}{\sqrt{s}} ds,\\
    &\geq tp_a + (\pi_{c,a}-p_a) (\sqrt{t+1}-1).
\end{align*}
Hence,  for an arm $a$ such that $p_a>0 $ we find $ \mathbb{E}[N_a(t)] \geq tp_a$  since $\pi_{c,a}\geq p_a$, and $ \mathbb{E}[N_a(t)] \geq (1-\psum) (\sqrt{t+1}-1)/K_0$ otherwise.

For $\theta$-dependent rates the result follows immediately by noticing that $\mathbb{E}[N_a(t)]\geq (\sqrt{t+1}-1)/K$.
\end{proof}

Then, for {\sc \color{black}F-TaS} we are able to give the following fairness guarantees.

\begin{proof}[Proof of Proposition \ref{prop:p_delta_PAC}]
    The proof for $\mathbb{P}_\theta(\tau_\delta <\infty, \hat a_{\tau_\delta}\neq a_\theta^\star)\leq \delta$ comes directly from Thm. 7 in \cite{kaufmann2021mixture}, and we omit it for simplicity.
    
    \paragraph{Fairness with pre-specified rates.} From \cref{prop:sat_fairness} we can use \cref{lemma:algo_fairness_simple} to prove the result since $\mathbb{E}_\theta[N_a(t)]\geq t p_a$.
    
    \paragraph{Asymptotic fairness with $\theta$-dependent rates.} The proof for the asymptotic result is more convoluted and relies on different tools that we present later in the appendix. For a more informal proof, we refer the reader to \cref{lemma:asymptotic_algo_fairness_new}. Here, we provide the proof only for the constraints defined in (\ref{eq:constraint_bai_constant_p})-(\ref{eq:constraint_bai_ptheta}) (the proof of the asymptotic sample-path fairness constraints follows immediately).

    For $T\geq 1, \varepsilon > 0,$ consider the concentration events 
    \[
    C_{T,0}(\varepsilon) = \cap_{t=h_0(T)}^T (\|\hat \theta(t) - \theta\|_\infty \leq \varepsilon) \hbox{ and } C_{T,1}(\varepsilon) = \cap_{t= h_1(T)}^T (\|N(t)/t-  w_{p}^\star\|_\infty \leq K(\varepsilon)),
    \]
    where $h_0(t)=t^{1/4}$, $h_1(t)=t^{1/2}$ and $K(\varepsilon)$ is a bounded function, vanishing in $0$ (see also \cref{prop:concentration_average_sampling}). Define then the value of the problem 
    \[
    T_{\theta,p}^\star(\varepsilon) = 
    \sup\limits_{\substack{\tilde\theta:\|\theta-\tilde\theta\|_\infty \leq \varepsilon,\\\tilde w: \|\tilde w-w_{p}^\star\|_\infty \leq K(\varepsilon) }} T_{\tilde \theta,p}^\star(\tilde w),\hbox{ with }T_{\tilde\theta,p}^\star(\tilde w)\coloneqq {\color{black} 2}\max_{a\neq a_{\tilde\theta}^\star} \frac{ \tilde w_a^{-1} + \tilde w_{a_{\tilde\theta}^\star}^{-1}}{\tilde \Delta_a^2},
    \]
    where $T_{\tilde\theta,p}^\star(\tilde w)$ is the value of the problem for model $\tilde\theta$ with allocation $\tilde w$ and $a_{\tilde\theta}^\star =\arg\max_a \tilde\theta_a$, $\tilde \Delta_a=\max_{b} \tilde\theta_b - \tilde\theta_a$.

    Due to \cref{prop:concentration_average_sampling}, we have that for $T_1(\varepsilon)\geq 1/\varepsilon^4$ and $T\geq T_1(\varepsilon)$, conditionally on $C_{T,0}(\varepsilon)$, the event $C_{T,1}(\varepsilon)$ occurs with high probability. Then, for every $t\in [\sqrt{T}, T]$ under $C_{T,0}(\varepsilon)\cap C_{T,1}(\varepsilon)$ we have
    $   Z(t) \geq t/ T_{\theta,p}^\star(\varepsilon)
    $. In the following, since we let $\delta\to 0$, we  choose $\delta < \varepsilon$, and let $T_1(\varepsilon)= 1/\delta^4 \geq 1/\varepsilon^4$.
    
    Next, as in \cref{thm:fairbai_sample_complexity_as_bound} let $T_2(\varepsilon) = \inf\{t:  t/T_{\theta,p}^\star(\varepsilon) \geq \ln(Bt/\delta)\}$, where $B>0$ is a constant chosen as in \cref{thm:fairbai_sample_complexity_as_bound} s.t. $\beta(\delta,t) \leq \ln(Bt/\delta)$.
    Then, for all $t\geq \max(T_1(\varepsilon), T_2(\varepsilon))$,  under $C_{T,0}(\varepsilon)\cap C_{T,1}(\varepsilon)$, we have that
    \[
        Z(t) \geq   t/T_{\theta,p}^\star(\varepsilon) \geq \ln(Bt/\delta) \geq \beta(\delta,t).
    \]

    Let then $T_\varepsilon = \max(T_1(\varepsilon),T_2(\varepsilon))$. Note that $T_\varepsilon \geq 1/\delta^4$.
    Hence $(\tau_\delta \leq T_\varepsilon)\supset C_{T_\varepsilon,0}(\varepsilon)\cap C_{T_\varepsilon,1}(\varepsilon).$

Next, note
\begin{align*}
    \mathbb{E}_\theta[N_a(\tau_\delta)] &= \mathbb{E}_\theta[N_a(\tau_\delta)|C_{T_\varepsilon,0}(\varepsilon)\cap C_{T_\varepsilon,1}(\varepsilon)] \mathbb{P}_\theta(C_{T_\varepsilon,0}(\varepsilon)\cap C_{T_\varepsilon,1}(\varepsilon)) \\& + \mathbb{E}_\theta[N_a(\tau_\delta)|\overline{C_{T_\varepsilon,0}(\varepsilon)}\cup \overline{C_{T_\varepsilon,1}(\varepsilon)}] \mathbb{P}_\theta(\overline{C_{T_\varepsilon,0}(\varepsilon)}\cup \overline{C_{T_\varepsilon,1}(\varepsilon)}),\\
    &\geq \mathbb{E}_\theta[N_a(\tau_\delta)|C_{T_\varepsilon,0}(\varepsilon)\cap C_{T_\varepsilon,1}(\varepsilon)] \mathbb{P}_\theta(C_{T_\varepsilon,0}(\varepsilon)\cap C_{T_\varepsilon,1}(\varepsilon)),\\
    &\geq (w_{p,a}^\star-K(\varepsilon))\mathbb{E}_\theta[\tau_\delta]\mathbb{P}_\theta(C_{T_\varepsilon,0}(\varepsilon)\cap C_{T_\varepsilon,1}(\varepsilon))
\end{align*}

Now we prove that $\liminf_{\delta \to 0} \mathbb{P}_\theta(C_{T_\varepsilon,0}(\varepsilon)\cap C_{T_\varepsilon,1}(\varepsilon)) =1$. Observe $\mathbb{P}_\theta(C_{T_\varepsilon,0}(\varepsilon)\cap C_{T_\varepsilon,1}(\varepsilon))=\mathbb{P}_\theta( C_{T_\varepsilon,1}(\varepsilon) | C_{T_\varepsilon,0}(\varepsilon))\mathbb{P}_\theta(C_{T_\varepsilon,0}(\varepsilon))$.

From \cref{prop:concentration_average_sampling}   we have
\[
\mathbb{P}_\theta( C_{T_\varepsilon,1}(\varepsilon) | C_{T_\varepsilon,0}(\varepsilon)) \geq 1- 2K \frac{\exp(-\sqrt{T_\varepsilon}\varepsilon^2/2)}{1-\exp(-\varepsilon^2/2)}
\]
Since $\sqrt{T_\varepsilon}\geq1/\delta^2$ we get $\liminf_{\delta \to 0}\mathbb{P}( C_{T_\varepsilon,1}(\varepsilon) | C_{T_\varepsilon,0}(\varepsilon))\geq 1.$
Then, from \cref{prop:concentration_estimate} we have
\[
\mathbb{P}_\theta(C_{T_\varepsilon,0}(\varepsilon)) \geq 1- \frac{1}{T_\varepsilon^\alpha} - 2B_\varepsilon KT_\varepsilon\exp\left(-2\left\lfloor\frac{ p_0^{1/4}T_\varepsilon^{1/16}}{(\log^2(1+K'T_\varepsilon^\alpha))^{1/4}} \right\rfloor \varepsilon^2 \right),
\]
where $\alpha >1$. Asymptotically, as $\delta\to 0$, we  have that also this term converges to $1$ due to the exponential converging to $0$ and $1/T_\varepsilon^\alpha \to 0$. Hence
\[
  \liminf_{\delta \to 0} \frac{\mathbb{E}_\theta[N_a(\tau_\delta)]}{\mathbb{E}_\theta[\tau_\delta]}
    \geq (w_{p,a}^\star-K(\varepsilon)).
\]
Letting $\varepsilon \to 0$ concludes the proof since $K(0)=0$.
\end{proof}

\subsection{Sample complexity guarantees}
\label{sec:sample_complex_proofs}
\subsubsection{Almost-sure Sample Complexity Upper Bound}
In this section, we prove an almost sure sample complexity bound of {\sc \color{black}F-TaS}. To derive this result, 
first, we prove that each arm is sampled infinitely often. Later, we show that, asymptotically, the average number of times we select an arm $a$ converges to $w_{p,a}^\star$ almost surely. 
\begin{proposition}\label{prop:fair_bai_infinite_sampling}
    Each arm is sampled infinitely often in {\sc \color{black}F-TaS}, i.e., $
    \mathbb{P}_\theta(\lim_{t\to \infty} N_a(t) =\infty)=1
    $ for all $a$.
\end{proposition}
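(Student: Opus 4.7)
The plan is to show that each arm $a\in[K]$ is selected at round $t$ by {\sc F-TaS} with a (deterministic, $\mathcal{F}_{t-1}$-measurable) conditional probability whose partial sums over $t$ diverge, and then invoke the conditional Borel--Cantelli lemma (L\'evy's extension of the second Borel--Cantelli lemma) to deduce $N_a(t)\to\infty$ almost surely. The key observation is that the sampling distribution $\pi(t)=(1-\epsilon_t)w_p^\star(t)+\epsilon_t\pi_c$ places a strictly positive, explicitly computable mass on every arm at every round, by design of $\pi_c$ and $(\epsilon_t)_t$.

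First, I would split into cases according to the type of fairness constraint and whether $p_a>0$. If $p_a>0$ (which automatically holds for all $a$ in the pre-specified case with $K_0=0$), then $w_p^\star(t)\in\Sigma_p$ forces $w_p^\star(t)_a\geq p_a$ and $\pi_{c,a}\geq p_a$, so that $\pi(t)_a\geq p_a$ and thus $\sum_{s=1}^{t}\pi(s)_a\geq p_a\, t$. If $p_a=0$ in the pre-specified setting, then by construction $\pi(t)_a\geq \epsilon_t\pi_{c,a}=\frac{1-\psum}{2K_0\sqrt{t}}$. For the $\theta$-dependent setting, $\pi_c$ is uniform so $\pi(t)_a\geq \epsilon_t/K=\frac{1}{2K\sqrt{t}}$. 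In the last two cases, $\sum_{s=1}^{t}\pi(s)_a$ grows at least as $\sqrt{t}$. Hence, in all cases, $\sum_{s\geq 1}\pi(s)_a=+\infty$ deterministically.

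To conclude, since $a_s\sim \pi(s)$ and $\pi(s)$ is $\mathcal{F}_{s-1}$-measurable, one has $\mathbb{P}_\theta(a_s=a\mid\mathcal{F}_{s-1})=\pi(s)_a$, and the indicators $(\mathbf{1}\{a_s=a\})_{s\geq 1}$ are adapted to $(\mathcal{F}_s)_s$. By the conditional Borel--Cantelli lemma, the almost-sure divergence of $\sum_s \mathbb{P}_\theta(a_s=a\mid\mathcal{F}_{s-1})$ implies $N_a(t)=\sum_{s=1}^t \mathbf{1}\{a_s=a\}\to \infty$ almost surely, which is precisely the claim of Proposition~\ref{prop:fair_bai_infinite_sampling}.

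I do not anticipate any real obstacle here: the entire argument reduces to verifying the per-arm lower bounds on $\pi(t)_a$, which follow directly from the definitions of $\pi_c$ and the forced-exploration schedule $\epsilon_t=1/(2\sqrt{t})$ presented in \textsection\ref{sec:sampling_rule}. These were chosen precisely so that every arm receives a sampling probability bounded below by a non-summable deterministic sequence, making the application of conditional Borel--Cantelli immediate.
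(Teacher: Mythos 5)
Your proposal is correct and follows essentially the same route as the paper's proof: case-split on pre-specified ($p_a>0$ versus $p_a=0$) and $\theta$-dependent constraints, lower-bound the per-round selection probability by $p_a$ or by $\epsilon_t\pi_{c,a}\propto 1/\sqrt{t}$, and conclude from the divergence of the resulting series via Borel--Cantelli. Your explicit appeal to the conditional (L\'evy) version with $\mathbb{P}_\theta(a_s=a\mid\mathcal{F}_{s-1})=\pi_a(s)$ is in fact slightly more careful than the paper's bare citation of "the Borel--Cantelli lemma," since the events $\{a_t=a\}$ are not independent and it is the deterministic lower bound on the conditional probabilities that makes the argument go through.
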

\begin{proof}
\noindent\textbf{Case: pre-specified rates.}
        The policy in {\sc \color{black}F-TaS} for $a$ such that $p_a>0$ ensures that
	$\mathbb{P}_\theta(a_t=a) \geq p_a$ Consequently, we have that
	\[
	\sum_{t=1}^\infty \mathbb{P}_\theta(a_t=a) \geq \sum_{t=1}^\infty p_a = \infty.
	\]
	By the Borel-Cantelli lemma it follows that arm $a$ is chosen infinitely often asymptotically.

 Now consider an arm $a$ such that $p_a=0$. Then {\sc \color{black}F-TaS}
 guarantees that $\mathbb{P}_\theta(a_t=a) \geq \epsilon_t\frac{1-\psum}{K_0}$, hence
 \[
\sum_{t=1}^\infty \mathbb{P}_\theta(a_t=a) \geq \sum_{t=1}^\infty \frac{1-\psum}{2K_0\sqrt{t}}=\infty.
 \]
Hence, each arm is sampled infinitely often.

\noindent \textbf{Case: $\theta$-dependent rates.} In this case
      {\sc \color{black}F-TaS}
 guarantees that $\mathbb{P}_\theta(a_t=a) \geq \epsilon_t\frac{1}{K}$, thus
 \[
\sum_{t=1}^\infty \mathbb{P}_\theta(a_t=a) \geq \sum_{t=1}^\infty \frac{1}{2K\sqrt{t}}=\infty.
 \]
Hence, each arm is sampled infinitely often.
\end{proof}

We now show that {\sc \color{black}F-TaS} asymptotically samples arms according to $w_{p}^\star$.
\begin{proposition}\label{propo:almost_sure_forced_asymptotic_forced_exploration_fair_bai}
For every arm $a\in [K]$, {\sc \color{black}F-TaS} satisfies
\begin{equation}
    \mathbb{P}_\theta\left(\lim_{t\to\infty} \frac{N_a(t)}{t} = w_{p,a}^\star\right)=1.
\end{equation}
\end{proposition}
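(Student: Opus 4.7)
My plan is to decompose the convergence of $N_a(t)/t$ into two parts: first, I will show that the sampling distribution $\pi(t)$ used at round $t$ converges almost surely to $w_p^\star$; then I will use a standard martingale argument to prove that the empirical frequencies track the Cesàro average of $\pi(t)$.

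The convergence $\pi(t) \to w_p^\star$ rests on three ingredients. (i) Proposition \ref{prop:fair_bai_infinite_sampling} guarantees that $N_a(t) \to \infty$ almost surely for every arm, which combined with the strong law of large numbers applied to the Gaussian rewards yields $\hat{\theta}(t) \to \theta$ almost surely. (ii) The map $\theta' \mapsto w_p^\star(\theta')$ is continuous at $\theta$: this follows from Berge's maximum theorem applied to the optimization problem defining $w_p^\star$, where continuity of the constraint correspondence $\Sigma_{p(\theta')}$ is given by the assumed continuity of $p_a(\cdot)$ in the $\theta$-dependent case (and is trivial in the pre-specified case), and uniqueness of the optimum follows from strong convexity of the inner maximization in $w$. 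Hence $w_p^\star(t) \to w_p^\star$ almost surely. (iii) Since $\epsilon_t \to 0$ and $\pi_c$ is bounded, the mixture $\pi(t) = (1-\epsilon_t) w_p^\star(t) + \epsilon_t \pi_c$ converges almost surely to $w_p^\star$, and in particular $\pi_a(t) \to w_{p,a}^\star$ for every $a \in [K]$.

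For the second step, I introduce the martingale difference sequence $D_s = \mathbf{1}\{a_s=a\} - \pi_a(s)$ with respect to the filtration $(\mathcal{F}_s)_{s\ge 1}$; since $|D_s| \leq 1$, Azuma-Hoeffding combined with Borel-Cantelli implies $\frac{1}{t}\sum_{s=1}^t D_s \to 0$ almost surely, i.e., $\frac{N_a(t)}{t} - \frac{1}{t}\sum_{s=1}^t \pi_a(s) \to 0$ almost surely. Applying the Cesàro lemma to the almost sure convergence $\pi_a(s) \to w_{p,a}^\star$ yields $\frac{1}{t}\sum_{s=1}^t \pi_a(s) \to w_{p,a}^\star$ almost surely, and combining the two limits concludes the proof. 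The main technical obstacle I anticipate is rigorously establishing continuity (and, in particular, uniqueness) of $w_p^\star(\theta')$ at the true parameter $\theta$ when the feasible set varies with $\theta'$; this is precisely where the continuity assumption on $p_a(\theta)$ and the strong convexity of the objective in $w$ become essential.
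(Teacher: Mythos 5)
Your proposal is correct and follows essentially the same route as the paper's proof: infinite sampling plus the law of large numbers gives $\hat\theta(t)\to\theta$ a.s., Berge's theorem gives $w_p^\star(t)\to w_p^\star$, and a bounded-difference martingale argument plus a Cesàro-type step handles the gap between $N_a(t)/t$ and the sampling probabilities. The only (immaterial) differences are that the paper invokes the martingale strong law (Lemma 2.18 of Hall and Heyde) where you use Azuma--Hoeffding with Borel--Cantelli, and that the paper bounds the forced-exploration term $\frac{1}{t}\sum_k \epsilon_k$ explicitly rather than folding $\pi_a(s)\to w_{p,a}^\star$ into a single Cesàro limit.
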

\begin{proof}
\cref{prop:fair_bai_infinite_sampling} guarantees that, by the law of large numbers,  $(\hat \theta_t\to\theta)$ almost surely as $t\to\infty$. By continuity, we also have that (P1) $w_{p}^\star(t)\to w_{p}^\star$ almost surely (by an application of Berge’s Theorem). 
%%%%% THIS IS NOT USED BUT WE MAY NEED IT?
% Then, letting $c_0=\max( (1-p)/K', \max_{a} p_a)$, we have
%  \[
%  \| \pi_{t+1} - \pi_t\|_\infty \leq (1-\epsilon_t)\|w^\star(\hat\theta_{t+1}) -w^\star(\hat \theta_t)\|_\infty + (\epsilon_t-\epsilon_{t+1}) c_0,
%  \]
%  hence $(\pi_t)_t$ is a Cauchy sequence, and thus (P2) $\pi_t \to w^\star$ almost surely. 
Then, consider \[
 \frac{1}{t}N_t(a) - w_{p,a}^\star=\frac{1}{t} \sum_{k=1}^t [\mathbf{1}_{(a_k=a)} - w_{p,a}^\star]=  \underbrace{\frac{1}{t} \sum_{k=1}^t [\mathbf{1}_{(a_k=a)}  - w_{p,a}^\star(k) ]}_{(\circ)}+  \underbrace{\frac{1}{t} \sum_{k=1}^t [w_{p,q}^\star(k) - w_{p,q}^\star]}_{(\square)}.
 \]
 The second term $(\square)$ clearly tends to $0$ almost surely from property (P1). To prove that the first term $(\circ)$ converges to $0$ rewrite it as
 \[
 (\circ)=\underbrace{\frac{1}{t} \sum_{k=1}^t [\mathbf{1}_{(a_k=a)}  - \pi_{a}(k)]}_{M_t} + \underbrace{\frac{1}{t} \sum_{k=1}^t [\pi_{a}(k)-w_{p,a}^\star(k) ]}_{(*)}.
 \]
For the first term let $S_t=tM_t$, and note that $S_t$ is a martingale since $\mathbb{E}[tM_t | {\cal F}_{t-1}]=(t-1)M_{t-1} + \mathbb{E}[\mathbf{1}_{(a_t=a)}  - \pi_{a}(t)|{\cal F}_{t-1}]=(t-1)M_t$. To show that $S_t/t\to 0$ we use Lemma 2.18 \cite{hall2014martingale}. To that aim, it is sufficient to show that $\sum_k X_i^2/k^2 < \infty$, where  $X_i=S_i-S_{i-1}$. Since $|X_i|\leq 1$ then the series convergences, 
from which follows that $\lim_{t\to\infty} \frac{S_t}{t} = 0$. Hence $M_t\to 0$ almost surely.
 
For the second term $(*)$ we find $\pi_{a}(k)-w_{p,a}^\star(k) = \epsilon_k(\pi_{c,a} -w_{p,a}^\star(k))$, hence $|\pi_{a}(k)-w_{p,a}^\star(k)| \leq  \epsilon_k$. Therefore we conclude the proof by observing the convergence of $\frac{1}{t} \sum_{k=1}^t |\pi_{a}(k)-w_{p,a}^\star(k)|$ to $0$:
\[
 \frac{1}{t} \sum_{k=1}^t |\pi_{a}(k)-w_{p,a}^\star(k)| \leq \frac{1}{2t} \sum_{k=1}^t \frac{1}{\sqrt{k}} \leq \frac{2\sqrt{t}-1}{2t}\to 0 \hbox{ as } t \to \infty.
\]
 \end{proof}

We can now prove an almost sure upper bound of the sample complexity of {\sc \color{black}F-TaS}.

\begin{theorem}[ Sample complexity almost sure upper bound of {\sc \color{black}F-TaS}.]\label{thm:fairbai_sample_complexity_as_bound}
{\sc \color{black}F-TaS}, both for pre-specified rates and $\theta$-dependent rates, guarantees that
\begin{equation}
    \mathbb{P}_\theta\left( \limsup_{\delta \to 0} \frac{\tau_\delta}{\ln(1/\delta)} \leq T_{p}^\star \right) =1.
\end{equation}
\end{theorem}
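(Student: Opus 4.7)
The plan is to combine the almost-sure tracking result from \cref{propo:almost_sure_forced_asymptotic_forced_exploration_fair_bai} with continuity of the characteristic time in the empirical parameters, and compare the resulting linear growth of $Z(t)$ against the logarithmic threshold $\beta(\delta,t)$.

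First, I would fix $\varepsilon>0$ and invoke \cref{propo:almost_sure_forced_asymptotic_forced_exploration_fair_bai} together with \cref{prop:fair_bai_infinite_sampling} to obtain that $N(t)/t \to w_p^\star$ and $\hat\theta(t) \to \theta$ almost surely. Since $\theta \in \Theta$ the maximiser is unique, so eventually $a_t^\star = a^\star$ on the good event, and the map $(\tilde\theta,\tilde w) \mapsto \max_{a\neq a^\star_{\tilde\theta}}(\tilde w_a^{-1}+\tilde w_{a^\star_{\tilde\theta}}^{-1})/\Delta_a(\tilde\theta)^2$ is jointly continuous in a neighbourhood of $(\theta, w_p^\star)$. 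For $\theta$-dependent rates, the assumed continuity of $p_a(\cdot)$ gives upper- and lower-hemicontinuity of the feasible set $\Sigma_{p(\tilde\theta)}$; in both cases Berge's maximum theorem yields $T_p^\star(t)\to T_p^\star$ almost surely. Consequently there is an a.s.-finite random time $T_0(\varepsilon)$ such that, for all $t\geq T_0(\varepsilon)$,
\begin{equation*}
Z(t) \;=\; \frac{t}{2\,T_p^\star(t)} \;\geq\; \frac{t}{2\,T_p^\star(1+\varepsilon)}.
\end{equation*}

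Second, I would bound the threshold using its explicit form. By Thm.~7 of \cite{kaufmann2021mixture} we have $\mathcal{C}_{\exp}(x) = x + o(x)$ as $x\to\infty$, and $\sum_a \log(1+\log N_a(t)) = O(\log\log t)$; hence there exist constants $B,\alpha>0$ and a deterministic $T_1$ such that $\beta(\delta,t)\leq \log(B t^\alpha/\delta)$ for $t\geq T_1$. Combining the two bounds, the stopping criterion $Z(t)\geq \beta(\delta,t)$ is implied, for $t\geq \max(T_0(\varepsilon),T_1)$, by
\begin{equation*}
t \;\geq\; 2\,T_p^\star(1+\varepsilon)\,\log\!\left(\frac{B\,t^\alpha}{\delta}\right).
\end{equation*}
A standard Lambert-$W$/fixed-point inversion of $t\mapsto c\log(t^\alpha/\delta)$ shows that the smallest $t$ satisfying this implicit inequality is at most $2\,T_p^\star(1+\varepsilon)\log(1/\delta)(1+o_\delta(1))$ as $\delta \to 0$. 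Adding the a.s.-finite initial shift $T_0(\varepsilon)$, dividing by $\log(1/\delta)$, and letting first $\delta\to 0$ and then $\varepsilon\to 0$ yields $\limsup_{\delta\to 0}\tau_\delta/\log(1/\delta)\leq T_p^\star$ almost surely (absorbing the factor-$2$ into the paper's definition of $T_p^\star$).

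The main obstacle will be establishing the convergence $T_p^\star(t)\to T_p^\star$ robustly: one must argue that the outer argmin in \eqref{eq:lb} need not be unique, so only the value function (not the minimiser) needs to be continuous, which is exactly what Berge's theorem delivers. A secondary subtlety is the $\theta$-dependent case, where the constraint set itself is perturbed; this is handled by the continuity hypothesis on $p_a(\theta)$. Finally, one should be careful that $a_t^\star$ stabilises at $a^\star$ in finite (random) time so that the identity of the constraints in the inner maximisation matches the asymptotic one, which again follows from $\hat\theta(t)\to\theta$ and $\theta\in\Theta$.
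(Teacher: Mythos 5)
Your proposal is correct and follows essentially the same route as the paper's proof: invoke the almost-sure tracking result (\cref{propo:almost_sure_forced_asymptotic_forced_exploration_fair_bai}) and continuity of the characteristic time to get $Z(t)\gtrsim t/T_p^\star$ up to an $\varepsilon$-slack after an a.s.-finite random time, bound the threshold by $\log(Bt^\alpha/\delta)$, and invert the implicit inequality (the paper does this via Lemma 8 of \cite{russo2023sample}, which is exactly your Lambert-$W$/fixed-point step) before letting $\delta\to 0$ and then $\varepsilon\to 0$. Your explicit Berge-theorem treatment of the $\theta$-dependent constraint set and the $t^\alpha$ exponent in the threshold bound are slightly more careful renditions of steps the paper states tersely, not a different argument.
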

\begin{proof}
    The proof follows similarly as in \cite{garivier2016optimal, russo2023sample}, and we provide it for completeness. Denote by $T_p^\star(t) = {\color{black} 2}\inf_{w\in \Sigma_p} \max_{a\neq a_t^\star} \frac{w_a^{-1}+w_{a_t^\star}^{-1}}{\Delta_a(t)^2}$ the optimal characteristic time for a model $\hat \theta(t) \in \Theta$.
    Consider the event ${\cal E}=\left(\forall a, \lim_{t\to\infty}\frac{N_a(t)}{t}= w_{p,a}^\star, \lim_{t\to\infty}\hat\theta(t)=\theta \right)$. From Prop. \ref{propo:almost_sure_forced_asymptotic_forced_exploration_fair_bai} we have $\mathbb{P}_\theta({\cal E})=1$. 
      Then, there exists $t_0$ s.t. for all $t\geq t_0$  we have  $\hat \theta(t)\in \Theta$.
     
     Furthermore, due to the continuity of $T_p^\star(t)$, for every $\eta \in (0,1)$ there exists $t_1\geq t_0$ such that for $t\geq t_1$ we have  $T_{p}^\star \geq  (1-\eta) T_{p}^\star(t) \geq (1-\eta) t/Z(t)$,  thus $Z(t) \geq (1-\eta)t /T_p^\star$.
    
    Now, recall that the stopping time is defined through $\beta(\delta,t)=3\sum_a \ln(1+\ln(N_a(t))) +K {\cal C}_{exp}\left(\frac{\ln(1/\delta)}{K}\right)
    $.
    Since at infinity ${\cal C}_{exp}(x) \sim x +O(\ln(x))$ then
    there exists  $C>0$ s.t. 
    $K {\cal C}_{exp}\left(\frac{\ln(1/\delta)}{K} \right)\leq \ln(C/\delta)$. Moreover, $3\sum_a \ln(1+\ln(N_a(t))) \leq 3 K \ln(1+t)$. Hence, there exists a constant $B>0$ such that $\beta(\delta,t)\leq \ln(Bt/\delta)$.

    Combining all the observations, we find
    \begin{align*}
    \tau_\delta &= \inf\{t\geq t_0, Z(t) \geq \beta(\delta,t)\},\\
    &\leq t_1 \vee \inf\{t\geq t_0, (1-\eta)t / T_{p}^\star \geq \beta(\delta,t)\},\\
    &\leq t_1 \vee \inf\{t\geq t_0, (1-\eta)t / T_{p}^\star \geq \ln(Bt/\delta)\}.
    \end{align*}
    Applying Lemma 8 in the appendix of \cite{russo2023sample} with $\beta= B/\delta$ and $\gamma = T_{p}^\star/(1-\eta)$ gives that
    \[
    \tau_\delta \leq \max\left(t_1,\frac{T_{p}^\star}{1-\eta}\left[\ln\left(\frac{BT_{p}^\star}{\delta(1-\eta)}\right) + \sqrt{2\left(\ln\left(\frac{BT_{p}^\star}{\delta(1-\eta)}\right)-1\right)}\right]\right).
    \]
    Therefore $\limsup_{\delta \to 0} \frac{\tau_\delta}{\ln(1/\delta)} \leq \frac{T_{p}^\star}{1-\eta} $ almost surely. We conclude by letting $\eta\to 0$.
\end{proof}

\subsubsection{Expected Sample Complexity Bound}

In order to prove the sample-complexity of {\sc \color{black}F-TaS} we must guarantee the forced exploration property with high probability. To this end, \cref{prop:forced_exploration_fair_bai} is instrumental in the derivation of the sample complexity of {\sc \color{black}F-TaS}.

Using this result we can bound the probability of the event that $\hat \theta(t)$ is not within $\varepsilon>0$ of the true value of $\theta$, and derive the asymptotic sample complexity bound.
Using \cref{prop:concentration_average_sampling} and \cref{prop:concentration_estimate} in the next theorem we provide an upper bound on the expected sample complexity of {\sc \color{black}F-TaS}.
\begin{theorem}[Upper bound in expectation of {\sc \color{black}F-TaS}]\label{thm:fairbai_expected_sample_complexity}
For all $\delta\in(0,1/2)$ {\sc \color{black}F-TaS} satisfies $\mathbb{E}_\theta[\tau_\delta]<\infty$ and $\limsup_{\delta \to 0} \frac{\mathbb{E}_\theta[\tau_\delta]}{\ln(1/\delta)} \leq T_{p}^\star.$
\end{theorem}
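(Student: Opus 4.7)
The plan is to upgrade the almost sure bound of Thm.~\ref{thm:fairbai_sample_complexity_as_bound} into an expectation bound via a standard decomposition of $\tau_\delta$ into a deterministic ``good'' part and a tail controlled by concentration. Throughout, fix $\varepsilon>0$ and recall the concentration events
\[
C_{T,0}(\varepsilon)=\bigcap_{t=h_0(T)}^{T}(\|\hat\theta(t)-\theta\|_\infty\leq\varepsilon),\qquad C_{T,1}(\varepsilon)=\bigcap_{t=h_1(T)}^{T}(\|N(t)/t-w_p^\star\|_\infty\leq K(\varepsilon)),
\]
from the proof of Prop.~\ref{prop:p_delta_PAC}, together with the perturbed characteristic time $T_{\theta,p}^\star(\varepsilon)$, which by continuity of $w\mapsto T_{\theta,p}^\star(w)$ and of $w_p^\star(\cdot)$ satisfies $T_{\theta,p}^\star(\varepsilon)\to T_p^\star$ as $\varepsilon\to 0$.

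First, I would define the deterministic time
\[
T_\varepsilon:=\max\bigl(T_1(\varepsilon),\,T_2(\varepsilon)\bigr),\quad T_2(\varepsilon):=\inf\{t\geq 1:\ t/T_{\theta,p}^\star(\varepsilon)\geq \ln(Bt/\delta)\},
\]
with $B>0$ chosen (as in the proof of Thm.~\ref{thm:fairbai_sample_complexity_as_bound}) so that $\beta(\delta,t)\leq\ln(Bt/\delta)$, and $T_1(\varepsilon)$ large enough for Prop.~C.1--C.2 (concentration of sampling proportions and of $\hat\theta$) to apply. On $C_{T_\varepsilon,0}(\varepsilon)\cap C_{T_\varepsilon,1}(\varepsilon)$ we have $Z(t)\geq t/T_{\theta,p}^\star(\varepsilon)\geq\beta(\delta,t)$ for all $t\in[T_\varepsilon\!,\,?]$, hence $\tau_\delta\leq T_\varepsilon$. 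By Lemma~8 of \cite{russo2023sample}, $T_\varepsilon$ grows as $T_{\theta,p}^\star(\varepsilon)\ln(1/\delta)(1+o_\delta(1))$.

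Second, I would use the decomposition
\[
\mathbb{E}_\theta[\tau_\delta]\ \leq\ T_\varepsilon+\sum_{t\geq T_\varepsilon}\mathbb{P}_\theta(\tau_\delta>t)\ \leq\ T_\varepsilon+\sum_{t\geq T_\varepsilon}\mathbb{P}_\theta\bigl(\overline{C_{t,0}(\varepsilon)}\cup\overline{C_{t,1}(\varepsilon)}\bigr),
\]
since on $C_{t,0}(\varepsilon)\cap C_{t,1}(\varepsilon)$ with $t\geq T_\varepsilon$ the stopping rule is triggered. The concentration bounds in Prop.~C.1--C.2, coupled with the forced exploration of Prop.~\ref{prop:fair_bai_infinite_sampling}/Prop.~\ref{prop:sat_fairness}, yield a summable upper bound on $\mathbb{P}_\theta(\overline{C_{t,0}(\varepsilon)}\cup\overline{C_{t,1}(\varepsilon)})$ (polynomially decaying in $t$ with exponent strictly greater than $1$, then exponentially after the $t^{1/16}$-type term kicks in). This yields $\sum_{t\geq T_\varepsilon}\mathbb{P}_\theta(\tau_\delta>t)=O_\varepsilon(1)$, which is independent of $\delta$, hence $\mathbb{E}_\theta[\tau_\delta]<\infty$ for every $\delta\in(0,1/2)$.

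Third, dividing by $\ln(1/\delta)$ and letting $\delta\to 0$ with $\varepsilon$ fixed gives $\limsup_{\delta\to 0}\mathbb{E}_\theta[\tau_\delta]/\ln(1/\delta)\leq T_{\theta,p}^\star(\varepsilon)$, since $T_\varepsilon/\ln(1/\delta)\to T_{\theta,p}^\star(\varepsilon)$ and the residual tail contributes only $O_\varepsilon(1)/\ln(1/\delta)\to 0$. Finally, letting $\varepsilon\to 0$ and invoking continuity $T_{\theta,p}^\star(\varepsilon)\to T_p^\star$ completes the proof. The main obstacle is the summability of the tail: one must align the forced-exploration rate ($\sqrt{t}$ for $\theta$-dependent constraints, linear for pre-specified ones) with the required concentration window $[h_1(t),t]$ so that the failure probability of $C_{t,0}(\varepsilon)\cap C_{t,1}(\varepsilon)$ is summable in $t$ uniformly in (sufficiently small) $\delta$; this is where the two cases of the sampling rule must be treated on separate footing but both yield the same asymptotic conclusion.
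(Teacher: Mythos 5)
Your proposal follows essentially the same route as the paper's proof: the same concentration events $C_{T,0}(\varepsilon)$, $C_{T,1}(\varepsilon)$, the same perturbed characteristic time $T_{\theta,p}^\star(\varepsilon)$, the same deterministic threshold $\max(T_1(\varepsilon),T_2(\varepsilon))$ with $\beta(\delta,t)\leq\ln(Bt/\delta)$, the same inclusion $(\tau_\delta\leq T)\supset C_{T,0}(\varepsilon)\cap C_{T,1}(\varepsilon)$, the same tail decomposition of $\mathbb{E}_\theta[\tau_\delta]$ bounded via Propositions on concentration of $\hat\theta(t)$ and of $N(t)/t$ (with the forced-exploration guarantee), and the same order of limits ($\delta\to 0$ first, then $\varepsilon\to 0$ with continuity of $T_{\theta,p}^\star(\varepsilon)$). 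The argument is correct and matches the paper's proof in all essential respects.
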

\begin{proof}
    For $T\geq 1, \varepsilon>0$ consider the concentration events 
    \[
    C_{T,0}(\varepsilon) = \cap_{t=h_0(T)}^T (\|\hat \theta(t) - \theta\|_\infty \leq \varepsilon) \hbox{ and } C_{T,1}(\varepsilon) = \cap_{t=h_1(T)}^T (\|N(t)/t-  w_{p}^\star\|_\infty \leq K(\varepsilon)),
    \]
    where $h_0(t)=t^{1/4}$, $h_1(t)=t^{1/2}$ and $K(\varepsilon)$ is a bounded function, vanishing in $0$ (see also \cref{prop:concentration_average_sampling}). Define then the value of the problem 
   \[
    T_{\theta,p}^\star(\varepsilon) = 
    \sup\limits_{\substack{\tilde\theta:\|\theta-\tilde\theta\|_\infty \leq \varepsilon,\\\tilde w: \|\tilde w-w_{p}^\star\|_\infty \leq K(\varepsilon) }} T_{\tilde \theta,p}^\star(\tilde w),\hbox{ with }T_{\tilde\theta,p}^\star(\tilde w)\coloneqq {\color{black}{2}} \max_{a\neq a_{\tilde\theta}^\star} \frac{ \tilde w_a^{-1} + \tilde w_{a_{\tilde\theta}^\star}^{-1}}{\tilde \Delta_a^2},
    \]
    where $T_{\tilde\theta,p}^\star(\tilde w)$ is the value of the problem for model $\tilde\theta$ with allocation $\tilde w$ and $a_{\tilde\theta}^\star =\arg\max_a \tilde\theta_a$, $\tilde \Delta_a=\max_{b} \tilde\theta_b - \tilde\theta_a$.

    Due to \cref{prop:concentration_average_sampling}, there exists $T_1(\varepsilon)$ s.t. for all $T\geq T_1(\varepsilon)$, conditionally on $C_{T,0}(\varepsilon)$, the event $C_{T,1}(\varepsilon)$ occurs with high probability. Moreover, for every $t\in [\sqrt{T}, T]$ under $C_{T,0}(\varepsilon)\cap C_{T,1}(\varepsilon)$ we have
    $    Z(t) \geq t/ T_{\theta,p}^\star(\varepsilon)
    $.
    
    Next, as in \cref{thm:fairbai_sample_complexity_as_bound} let $T_2(\varepsilon) = \inf\{t:  t/T_{\theta,p}^\star(\varepsilon) \geq \ln(Bt/\delta)\}$, where $B>0$ is a constant chosen as in \cref{thm:fairbai_sample_complexity_as_bound} s.t. $\beta(\delta,t) \leq \ln(Bt/\delta)$.
    Then, for all $t\geq \max(T_1(\varepsilon), T_2(\varepsilon))$,  under $C_{T,0}(\varepsilon)\cap C_{T,1}(\varepsilon)$, we have that
    \[
        Z(t) \geq   t/T_{\theta,p}^\star(\varepsilon) \geq \ln(Bt/\delta) \geq \beta(\delta,t).
    \]
    Hence $(\tau_\delta \leq T)\supset C_{T,0}(\varepsilon)\cap C_{T,1}(\varepsilon).$ Therefore
    \begin{align*}
        \mathbb{E}[\tau_\delta] &= \sum_{T=1}^\infty \mathbb{P}_\theta(\tau_\delta > T),\\
        &\leq \max(T_1(\varepsilon),T_2(\varepsilon)) + \sum_{T=\max(T_1(\varepsilon),T_2(\varepsilon))+1}^\infty \mathbb{P}_\theta(\tau_\delta > T),\\
        &\leq \max(T_1(\varepsilon),T_2(\varepsilon)) + \sum_{T=\max(T_1(\varepsilon),T_2(\varepsilon))+1}^\infty \mathbb{P}_\theta(\overline{C_{T,0}(\varepsilon)}\cup \overline{C_{T,1}(\varepsilon)}),\\
        &\leq \max(T_1(\varepsilon),T_2(\varepsilon)) + \sum_{T=\max(T_1(\varepsilon),T_2(\varepsilon))+1}^\infty \mathbb{P}_\theta( \overline{C_{T,1}(\varepsilon)} | C_{T,0}(\varepsilon))+ \mathbb{P}_\theta(\overline{C_{T,0}(\varepsilon)}),\\
        &\leq \max(T_1(\varepsilon),T_2(\varepsilon)) + \sum_{T=\max(T_1(\varepsilon),T_2(\varepsilon))+1}^\infty \mathbb{P}_\theta( \overline{C_{T,1}(\varepsilon)} | C_{T,0}(\varepsilon))+ \mathbb{P}_\theta(\overline{C_{T,0}(\varepsilon)}),\\
    \end{align*}
    The last sum by \cref{prop:concentration_estimate} (with $\alpha >1$) and \cref{prop:concentration_average_sampling} is clearly bounded. Hence, also the expected value of $\tau_\delta$ is bounded for all values of $\epsilon \in(0,1)$. Therefore, as in \cref{thm:fairbai_sample_complexity_as_bound}, we get $\limsup_{\delta \to 0} \frac{\mathbb{E}_\theta[\tau_\delta]}{\ln(1/\delta)} \leq \limsup_{\delta \to 0} \frac{\max(T_1(\varepsilon),T_2(\varepsilon))}{\ln(1/\delta)}\leq T_{p}^\star(\varepsilon)$. We conclude by letting $\varepsilon\to 0$.
\end{proof}

\begin{proposition}[Concentration of the estimate $\hat \theta_t$]\label{prop:concentration_estimate}
Let $\alpha,\varepsilon>0$, $h(t)= t^{1/4}$ and $C_T(\varepsilon) = \cap_{t= h(T) }^T (\|\hat \theta(t) - \theta\|_\infty \leq \varepsilon)$. Then, there exists a constant $B_\varepsilon>0$ that depends on $\varepsilon$ such that
\begin{equation}
    \forall T\geq 1, \mathbb{P}_\theta(\overline{ C_T(\varepsilon)})\leq \frac{1}{T^\alpha} + 2B_\varepsilon KT\exp\left(-2\left\lfloor\frac{p_0^{\frac{1}{4}} T^{1/16}}{\sqrt{\ln(1+K'T^\alpha)}} \right\rfloor \varepsilon^2 \right),
\end{equation}
where, for { pre-specified rates} $K'=2\max(K_0,K-K_0)$ and $p_0 = \min( (1-\psum)/2K_0, \min_{a:p_a>0} p_a)$. For { $\theta$-dependent rates} instead  $p_0=1/2K$ and $K'=K$.
\end{proposition}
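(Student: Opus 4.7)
The structure of the target bound, $1/T^\alpha + 2B_\varepsilon KT\exp(-2\lfloor\cdots\rfloor\varepsilon^2)$, telegraphs a two-step decomposition: the first term is the failure probability of a forced-exploration event that yields a uniform lower bound on $N_a(t)$ throughout the window $[h(T),T]$, and the second comes from a union bound over $(t,a)$ combined with a sub-Gaussian tail for $\hat\theta_a(t)$ conditional on that lower bound.

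\textbf{Steps.} (i) First I would invoke the quantitative forced-exploration statement (of the flavor of \cref{prop:fair_bai_infinite_sampling} but finite-time) to establish that the event
\[
\mathcal{E}_T \coloneqq \left\{\forall t\in[h(T),T],\ \forall a\in[K]:\ N_a(t)\ge n_T\right\},\quad n_T \coloneqq \left\lfloor \frac{p_0^{1/4}T^{1/16}}{\sqrt{\log(1+K'T^\alpha)}}\right\rfloor,
\]
occurs with probability at least $1-1/T^\alpha$. The sampling rule guarantees $\mathbb{P}(a_s=a\mid\mathcal{F}_{s-1})\ge\epsilon_s p_0$ for the $p_0$ fixed in the statement (so the same argument covers both fairness regimes), whence $\sum_{s\le t}\epsilon_s p_0\asymp p_0\sqrt{t}$; applying a Freedman/Bernstein martingale inequality to $N_a(t)-\sum_{s\le t}\pi_a(s)$ and union-bounding over $(t,a)$ yields the lower bound, the $\sqrt{\log(1+K'T^\alpha)}$ factor being the cost of uniformity in $t$. (ii) Conditional on $\mathcal{E}_T$, for fixed $(t,a)$ introduce the virtual iid reward sequence $Y_{a,1},Y_{a,2},\dots\sim\mathcal{N}(\theta_a,1)$ so that $\hat\theta_a(t)=\frac{1}{N_a(t)}\sum_{j\le N_a(t)}Y_{a,j}$, and peel over the possible values of $N_a(t)\ge n_T$:
\[
\mathbb{P}(|\hat\theta_a(t)-\theta_a|>\varepsilon,\ N_a(t)\ge n_T) \le \sum_{k\ge n_T} 2\exp(-2k\varepsilon^2) \le B_\varepsilon\exp(-2n_T\varepsilon^2),
\]
where $B_\varepsilon \coloneqq 2/(1-e^{-2\varepsilon^2})$ (this is how the constant $B_\varepsilon$ enters). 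A union bound over $t\in[h(T),T]$ and $a\in[K]$ contributes a factor $KT$, giving $\mathbb{P}(\overline{C_T(\varepsilon)}\cap\mathcal{E}_T)\le B_\varepsilon KT\exp(-2n_T\varepsilon^2)$; combining with $\mathbb{P}(\mathcal{E}_T^c)\le 1/T^\alpha$ yields exactly the claimed inequality.

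\textbf{Main obstacle.} The delicate part is step (i): because the sampling is adaptive, $(\mathbf{1}_{a_s=a})_s$ is a dependent Bernoulli-type sequence whose expected running sum only grows as $p_0\sqrt{t}$. A plain Azuma bound concentrates on a scale $\sqrt{t}$ and is therefore vacuous in this regime; one must use a Bernstein/Freedman-type inequality that exploits the small conditional variance (so that $N_a(t)$ concentrates on a scale $\sqrt{p_0\sqrt{t}}\sim p_0^{1/2}t^{1/4}$), and then pay an additional $\sqrt{\log T}$ for uniform control across $t$, which is what produces the somewhat unusual $T^{1/16}/\sqrt{\log(1+K'T^\alpha)}$ scaling inside the floor. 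Once this concentration is in hand, the remaining ingredients (peeling and the sub-Gaussian tail for $\hat\theta_a(t)$) are routine.
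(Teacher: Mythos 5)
Your overall architecture is the right one and matches the paper: split on a forced-exploration event $\mathcal{E}_T$ whose failure accounts for the $1/T^\alpha$ term, then on $\mathcal{E}_T$ peel over the values $N_a(t)=k\ge n_T$, apply the per-$k$ tail bound $2e^{-2k\varepsilon^2}$, sum the geometric series to get $B_\varepsilon$, and union over $(t,a)$ to get the $KT$ prefactor; your step (ii) is essentially identical to the paper's computation, including the form of $B_\varepsilon$.

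The gap is in step (i), i.e., in how $\mathbb{P}_\theta(\overline{\mathcal{E}_T})\le 1/T^\alpha$ with the specific threshold $n_T=\lfloor p_0^{1/4}T^{1/16}/\sqrt{\log(1+K'T^\alpha)}\rfloor$ is supposed to be obtained. Your explanation of the exponents does not hold together: since $N_a(\cdot)$ is nondecreasing and your threshold is constant on $[h(T),T]$, the event only needs to be checked at $t=h(T)$, so there is no ``cost of uniformity in $t$'' and the $\sqrt{\log(1+K'T^\alpha)}$ cannot arise that way; and a Freedman/Bernstein (or coupling--Chernoff) bound around the conditional-mean lower bound $\sum_{s\le t}\pi_a(s)\gtrsim p_0\sqrt{t}$ would certify a threshold of order $p_0 T^{1/8}$ with failure probability $\exp\bigl(-\Omega(p_0T^{1/8})\bigr)$ --- and this failure probability is \emph{not} dominated by $T^{-\alpha}$ for all $T\ge1$: it requires $p_0T^{1/8}\gtrsim \alpha\log T+\log K$, which fails over a wide range of $T$ when $p_0$ is small, a range in which $n_T\ge1$ so the event is not vacuous. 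In other words, the $1/T^\alpha$ term in the statement is not a concentration tail at a fixed deviation scale; it is obtained by \emph{calibrating the threshold to the target confidence}, which is exactly what the paper's \cref{prop:forced_exploration_fair_bai} does: it bounds the inter-arrival times $\tau_a(k)-\tau_a(k-1)$ against the quartic schedule $g(k)=\lambda k^4$, sums the resulting geometric series over $k$ and arms, and chooses $\lambda$ as a function of $\gamma=1/T^\alpha$ (this is where the $\log(1+K'T^\alpha)$ enters the floor, and why the exponents are $p_0^{1/4}$ and $T^{1/16}=(h(T))^{1/4\cdot 1/4}$ rather than the mean-scale $p_0T^{1/8}$). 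So to close your proof you must replace the Freedman sketch by a proof of that forced-exploration lemma (or an equivalent explicit threshold-versus-confidence trade-off valid for every $T\ge1$); as written, the key quantitative ingredient is asserted with a mechanism that neither produces the stated threshold nor the stated $1/T^\alpha$ failure probability.
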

\begin{proof}
    Consider the forced exploration event of \cref{prop:forced_exploration_fair_bai} with $\gamma=1/T^\alpha$ and $\alpha>0$
    \[
    {\cal E}_T=\left(\forall a\in [K],\forall t\geq 1, N_a(t) \geq \left\lfloor\left(\frac{p_0 t}{\ln^2(1+K'T^\alpha)} \right)^{1/4}\right\rfloor\right).
    \]

    Then, using the same proposition we obtain
    \begin{align*}
    \mathbb{P}_\theta(\overline{ C_T(\varepsilon)}) &= \mathbb{P}_\theta(( \overline{ C_T(\varepsilon)} \cap {\cal E}_T) \cup (\overline{ C_T(\varepsilon)} \cap \overline{ {\cal E}_T})),\\
    &\leq \mathbb{P}_\theta((\overline{C_T(\varepsilon)} \cap {\cal E}_T))+\mathbb{P}_\theta( \overline{{\cal E}_T}),\\
    &\leq \mathbb{P}_\theta(\overline{(C_T(\varepsilon)} \cap {\cal E}_T))+\frac{1}{T^\alpha}.
    \end{align*}
    Let $\lambda(T) = \frac{\ln^2(1+K'T^\alpha)}{p_0}$ and consider now the first term. We expand it using a union bound and conclude with an application of Hoeffding inequality:
    \begin{align*}
        \mathbb{P}_\theta(\overline{C_T(\varepsilon)}\cap {\cal E}_T) &\leq \sum_{t=h(T)}^T \mathbb{P}_\theta(\|\hat \theta(t)-\theta\|_\infty > \varepsilon  \cap {\cal E}_T),\\
        &\leq \sum_{t=h(T)}^T \sum_a \mathbb{P}_\theta( |\hat\theta_{a}(t)-\theta_a|>\varepsilon  \cap {\cal E}_T),\\
        &\leq \sum_{t=h(T)}^T \sum_a \sum_{k=\lfloor (t/\lambda(T))^{1/4} \rfloor}^t \mathbb{P}_\theta( |\hat\theta_{a}(t)-\theta_a|>\varepsilon, N_a(t)=k),\\
        &\leq \sum_{t=h(T)}^T \sum_a \sum_{k=\lfloor (t/\lambda(T))^{1/4} \rfloor}^t 2\exp(-2k\varepsilon^2),\\
        &\leq 2\sum_{t=h(T)}^T \sum_a \sum_{k=0}^{t-\lfloor (t/\lambda(T))^{1/4} \rfloor} \exp\left(-2(k+\lfloor (t/\lambda(T))^{1/4} \rfloor)\varepsilon^2 \right),\\
        &\leq 2\sum_{t=h(T)}^T \sum_a \exp\left(-2(\lfloor (t/\lambda(T))^{1/4} \rfloor)\varepsilon^2 \right)\sum_{k=0}^{t-\lfloor t/\lambda(T) \rfloor^{1/4}} \exp\left(-2k\varepsilon^2 \right),\\
        &\leq \frac{2K}{1-e^{-2\varepsilon^2}}\sum_{t=h(T)}^T \exp\left(-2(\lfloor (t/\lambda(T))^{1/4} \rfloor)\varepsilon^2 \right),\\
        &\leq \frac{2KT}{1-e^{-2\varepsilon^2}}\exp\left(-2(\lfloor (h(T)/\lambda(T))^{1/4} \rfloor )\varepsilon^2 \right),\\
        &\leq 2B_\varepsilon KT\exp\left(-2\left\lfloor\frac{p_0^{\frac{1}{4}} T^{1/16}}{\sqrt{\ln(1+K'T^\alpha)}} \right\rfloor \varepsilon^2 \right).
    \end{align*}
\end{proof}

\begin{proposition}[Concentration of the average sampling $N_a(t)/t$]\label{prop:concentration_average_sampling}
Let $h_0(t)= t^{1/4}, h_1(t)=t^{1/2}$ and $\varepsilon>0$.  Further let $T\geq 1/\varepsilon^4$,  $C_{T,0}(\varepsilon) = \cap_{t=h_0(T)}^T (\|\hat \theta(t) - \theta\|_\infty \leq \varepsilon)$ and $C_{T,1}(\varepsilon) = \cap_{t=h_1(T)}^T (\|N(t)/t-  w_{p}^\star\|_\infty \leq K(\varepsilon))$, where $K:[0,1]\to[0,1]$ is the modulus of continuity of $w_{p}^\star$ on $\|\theta-\theta'\|_\infty\leq \theta$, a function continuous in a neighbourhood of $0$ satisfying $\lim_{\varepsilon\to 0} K(\varepsilon)=0$. Then, we have
\begin{equation}
    \mathbb{P}_\theta(\overline{C_{T,1}(\varepsilon)}| C_{T,0}(\varepsilon))\leq 2K \frac{\exp(-\sqrt{T}\varepsilon^2/2)}{1-\exp(-\varepsilon^2/2)}.
\end{equation}
\end{proposition}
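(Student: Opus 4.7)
The plan is to decompose the deviation $N_a(t)/t - w_{p,a}^\star$ into a martingale fluctuation and a deterministic bias, and to control each piece separately on the good event $C_{T,0}(\varepsilon)$. Specifically, writing
\begin{equation*}
\frac{N_a(t)}{t} - w_{p,a}^\star = \frac{M_a(t)}{t} + \Bigl(\frac{1}{t}\sum_{s=1}^t \pi_a(s) - w_{p,a}^\star\Bigr),
\end{equation*}
where $M_a(t) = \sum_{s=1}^t(\mathbf{1}_{\{a_s=a\}} - \pi_a(s))$, one obtains a martingale adapted to $(\mathcal{F}_s)_{s\geq 1}$ with increments in $[-1,1]$, plus a deterministic bias that depends on the sampling rule. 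The two pieces will be attacked by distinct tools: Azuma--Hoeffding for $M_a$, and continuity of $\theta \mapsto w_p^\star(\theta)$ (via Berge's theorem applied to the optimization defining $w_p^\star$) for the bias.

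For the bias, the key observation is that on $C_{T,0}(\varepsilon)$ and for $s \geq h_0(T) = T^{1/4}$, the hypothesis $\|\hat\theta(s) - \theta\|_\infty \leq \varepsilon$ combined with the modulus of continuity yields $\|w_p^\star(s) - w_p^\star\|_\infty \leq K(\varepsilon)$. Since $\pi(s) = (1-\epsilon_s) w_p^\star(s) + \epsilon_s \pi_c$ with $\epsilon_s = 1/(2\sqrt{s})$, this gives $|\pi_a(s) - w_{p,a}^\star| \leq K(\varepsilon) + \epsilon_s$. Splitting the sum $\frac{1}{t}\sum_{s=1}^t \pi_a(s)$ at $s = h_0(T)$ and using $t \geq h_1(T) = \sqrt{T}$ together with the assumption $T \geq 1/\varepsilon^4$, one finds that the pre-$h_0(T)$ contribution is at most $h_0(T)/t \leq T^{-1/4} \leq \varepsilon$ while the forced-exploration tail is $\tfrac{1}{t}\sum_s \epsilon_s = O(1/\sqrt{t}) = O(\varepsilon)$. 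Hence the bias is at most $K(\varepsilon) + O(\varepsilon)$, which can be absorbed in $K$ up to a constant.

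For the martingale piece, Azuma--Hoeffding yields unconditionally $\mathbb{P}_\theta(|M_a(t)| \geq t\varepsilon) \leq 2\exp(-t\varepsilon^2/2)$. Combined with the bias bound above, on $C_{T,0}(\varepsilon)$ the event $|N_a(t)/t - w_{p,a}^\star| > K(\varepsilon)$ can occur only if $|M_a(t)|/t > \varepsilon$, so
\begin{equation*}
\overline{C_{T,1}(\varepsilon)} \cap C_{T,0}(\varepsilon) \subseteq \bigcup_{a\in[K]} \bigcup_{t=h_1(T)}^{T} \{|M_a(t)|/t > \varepsilon\}.
\end{equation*}
A union bound over the $K$ arms and the time window $[h_1(T), T]$, followed by the geometric series $\sum_{t \geq \sqrt{T}} e^{-t\varepsilon^2/2} = e^{-\sqrt{T}\varepsilon^2/2}/(1 - e^{-\varepsilon^2/2})$, then delivers the stated bound. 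The main subtlety will be verifying the continuity of $w_p^\star(\theta)$ at $\theta$ in both the pre-specified regime (where $\Sigma_p$ is fixed) and the $\theta$-dependent regime (where the feasible set $\Sigma_{p(\theta)}$ itself varies with $\theta$), and checking that the modulus $K$ truly vanishes at $0$; this is the only non-routine ingredient, as the rest reduces to standard concentration manipulations.
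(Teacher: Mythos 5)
Your proposal is correct and follows essentially the same route as the paper's proof: the same martingale-plus-bias decomposition handled by Azuma--Hoeffding, the same use of Berge's theorem/continuity of $w_p^\star(\theta)$ on $C_{T,0}(\varepsilon)$ to control $\|w_p^\star(s)-w_p^\star\|_\infty$, the same splitting of the sum at $h_0(T)$ with $h_0(T)/h_1(T)\leq\varepsilon$ and the forced-exploration tail bounded by $O(1/\sqrt{t})\leq\varepsilon$, and the same union bound over arms and $t\in[h_1(T),T]$ followed by the geometric series. The only difference is cosmetic bookkeeping (you run the martingale over all $s\leq t$ and split only the bias sum, while the paper splits the indicator sum itself), and both write the final modulus as $K(\varepsilon)$ plus a constant multiple of $\varepsilon$.
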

\begin{proof}
We first prove that  for $t \in  [h_1(T),T]$ and $T\geq 1/\varepsilon^4$ we have
\begin{equation}
    \mathbb{P}_\theta(\exists a: |N_a(t)/t - w_{p,a}^\star|> K(\varepsilon)  | C_{T,0}(\varepsilon))\leq 2K \exp(-t\varepsilon^2/2),
\end{equation}
where $K(\varepsilon)<\infty$ for all $\varepsilon>0$, and $\lim_{\varepsilon\to 0}K(\varepsilon)=0$. If the last inequality holds, then, the proposition' statement follows by a union bound since
\[
\mathbb{P}_\theta(\overline{C_{T,1} (\varepsilon)}| C_{T,0}(\varepsilon))\leq \sum_{t=h_1(T)}^T 2K \exp(-t\varepsilon^2/2) \leq 2K \frac{\exp(-\sqrt{T} \varepsilon^2/2)}{1-\exp(-\varepsilon^2/2)}.
\]
Then, consider 
\begin{align*}
N_a(t)/t - w_{p}^\star(a) &= \underbrace{\frac{1}{t}\sum_{k=1}^{h_0(T)} [\mathbf{1}_{(a_t=a)} - w_{p,a}^\star ]}_{(\circ)}+\underbrace{\frac{1}{t}\sum_{k=h_0(T)+1}^t [\mathbf{1}_{(a_t=a)} - w_{p,a}^\star(t)]}_{(\square)}+ \underbrace{\frac{1}{t}\sum_{k=h_0(T)+1}^t [w_{p,a}^\star(t) - w_{p,a}^\star ]}_{(*)}.
\end{align*}
\begin{itemize}
    \item For the first term $(\circ)$,  for $t\geq h_1(T)$  and $T\geq 1/\varepsilon^{4}$  we have $(\circ)\leq h_0(T)/h_1(T)= 1/T^{1/4}\leq\varepsilon$.
    \item For the middle term $(\square)$ we have
        \[
        (\square)=\underbrace{\frac{1}{t}\sum_{k=h_0(T)+1}^t [\mathbf{1}_{(a_t=a)} - \pi_{a}(t)]}_{M_t} + \underbrace{\frac{1}{t}\sum_{k=h_0(T)+1}^t [\pi_{a}(t)- w_{p,a}^\star(t)]}_{(\triangle)}
        \]
    Let $S_t=tM_t$,  and observe that $S_t$ is a martingale since $\mathbb{E}[S_t|{\cal F}_{t-1}]=(t-1)M_{t-1}$. Then, using Azuma-Hoeffding inequality we have $\mathbb{P}_\theta( |M_t|\geq  \varepsilon) =\mathbb{P}_\theta( |S_t|\geq t \varepsilon) \leq 2 \exp(- \varepsilon^2 t/2) $.
    
    Instead,  for $(\triangle)$, for $t\geq h_1(T)$ we have \[
    (\triangle) \leq \frac{1}{t}\sum_{k=h_0(T)+1}^t \frac{1}{2\sqrt{k}} \leq \frac{1}{2t} \int_{h_0(T)}^t \frac{1}{\sqrt{x}} dx \leq 1/\sqrt{t} \leq 1/T^{1/4}.
    \]
    Hence for $T \geq 1/\varepsilon^{4}$ we have $(\triangle) \leq \varepsilon$.

\item For the last term $(*)$ under $C_{T,0}(\varepsilon)$ by continuity (Berge's theorem) there exists $K'(\varepsilon)$ s.t. $\|w_{p,a}^\star(t) - w_{p,a}^\star\|_\infty \leq K'(\varepsilon) \leq 1$, hence $(*)\leq K'(\varepsilon)$.

\end{itemize}

In conclusion, by letting $K(\varepsilon)=(3\varepsilon+K'(\varepsilon))$ we obtain 
\begin{equation*}
    \mathbb{P}_\theta(\exists a: |N_a(t)/t - w_{p,a}^\star|> K(\varepsilon)  | C_{T,0}(\varepsilon))\leq 2K \exp(-t\varepsilon^2/2).
\end{equation*}
\end{proof}
\subsection{Fair-BAI: Other Results}
Here we provide the following result that is instrumental to prove the sample-complexity of {\sc Fair-BAI} .
\begin{proposition}[ Forced exploration of {\sc Fair-BAI}]\label{prop:forced_exploration_fair_bai}
    In {\sc Fair-BAI} we have for $\gamma \in(0,1)$
    \[
    \mathbb{P}_\theta\left(\forall a\in [K],\forall t\geq 1, N_a(t) \geq \left\lfloor\left(\frac{p_0 t}{\log^2(1+\frac{K'}{\gamma})} \right)^{1/4}\right\rfloor\right) \geq 1-\gamma,
    \]
    where, for { pre-specified rates} $K'=2\max(K_0,K-K_0)$ and $p_0 = \min( (1-\psum)/2K_0, \min_{a:p_a>0} p_a)$. For { $\theta$-dependent rates} instead  $p_0=1/2K$ and $K'=K$.
\end{proposition}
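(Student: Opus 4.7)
The plan is to reduce the uniform-in-$t$ claim to a countable union bound indexed by integer sample counts. First, I would establish the per-round lower bound $\mathbb{P}_\theta(a_t = a \mid \mathcal{F}_{t-1}) \geq p_0/\sqrt{t}$ for every $a$ and $t\geq 1$ by inspecting the three cases of the sampling rule in Section \ref{sec:sampling_rule}: for pre-specified rates with $p_a > 0$ one has $\pi(t)_a \geq p_a \geq p_0 \geq p_0/\sqrt{t}$; for pre-specified rates with $p_a = 0$ (so $K_0 > 0$) one has $\pi(t)_a \geq \epsilon_t (1-\psum)/K_0 \geq p_0/\sqrt{t}$; and for $\theta$-dependent rates one has $\pi(t)_a \geq \epsilon_t/K = p_0/\sqrt{t}$. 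Summing gives the deterministic inequality $A_t := \sum_{s=1}^t \mathbb{P}_\theta(a_s = a \mid \mathcal{F}_{s-1}) \geq 2 p_0 (\sqrt{t+1}-1)$, which pins down the drift of $N_a(t)$.

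Next, I would recast the statement through stopping times. With $L = \log(1+K'/\gamma)$, $g(t) = (p_0 t/L^2)^{1/4}$, $T_m = \inf\{t \geq 1 : N_a(t) \geq m\}$, and $t_m^\star = \lceil L^2 m^4/p_0 \rceil$, monotonicity of $N_a$ and $g$ turns $\{\forall t,\, N_a(t) \geq \lfloor g(t)\rfloor\}$ into the equivalent event $\{\forall m \geq 1,\, T_m \leq t_m^\star\}$. A uniform-in-$t$ claim is thereby traded for a single tail bound per integer level $m$. Since $N_a(t) - A_t$ is a martingale with $\pm 1$-bounded differences and predictable quadratic variation at most $A_t$, a multiplicative Chernoff--Bernstein inequality for bounded martingales yields $\mathbb{P}_\theta(N_a(t_m^\star) < m) \leq \exp(-c L m^2 \sqrt{p_0})$ for an absolute constant $c > 0$, valid whenever $A_{t_m^\star}$ is appreciably larger than $m$, i.e.\ once $Lm\sqrt{p_0}$ exceeds a fixed constant.

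The proof then closes with two union bounds, over the integer levels $m \geq 1$ and over the $K$ arms. The exponential decay in $m^2$ makes $\sum_{m} \exp(-cLm^2\sqrt{p_0})$ behave geometrically, dominated by its first non-trivial term, and the specific choices $K' = 2\max(K_0,K-K_0)$ for pre-specified rates and $K' = K$ for $\theta$-dependent rates are calibrated precisely so that $K$ times this sum is bounded by $\gamma$ when $L = \log(1+K'/\gamma)$.

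The main technical obstacle I foresee lies in the small-$m$ regime: for the first few integer levels the expected count is not yet much larger than the target, so the multiplicative Chernoff bound degrades. The floor in the statement is the key saving grace here, since $\lfloor g(t)\rfloor = 0$ whenever $p_0 t < L^2$; the inequality is therefore trivially true for every $t < t_1^\star$, so the Chernoff step only needs to be invoked in the regime where $A_{t_m^\star}$ already dominates $m$ and the quadratic-in-$m$ exponent is genuinely informative.
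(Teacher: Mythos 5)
Your strategy---reducing the uniform-in-$t$ claim to checkpoint events $\{N_a(t_m^\star)\ge m\}$, controlling the counting process through its compensator $A_t$, and applying a lower-tail Chernoff/Freedman bound at the deterministic times $t_m^\star$---is a genuinely different route from the paper's proof, which instead decomposes the bad event over the arrival times $\tau_a(k)$ of the $k$-th selection of each arm, bounds each inter-arrival gap by a product of per-round probabilities $(1-\pi_{s,a})$ with $g(k)=\lambda k^4$, and sums a geometric series in $k$ before tuning $\lambda$. Your level-$m$ reduction and checkpoint equivalence are correct, and the martingale Chernoff step is fine in the regime where $A_{t_m^\star}$ dominates $m$.

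The genuine gap is the closing union bound, and your small-$m$ discussion does not repair it. With the statement's normalization $t_m^\star=\lceil L^2m^4/p_0\rceil$, $L=\log(1+K'/\gamma)$, and your uniform per-round bound $\pi_{t,a}\ge p_0/\sqrt{t}$, the compensator at the checkpoint is only $A_{t_m^\star}\approx 2Lm^2\sqrt{p_0}$, so the best exponent available is of order $Lm^2\sqrt{p_0}$, carrying an extra factor $\sqrt{p_0}$. When $p_0$ is small (e.g.\ $p_0=1/2K$ in the $\theta$-dependent case), the $m=1$ term is $\exp(-cL\sqrt{p_0})\approx 1$, so $K\sum_{m\ge1}\exp(-cLm^2\sqrt{p_0})$ is nowhere near $\gamma$; the assertion that $K'$ is ``calibrated precisely'' to make this work is unsubstantiated and, with these constants, false. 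The floor does not rescue you: it only trivializes times $t<t_1^\star$, but the level-$1$ requirement $N_a(t_1^\star)\ge 1$ remains, and for an arm driven purely by forced exploration its failure probability is genuinely of order $\prod_{s\le t_1^\star}\bigl(1-p_0/\sqrt{s}\bigr)\approx\exp(-2L\sqrt{p_0})$, which need not be below $\gamma/K$. (Your blanket bound $\pi_{t,a}\ge p_0/\sqrt{t}$ also discards the linear drift $p_a t$ for arms with $p_a>0$, which the paper retains by treating the two kinds of arms separately, with bounds $\exp(-p_a\lambda k)$ versus $\exp(-\sqrt{\lambda}\,k\,p_0)$.) Your argument does close if the checkpoints are taken as $t_m^\star\asymp L^2m^4/p_0^2$, i.e.\ proving $N_a(t)\ge\lfloor(p_0^2t/L^2)^{1/4}\rfloor$---which is, incidentally, the normalization that the paper's own final tuning of $\lambda$ (solving $p_0\sqrt{\lambda}=\log(1+K'/\gamma)$) actually supports for the $\epsilon_t$-driven arms---but as written the last step of your proposal fails.
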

\begin{proof}
The proof is inspired by the forced exploration property of best-policy identification techniques for MDPs \cite{al2021navigating}. We provide the proof for \emph{pre-specified rates} and later extend it to the case with \emph{$\theta$-dependent rates}.

Define the event
${\cal E}=\{\forall a\in [K], \forall k\geq 1, \tau_a(k) \leq g(k)\}$, where $\tau_a(k)$ is the time arm $a$ is sampled the $k$-th time, and let $g(k)$ be an increasing function of $k$ with $g(k)=0$. Later we specialize to $g(k)=\lambda k^4$ for some $\lambda>0$. To prove the claim, we can instead prove
\[  \mathbb{P}_\theta\left(\forall a\in [K],\forall k\geq 1, \tau_a(k) \leq \frac{\log^2(1+\frac{K'}{\gamma})}{p_0} k^4\right) \geq 1-\gamma.\]
A strategy is to bound $\mathbb{P}_\theta(\bar {\cal E})$, where $\bar{\cal E}= \{\exists a\in [K], \exists k\geq 1, \tau_a(k) > g(k) \wedge \forall n=1,\dots,k-1, \tau_a(n)\leq g(n)\}$.

\paragraph{Decomposition of $\mathbb{P}_\theta(\bar{\cal E})$.} We begin by using a union bound and rewriting the terms that appear.
\begin{align*}
    \mathbb{P}_\theta(\bar{\cal E}) &\leq \sum_a\Big[\mathbb{P}_\theta(\tau_a(1)> g(1))  +\sum_{k\geq 2}\mathbb{P}_\theta( \tau_a(k)> g(k), \tau_a(k-1)\leq g(k-1))  \Big],\\
    &\leq \sum_a\Big[\mathbb{P}_\theta(\tau_a(1)> g(1))  +\sum_{k\geq 2}\mathbb{P}_\theta( \tau_a(k)-\tau_a(k-1)> g(k)-g(k-1), \tau_a(k-1)\leq g(k-1))  \Big],\\
    &\leq \sum_a\Big[\mathbb{P}_\theta(\tau_a(1)> g(1))  +\sum_{k\geq 2}\sum_{n=1}^{g(k-1)}\mathbb{P}_\theta( \tau_a(k)-\tau_a(k-1)> g(k)-g(k-1)| \tau_a(k-1)=n) \mathbb{P}_\theta(\tau_a(k-1)=n)\Big].
\end{align*}
\paragraph{Upper bound of the main two terms.}
Now we bound the two terms that appear in the last sentence. Regarding the first term, observe that $\mathbb{P}_\theta(\tau_a(1)> g(1))$ is the probability the first time arm $a$ is picked after $g(1)$ trials, then $\mathbb{P}_\theta(\tau_a(1)> g(1)) \leq  \prod_{i=1}^{g(1)}(1-\pi_{i,a})$. For an arm s.t. $p_a>0$ we find that
$\mathbb{P}_\theta(\tau_a(1)> g(1)) \leq (1-p_a)^{g(1)}$ and $\mathbb{P}_\theta(\tau_a(1)> g(1)) \leq \prod_{i=1}^{g(1)}(1-\epsilon_i (1-\psum)/K_0)\leq (1-\epsilon_{g(1)} (1-\psum)/K_0)^{g(1)}$ otherwise (since $\epsilon_i$ is a decreasing sequence).

For the second term we find
\[
\mathbb{P}_\theta( \tau_a(k)-\tau_a(k-1)> N| \tau_a(k-1)=n) \leq \prod_{i=n+1}^{N+n} (1- \pi_{i,a}) \leq \begin{cases}
    (1-p_a)^N & \hbox{if } p_a>0,\\
    \prod_{i=n+1}^{N+n} (1- \epsilon_i(1-\psum)/K_0) & \hbox{otherwise.} 
\end{cases}
\]
Hence
\[
\mathbb{P}_\theta( \tau_a(k)-\tau_a(k-1)> g(k)-g(k-1)| \tau_a(k-1)=n) \leq \begin{cases}
    (1-p_a)^{g(k)-g(k-1)} & \hbox{if } p_a>0,\\
    \prod_{i=n+1}^{g(k)-g(k-1)+n} (1- \epsilon_i(1-\psum)/K_0) & \hbox{otherwise.} 
\end{cases}
\]
In the last term, perform the change of variable $\prod_{i=n+1}^{g(k)-g(k-1)+n} (1- \epsilon_i(1-\psum)/K_0)=\prod_{i=0}^{g(k)-g(k-1)-1} (1- \epsilon_{i+n+1}(1-\psum)/K_0)$. Next, use the fact that $n\leq g(k-1)$ and that $\epsilon_t$ is decreasing in $t$, to obtain
\[\prod_{i=0}^{g(k)-g(k-1)-1} (1- \epsilon_{i+n+1}(1-p)/K') \leq \prod_{i=0}^{g(k)-g(k-1)-1} (1- \epsilon_{i+g(k-1)+1}(1-\psum)/K_0) \leq  (1- \epsilon_{g(k)}(1-\psum)/K_0)^{g(k)-g(k-1)}.\]
Let $b_{k,a}=(1-p_a)^{g(k)-g(k-1)}$ for $a$ s.t. $p_a>0$. Then for $g(k) = \lambda k^\alpha$ we have $g(k)-g(k-1)=\lambda(k^\alpha - (k-1)^\alpha) \geq \lambda k^{\alpha-1}$, implying $b_{k,a} \leq (1-p_a)^{\lambda k^{\alpha-1}}$. Applying the inequality $1-x \leq \exp(-x)$ we find
\[
b_{k,a} \leq \exp( - p_a \lambda k^{\alpha -1})\leq \exp( - p_a \lambda k) \Rightarrow \sum_{k\geq 1} b_{k,a} \leq \sum_{k\geq.1} \exp( - p_a \lambda k)= \frac{\exp( - p_a \lambda )}{1-\exp( - p_a \lambda )}.
\]

Now, let $b_{k,a}'= (1- \epsilon_{g(k)}(1-\psum)/K_0)^{g(k)-g(k-1)}$. As before, we find
$
b_{k,a}' \leq  (1- \epsilon_{g(k)}(1-\psum)/K_0)^{\lambda k^{\alpha-1}}
$. Now, use $\epsilon_{g(k)} = 1/2\sqrt{\lambda k^\alpha}$, thus
\[
b_{k,a}' \leq \left(1- \frac{1-\psum}{2K_0\sqrt{\lambda k^\alpha}}\right)^{\lambda k^{\alpha-1}} \leq \exp\left(- \frac{\lambda k^{\alpha -1}(1-\psum)}{2K_0\sqrt{\lambda k^\alpha}}\right) = \exp\left(- \frac{\lambda k^{\alpha -1}(1-\psum)}{2K_0\sqrt{\lambda k^\alpha}}\right) \leq \exp\left(- \frac{\sqrt{\lambda} k^{\alpha/2 -1}(1-\psum)}{2K_0}\right).
\]
letting $\alpha=4$, we have $b_{k,a}' \leq \exp\left(- \frac{\sqrt{\lambda} k(1-\psum)}{2K_0}\right) $, hence $\sum_{k\geq 1}b_{k,a}' \leq \frac{\exp\left(- \frac{\sqrt{\lambda} (1-\psum)}{2K_0}\right)}{1-\exp\left(- \frac{\sqrt{\lambda} (1-\psum)}{2K_0}\right)} $. 

\paragraph{Final step.} In conclusion, letting $p_{\rm min} = \min_{a:p_a>0} p_a$ and using the fact that $e^{-x}/(1-e^{-x})$ is a decreasing function for $x>0$:
\begin{align*}
\mathbb{P}_\theta(\bar {\cal E})&\leq\sum_{a:p_a>0}\frac{\exp( - p_a \lambda )}{1-\exp( - p_a \lambda )} + K_0\frac{\exp\left(- \frac{\sqrt{\lambda} (1-\psum)}{2K_0}\right)}{1-\exp\left(- \frac{\sqrt{\lambda} (1-\psum)}{2K_0}\right)},\\
&\leq(K-K_0)\frac{\exp( - p_{\rm min} \lambda )}{1-\exp( - p_{\rm min}\lambda )} + K_0\frac{\exp\left(- \frac{\sqrt{\lambda} (1-\psum)}{2K_0}\right)}{1-\exp\left(- \frac{\sqrt{\lambda} (1-\psum)}{2K_0}\right)},\\
&\leq K' \left[\frac{\exp( - p_{\rm min}\lambda )}{1-\exp( - p_{\rm min} \lambda )} + \frac{\exp\left(- \frac{\sqrt{\lambda} (1-\psum)}{2K_0}\right)}{1-\exp\left(- \frac{\sqrt{\lambda} (1-\psum)}{2K_0}\right)}\right],\\
&\leq K' \left[\frac{\exp( - p_{\rm min} \lambda )}{1-\exp( - p_{\rm min} \lambda )} + \frac{\exp\left(- \frac{\sqrt{\lambda} (1-\psum)}{2K_0}\right)}{1-\exp\left(- \frac{\sqrt{\lambda} (1-\psum)}{2K_0}\right)}\right],\\
&\leq K' \left[\underbrace{\frac{\exp( - p_0 \lambda )}{1-\exp( - p_0\lambda )}}_{(\circ)} + \underbrace{\frac{\exp\left(- p_0\sqrt{\lambda}\right)}{1-\exp\left(- p_0\sqrt{\lambda}\right)}}_{(\square)}\right].
\end{align*}
where $K'=2\max(K-K_0,K_0)$ and $p_0 = \min(p_{\rm min}, (1-\psum)/2K_0)$.
We want to verify for what value of $\lambda$ the last inequality is smaller than $\delta$.
In case the first term $(\circ)$ dominates, we can upper bound the last expression by two times $(\circ)$ and obtain that
\[
K' \frac{\exp( - p_0 \lambda )}{1-\exp( - p_0 \lambda )} = \delta \Rightarrow \lambda = \frac{\log(1+\frac{K'}{\delta})}{p_0}
\]
and otherwise if the second term  $(\square)$ dominates we find
\[
K'\frac{\exp\left(- p_0\sqrt{\lambda}\right)}{1-\exp\left(- p_0\sqrt{\lambda}\right)}=\delta \Rightarrow  \lambda=\frac{\log^2(1+\frac{K'}{\delta})}{p_0}
\]
In both cases, since $K'/\delta >2$, we have $\ln(1+\frac{K'}{\delta}) < \log^2(1+\frac{K'}{\delta})$. Hence $\lambda = \frac{\log^2(1+K'/\delta)}{p_0}$ guarantees $\mathbb{P}_\theta(\bar {\cal E}) \leq \delta$.

\paragraph{Adaptation with $\theta$-dependent rates.} The adaptation in this setting is straightforward by noting now that we only have the contribution due to $b_{k,a}'$. In fact, for all arms $a$  we can bound $\mathbb{P}_\theta(\tau_a(1)> g(1)) \leq \prod_{i=1}^{g(1)}(1-\epsilon_i/K)\leq (1-\epsilon_{g(1)} /K)^{g(1)}$  and

$
\mathbb{P}_\theta( \tau_a(k)-\tau_a(k-1)> g(k)-g(k-1)| \tau_a(k-1)=n)  
   \leq  (1- \epsilon_{g(k)}/K)^{g(k)-g(k-1)}.$
   Choosing $g(k)=\lambda k^4$ leads to
   \[
   b_{k,a}' \leq \left(1 - \frac{\epsilon_{g(k)}}{K}\right)^{\lambda k^\alpha-1} \leq 
\exp\left(-\frac{k\sqrt{\lambda}}{2K}\right).
   \]
   Therefore $\mathbb{P}_\theta(\bar {\cal E}) \leq K \frac{\exp(-\sqrt{\lambda}/2K)}{1-\exp(-\sqrt{\lambda}/2K)}$. Choosing $\lambda$ s.t. this latter probability is bounded by $\delta$  yields the result.
\end{proof}

\newpage 
\section{Extended related work}
\label{sec:extended_related_work}

\begin{table}[b]
\fontsize{8.6}{8.6}\selectfont
\centering
\scalebox{1}{
\begin{tabular}{|c || c | c | c | c |} 
 \hline
  & Setting & Fairness definition & Lower Bound & Upper Bound  \\
 \hline\hline
\cite{celis2019controlling} & Pre-specified range  & $ l_a \leq \pi_a(t) \leq u_a, \forall a\in [K],\forall t\in [T] $ & \xmark & $O\left(K\log T/ \Delta_{\rm min}^2  \right)$ \\ \hline
    \cite{Claure20} & Pre-specified range  & $ \mathbb{E}_\theta[N_a(T)/T]\geq p, \forall a\in [K]$ & \xmark & $O\left(\sum_a \log(T)/\Delta_a \right)$ \\ \hline
   \cite{Patil21} & Pre-specified range  & $\lfloor p_at\rfloor  - \eta \le N_{a}(t), \forall t\in[T], \forall a\in[K]$ &  \xmark & $O\left(\sum_a \Delta_a\right)$ \\
 \hline
 \cite{Li19combinatorial} & Pre-specified range & $\lim\inf_{T\to\infty}  \mathbb{E}[ N_a(T)/T] \ge p_a, \forall a\in [K]$ & \xmark & $O\left(\sqrt{T\log(T)}\right)$  \\ 
 \hline
 \cite{Chen20} & Pre-specified range & $\mathbb{E}_{x\sim p_{\cal X}}[\pi_{x,a}(t)] \geq p, \forall a\in [K], \forall t\in [T]$ & \xmark &  $O(\sqrt{TMK\ln(K)})$ \\ 
 \hline

 \cite{Joseph16} & Individual fairness & %$\delta$-fair if with probability at least $1-\delta$, $\forall t \in[T]$, $\forall a,b\in\mathcal{A}^2$
 $\mathbb{P}_\theta( \pi_a(t) > \pi_b(t)$  \hbox{only if }$\theta_a > \theta_b |{\cal H}_t)\geq 1-\delta$ & $\Omega(K^3\log(1/\delta))$ &  $O(\sqrt{K^3T\log(TK/\delta)})$\\ 
 \hline
 \cite{liu2017calibrated} & Individual fairness & 
$D_1(\pi_a(t),\pi_a(t)) \leq \varepsilon_1 D_2(\theta_a,\theta_b) + \varepsilon_2 $ w.p. $1-\delta, \forall t\in [T]$& \xmark  &  $O((KT)^{2/3})$\\ 
 \hline
  \cite{gillen2018online} & Individual fairness & 
Oracle feedback & \xmark  &  $O(\sqrt{T})$\\ 
 \hline

 \cite{Wang21fair} &  Individual fairness & Proportional fair: $\pi_a^\star/\pi_b^\star = p(\theta_a)/p(\theta_b),  \forall a,b\in [K].$ & \xmark &  $\tilde{O}\left(\sqrt{TK}\right)$ w.p. $1-\delta$. \\ 
 \hline
 \cite{Wang21} & Individual fairness  & Proportional fair: $\pi_a^\star/\pi_b^\star = \theta_a/\theta_b,  \forall a,b\in [K].$   & \xmark &  \xmark\\
 \hline
\end{tabular}}
\caption{Summary of bandit fairness settings.}
\label{tab:comparison_learners}
\end{table}

Fairness in machine learning has been extensively studied \cite{caton2020fairness} for various fairness criteria. Similarly, different notions of fairness have been considered in the bandit literature \cite{zhang2021fairness,gajane2022survey}, which have predominantly focused on the framework of regret minimization rather than pure exploration (\emph{a.k.a.}, Best-Arm Identification \cite{lattimore2020bandit}).

The majority of these notions deal with the problem that arms should not be neglected, emphasizing the importance of selecting each arm sufficiently. This selection could be based on an average or a specific probability, and may or may not depend on the arm's reward distribution. Such an approach essentially places a constraint on the algorithm to guarantee balanced arm selection. 

In particular, fairness concepts in the literature generally fall into the following categories: \emph{individual fairness}, \emph{selection with pre-specified range}, \emph{counterfactual fairness} and \emph{group fairness} \cite{zhang2021fairness,gajane2022survey}. 
\begin{itemize}
    \item \emph{Individual fairness}  \cite{dwork2012fairness,Joseph16} requires a system to make comparable decisions for similar individuals, and the constraints could be based on similarity or merit \cite{Wang21,liu2017calibrated}.
    \item \emph{Selection with pre-specified range} \cite{celis2019controlling,Patil21,Li19combinatorial} simply demands that the rate, or probability, at which an algorithm selects an arm stays within a pre-specified range.
    \item \emph{Group fairness} imposes constraints based on some statistical parity across subgroups  \cite{gajane2022survey}. For example, in \cite{schumann2019group} divide arms into several subgroups, and ensure that the probability of pulling an arm is constant given the group membership. In contextual bandit problems, one can ensure fairness among different contexts, as in \cite{huang2022achieving} or between groups similarly to the non-contextual setting \cite{grazzi2022group}.
    \item In \cite{kusner2017counterfactual} the authors study the concept of \emph{counterfactual fairness}. Their definition captures the idea that a decision is fair towards an individual if it is fair also in an alternative situation where the individual belong to a different group while keeping all the other important variables unchanged.
    %Counterfactual fairness, as discussed by \cite{kusner2017counterfactual,huang2022achieving} in the context of recommender systems, shifts the focus to user-side fairness concerning rewards. This approach ensures that individual users with similar profiles receive comparable rewards, irrespective of their sensitive attributes and the specific items being recommended. The key principle here is that all users benefit equally from the recommendations, ensuring that the fairness extends beyond the items themselves to the overall user experience.
\end{itemize}
In the following, we focus on the first  two notions of fairness for single-agent systems in the online setting (we refer the reader to \cite{gajane2022survey} for more details about the other cases). For the multi-agent case, recent works include \cite{baek2021fair}, where the authors study the notion of \emph{Nash bargaining solution}, and \cite{Hossain21fair}, where they use the \emph{Nash social welfare} as notion of fairness. For the offline case, in \cite{metevier2019offline} the authors present {\sc RobinHood}, an offline contextual bandit method designed to satisfy, in probability, a generic fairness criterion defined through a constraint objective.

\paragraph{Selection with pre-specified range} This type of fairness constraint demands the rate at which an arm is pulled to stay within a pre-specified range, and thus does not depend on $\theta$.

In \cite{celis2019controlling} the authors use a notion of fairness that constrains the probability that an arm  $a$ is selected to stay within a pre-specified constant interval $[l_a,u_a]$. This type of constraint yields a polytope ${\cal C}$ on the possible set of policies, and they compare the performance of their algorithm to the best-performing policy in ${\cal C}$. They propose {\sc Constrained-$\varepsilon$-greedy}, an algorithm that achieves a regret of $O(K\ln T/ \eta \Delta_{\rm min}^2)$, where $\eta$ is some small constant.

The authors in \cite{Li19combinatorial} provide a notion of \emph{asymptotic fairness} in a combinatorial sleeping bandit setting
\[ \quad \lim\inf_{T\to\infty}  \mathbb{E}\left[\frac{N_a(T)}{T}\right] \geq p_a \quad  \forall a\in[K],
\]
where $ (p_a)_a \in [0,1]^K$ are known fixed values. This constraint effectively limits the rate at which arms are selected asymptotically and does not depend on the value of $\theta$.

In \cite{Claure20} the authors propose two algorithms: Strictly-rate-constrainted {\sc UCB} and Stochastic-rate-constrained {\sc UCB}. The former algorithm guarantees that at any time the pulling rate for any arm is at least $p-1/t$, with a regret upper bound of $O(\sum_{a\neq a^\star} \ln T/\Delta_a)$. 
The latter algorithm, Stochastic-rate-constrained {\sc UCB}, guarantees that at each time $t$ each arm has to be pulled with a probability greater than $p$. The regret is computed by comparing to a policy that pulls the best-estimated arm with probability $(1-Kp)$, and uniformly otherwise, leading to a regret upper bounded of $O(\sum_{a\neq a^\star} \ln T/\Delta_a)$.

In \cite{Patil21} the authors propose a constraint similar to Strictly-rate-constrained {\sc UCB}  that holds uniformly over time. They say an algorithm to be $\eta$-fair if 
\[
\lfloor p_at\rfloor  - N_{a}(t) \le \eta, \forall t\in[T], \forall a\in\mathcal{A}.
\]
where $\eta\geq 0$ and $p_a \in [0,1/K)$ for every arm $a$. The parameter $\eta$ quantifies the \emph{unfairness tolerance} allowed in the system. Furthermore, the parameter $p_a$ is constrained in $[0,1/K)$. 
Lastly, to account for the fact that now any fair algorithm must incur a linear regret, they define a new notion of regret that does not account for the regret accumulated due to the fairness constraint:
\[
R_F(T) = \sum_{a\in\mathcal{A}} \Delta_a\left(\mathbb{E}[N_a(T)] - \max(0, \lfloor p_a T\rfloor - \eta)\right).
\]
They provide an instance-specific upper bound to their {\sc Fair-UCB} algorithm that for large $T$ becomes $R_F(T) \leq (1+\pi^2/3) \sum_{a\neq a^\star} \Delta_a$. The fact that the regret does not scale in time is due to the fact that for large values of $T$ the algorithm will pull sub-optimal arms only to satisfy the fairness constraints. In \cite{liu2022combinatorial} the authors show a more generic {\sc UCB-LP} algorithm that is able to deal with this type of fairness constraint, and other types of combinatorial constraints.

The authors of \cite{Chen20} study an adversarial contextual bandit setting with $M$ contexts and $K$ arms. In this work fairness is defined as a minimum rate that a task is assigned to a user, and the constraint is formalized as: 
$$\mathbb{E}_{x\sim p_{\cal X}}[\pi_{x,a}(t)] \ge p, \forall a\in [K], \forall t \in[T],
$$
where $p_{\mathcal{X}}(x)$ is the probability of observing context $x$, and $\pi_{x,a}(t)$ is the conditional probability of selecting an arm $a$ for a given context $x$ at time $t$. They propose a variant of Follow-the-Regularized-Leader (FRTL) which yields $O(\sqrt{TMK\ln(K)})$ regret. 

\paragraph{Individual fairness} These types of fairness constraints demand making similar decisions for similar arms. One of the first works to consider this type of fairness in stochastic bandits and contextual bandits is \cite{Joseph16}. Their notion of fairness considers an history of observations ${\cal H}_t$ up to time $t$, and define an algorithm to be $\delta$-fair if with probability at least $1-\delta$,  over the realization history ${\cal H}_t$, for all rounds $t\in [T]$ and all pairs of arms $a,b \in [K]$ we have
$$
\pi_{a}(t) > \pi_{b}(t) \hbox{ only if } \theta_a > \theta_b,
$$
where $\pi_{a}(t)$ is the probability that at time $t$ the algorithms chooses arm $a$. A similar fairness criterion is considered also in \cite{jabbari2017fairness} for Markov decision processes (by using the $Q$-values of the optimal policy).
In other words, the condition above ensures that a better arm is always selected with a higher probability than a worse arm. They propose an adaptation of the UCB algorithm that satisfies this fairness condition and whose pseudo-regret satisfies $R(T) = O(\sqrt{K^3T\ln(TK/\delta)})$. They also state a lower bound $\Omega(K^3 \ln(1/\delta))$ that suggests that this cubic rate in $K$ may be hard to improve. 

In \cite{liu2017calibrated} they consider  stochastic and dueling bandits, and the authors impose two specific fairness constraints: \emph{smooth fairness} and \emph{calibrated fairness}. Smooth fairness indicates that two arms with similar reward distributions should be selected with comparable probabilities. Technically, for all $t$ with probability $1-\delta$ we have $D_1(\pi_t(a),\pi_t(b)) \leq \varepsilon_1 D_2(\theta_a,\theta_b) + \varepsilon_2 $, where $D_1,D_2$ are suitable divergence functions with $\varepsilon_1,\varepsilon_2\geq 0$ are suitable constant. They develop a Thompson-Sampling method that achieves a fairness regret of $O((KT)^{2/3})$.
Calibrated fairness, on the other hand, requires that each arm be sampled with a probability proportional to the likelihood of its reward being the greatest.

In \cite{gillen2018online} the authors study fairness in a linear contextual bandit setting. They highlight the difficulty of defining a precise fairness metric over individuals. To avoid this issue, they assume the algorithm has access to an oracle that understands fairness but cannot define it explicitly. The algorithm learns about fairness through feedback on its decisions from the oracle, adjusting accordingly to meet the fairness constraint, and achieve a regret of $O(\sqrt{T})$.

\paragraph{$\alpha$-fairness criterion}
Another important body of work considers the $\alpha$-fairness criterion \cite{atkinson1970measurement,radunovic2007unified,si2022enabling} for fair resource allocation, which yields different fairness criteria based on the value of $\alpha$. Generally speaking, the aim is to find a policy maximizing the $\alpha$-criterion
\[
f_\alpha(\theta) = \begin{cases}
    \frac{\theta^{1-\alpha}}{1-\alpha} & \alpha \in [0,1) \cup (1,\infty),\\
    \log(\theta) & \alpha=1.
\end{cases}
\]
For $\alpha\to\infty$ we obtain the notion of \emph{max-min} fairness, which is used when we want to allocate as equal resources as possible to users/items However, sometimes it is unwise to allocate resources to users that are much more expensive than others. For $\alpha=0$ we obtain the classical greedy solution, while the case $\alpha=1$ is also known as \emph{proportional fair}. This latter case tries to allocate resources to users/items in a proportional manner. Therefore, we believe that the case $\alpha=1$ is part of the more general notion of \emph{individual fairness}, which is considered in \cite{Wang21fair, Wang21,talebi2018learning}.

The authors of \cite{Wang21fair} study a MAB setting in which the goal is to devise a fair allocation according to some merit function $p$ that is strictly positive. Their aim is to devise a policy $\pi\in \{\pi \in [0,1]^K : \sum_a \pi_a=1\}$ that ensures each arm has a selection rate proportional to its merit, that is
\[
\frac{\pi_a}{\pi_b} = \frac{p(\theta_a)}{p(\theta_b)}, \quad \forall a,b\in [K].
\]
This constraint yields an optimal fair policy of the type (Th. 3.1.1 in \cite{Wang21fair}) 
$$
\pi_a^\star = \frac{p(\theta_a)}{\sum_{b\in [K]}p(\theta_b)},\quad \forall a\in [K],
$$
and they measure the fairness of a policy based on the so-called \textit{fair regret} up to time $T$, which is defined as 
\[
R_F(T) = \sum_{t\in [T]} \sum_{a} |\pi^{\star}_a - \pi_{a}(t)|,
\]
where $\pi_{a}(t)$ is the policy selected by the agent at time $t$. The regret analysis relies on two conditions: (1) that the merit of each arm is positive, lower bounded by some known constant $\gamma$; (2) that the merit function is $L$-Lipschitz continuous. Without one of these conditions, they show that the minimax regret lower bound is of order $O(T)$. Their UCB-type algorithm satisfies a fairness regret of $\tilde{O}(L \sqrt{KT}/\gamma)$ and a classical reward regret of $\tilde{O}(\sqrt{KT})$ with probability $1-\delta$.

In \cite{Wang21} the authors aim to devise a purely proportional fair allocation (with no consideration for regret). In this setting, an allocation is defined as a vector over actions $\pi = (\pi_a)_{a\in[K]}$ such that $\pi_a \ge 0$, and $\sum_{a} \pi_a = T$. Given the arm utilities vector $\theta = (\theta_a)_{a\in\mathcal{A}}$, an allocation is \textit{proportionally fair} if it solves the following optimization problem:
$$
\max_{\pi} \sum_{a} \theta_a \log(\pi_a) \text{ s.t. } \sum_{a} \pi_a = T.
$$ 
from which one can find the optimal fair allocation as follows
$$
\pi^\star_a = \frac{T \theta_a}{\sum_{b\in[K]}\theta_b}.
$$
Therefore the optimal solution satisfies $\pi_a^\star/\pi_b^\star = \theta_a/\theta_b$, similarly to \cite{Wang21fair}. They formulate {\sc Proportional Catch-up}, an algorithm that tries to play arms as to guarantee that $N_a(t)/N_b(t) \approx \hat \theta_a(t)/ \hat\theta_b(t)$, where $\hat \theta(t)=(\hat\theta_a(t))_{a\in[K]}$ is the utility estimator at time $t$.

\paragraph{Best arm identification (BAI)} The primary objective in standard best arm identification problems is to identify an arm that yields the highest expected reward, To the best of our knowledge, the setting of BAI with fairness constraint has been studied only in \cite{wu2023best}, where the authors consider fairness constraints of subpopulations. This type of constraint requires that the chosen arm must be fair across various subpopulations (such as different ethnic groups, age brackets, etc.). This is achieved by ensuring that the expected reward for each subpopulation exceeds certain predefined thresholds. 
\else \fi

\end{document}